\DeclarePairedDelimiterX{\expectarg}[1]{[}{]}{%
  \ifnum\currentgrouptype=16 \else\begingroup\fi
  \activatebar#1
  \ifnum\currentgrouptype=16 \else\endgroup\fi
}
\newcommand{\innermid}{\nonscript\;\delimsize\vert\nonscript\;}
\newcommand{\activatebar}{%
  \begingroup\lccode`\~=`\|
  \lowercase{\endgroup\let~}\innermid 
  \mathcode`|=\string"8000
}
\newtheorem{theorem}{Theorem}
\newtheorem{corollary}{Corollary}
\newcommand{\argmax}{\mathop{\rm arg~max}\limits}
\title{Learning Robust Options \\ by Conditional Value at Risk Optimization}
\author[1,2,3]{\textbf{ Takuya Hiraoka }}
\author[2]{\textbf{ Takahisa Imagawa }}
\author[1,2,3]{\textbf{ Tatsuya Mori }}
\author[1,2]{\textbf{ Takashi Onishi }}
\author[2,4]{\textbf{ Yoshimasa Tsuruoka }} 
\affil[1]{NEC Corporation}
\affil[2]{National Institute of Advanced Industrial Science and Technology}
\affil[3]{RIKEN Center for Advanced Intelligence Project}
\affil[4]{The University of Tokyo}
\affil[ ]{\texttt {\{t-hiraoka@ce, tatsuya-mori@bu, t-onishi@bq\}.jp.nec.com}}
\affil[ ]{\texttt {imagawa.t@aist.go.jp, tsuruoka@logos.t.u-tokyo.ac.jp}}
\begin{document}

\maketitle

\begin{abstract}
Options are generally learned by using an inaccurate environment model (or simulator), which contains uncertain model parameters. 
While there are several methods to learn options that are robust against the uncertainty of model parameters, these methods only consider either the worst case or the average (ordinary) case for learning options. 
This limited consideration of the cases often produces options that do not work well in the unconsidered case. 
In this paper, we propose a conditional value at risk (CVaR)-based method to learn options that work well in both the average and worst cases. 
We extend the CVaR-based policy gradient method proposed by Chow and Ghavamzadeh (2014) to deal with robust Markov decision processes and then apply the extended method to learning robust options. 
We conduct experiments to evaluate our method in multi-joint robot control tasks (HopperIceBlock, Half-Cheetah, and Walker2D). 
Experimental results show that our method produces options that 1) give better worst-case performance than the options learned only to minimize the average-case loss, and 2) give better average-case performance than the options learned only to minimize the worst-case loss. 
\end{abstract}

\section{Introduction}\label{sec:intro}
In the reinforcement learning context, an \textit{Option} means a temporally extended sequence of actions~\cite{sutton1999between}, 
and is regarded as useful for many purposes, such as speeding up learning,  transferring skills across domains, and solving long-term planning problems. 
Because of its usefulness, the methods for discovering (learning) options have been actively studied (e.g.,~\cite{Bacon2017TheOA,konidaris2009skill,kulkarni2016hierarchical,mcgovern2001automatic,Silver2012CompositionalPU,stolle2002learning}). 

Learning options successfully in practical domains requires a large amount of training data, but collecting data from a real-world environment is often prohibitively expensive~\cite{james2018sim}. 
In such cases, environment models (i.e., simulators) are used for learning options instead, but such models usually contain uncertain parameters (i.e., a reality gap) since they are constructed based on insufficient knowledge of a real environment. 
Options learned with an inaccurate model can lead to a significant degradation of the performance of the agent when deployed in the real environment~\cite{Mankowitz2018LearningRO}. 
This problem is a severe obstacle to applying the options framework to practical tasks in the real world, and has driven the need for methods that can learn robust options against inaccuracy of models. 

Some previous work has addressed the problem of learning options that are robust against inaccuracy of models. 
Mankowitz et al.~\cite{Mankowitz2018LearningRO} proposed a method to learn options that minimize the expected loss (or maximize an expected return) on the environment model with the worst-case model parameter values, which are selected so that the expected loss is maximized. 
Frans et al.~\cite{frans2018meta} proposed a method to learn options that minimize the expected loss on the environment model with the average-case model parameter values. 

However, these methods only consider either the worst-case model parameter values or the average-case model parameter values when learning options, and this causes learned options to work poorly in an environment with the unconsidered model parameter values. 
Mankowitz's~\cite{Mankowitz2018LearningRO} method produces options that are overly conservative in the environment with an average-case model parameter value~\cite{derman2018soft}. 
In contrast, Frans's method~\cite{frans2018meta} produces options that can cause a catastrophic failure in an environment with the worst-case model parameter value. 

Furthermore, Mankowitz's~\cite{Mankowitz2018LearningRO} method does not use a distribution of the model parameters for learning options. 
Generally, the distribution can be built on the basis of prior knowledge. 
If the distribution can be used, as in Derman et al.~\cite{derman2018soft} and Rajeswaran et al.~\cite{Rajeswaran2016EPOptLR}, one can use it to adjust the importance of the model parameter values in learning options. 
However, Mankowitz's method does not consider the distribution and always produces the policy to minimize the loss in the worst case model parameter value. 
Therefore, even if the worst case is extremely unlikely to happen, the learned policy is adapted to the case and results in being overly conservative. 

To mitigate the aforementioned problems, we propose a conditional value at risk (CVaR)-based method to learn robust options optimized for the expected loss in both the average and worst cases. 
In our method, an option is learned so that it 1) makes the expected loss lower than a given threshold in the worst case and also 2) decreases the expected loss in the average case as much as possible. 
Since our method considers both the worst and average cases when learning options, it can mitigate the aforementioned problems with the previous methods for option learning. 
In addition, unlike Mankowitz's~\cite{Mankowitz2018LearningRO} method, our method evaluates the expected loss in the worst case on the basis of CVaR (see Section \ref{sec:propose}). 
This enables us to reflect the model parameter distribution in learning options, and thus prevents the learned options from overfitting an environment with an extremely rare worst-case parameter value. 

Our main contributions are as follows: 
\textbf{(1)} We introduce CVaR optimization to learning robust options and illustrate its effectiveness (see Section \ref{sec:propose}, Section \ref{sec:experiment}, and Appendices). 
Although some previous works have proposed CVaR-based reinforcement learning methods (e.g., \cite{chow2014algorithms,Rajeswaran2016EPOptLR,tamar2015optimizing}), their methods have been applied to learning robust ``flat'' policies, and option learning has not been discussed. 
\textbf{(2)} We extend Chow and Ghavamzadeh's CVaR-based policy gradient method~\cite{chow2014algorithms} so that it can deal with robust Markov decision processes (robust MDPs) and option learning. 
Their method is for ordinary MDPs and does not take the uncertainty of model parameters into account. 
We extend their method to robust MDPs to deal with the uncertain model parameters (see Section \ref{sec:propose}). 
Further, we apply the extended method to learn robust options. 
For this application, we derive the option policy gradient theorem~\cite{Bacon2017TheOA} to minimise soft robust loss~\cite{derman2018soft} (see Appendices). 
\textbf{(3)} We evaluate our robust option learning methods in several nontrivial continuous control tasks (see Section \ref{sec:experiment} and Appendices). 
Robust option learning methods have not been thoroughly evaluated on such tasks. 
Mankowitz et al.~\cite{Mankowitz2018LearningRO} evaluated their methods only on simple control tasks (Acrobot and CartPole with discrete action spaces). 
Our research is a first attempt to evaluate robust option learning methods on more complex tasks (multi-joint robot control tasks) where state--action spaces are high dimensional and continuous. 

This paper is organized as follows. First we define our problem and introduce preliminaries (Section \ref{sec:problem}), and then propose our method by extending Chow and Ghavamzadeh's method (Section \ref{sec:propose}).
We evaluate our method on multi-joint robot control tasks (Section \ref{sec:experiment}). 
We also discuss related work (Section \ref{sec:research}).  
Finally, we conclude our research. 

\section{Problem Description}\label{sec:problem}
We consider a robust MDP~\cite{wiesemann2013robust}, which is tuple $\langle S, A, \mathbb{C}, \gamma, P, T_p, \mathbb{P}_{0} \rangle$ where $S$, $A$, $\mathbb{C}$, $\gamma$, and $P$ are states, actions, a cost function, a discount factor, and an ambiguity set, respectively; $T_p$ and $\mathbb{P}_{0}$ are a state transition function parameterized by $p \in P$, and an initial state distribution, respectively. 
The transition probability is used in the form of $s_{t+1} \sim T_p(s_t, a_t)$, where $s_{t+1}$ is a random variable. Similarly, the initial state distribution is used in the form of $s_{0} \sim \mathbb{P}_{0}$. 
The cost function is used in the form of $c_{t+1} \sim \mathbb{C}(s_t, a_t)$, where $c_{t+1}$ is a random variable. 
Here, the sum of the costs discounted by $\gamma$ is called a loss $\mathcal{C} = \sum_{t=0}^{T} \gamma^{t} c_{t}$. 

As with the standard robust reinforcement learning settings~\cite{derman2018soft,frans2018meta,Rajeswaran2016EPOptLR}, $p$ is generated by a model parameter distribution $\mathbb{P}(p)$ that captures our subjective belief about the parameter values of a real environment. 
By using the distribution, we define a \textit{soft robust loss}~\cite{derman2018soft}: 
\begin{eqnarray}
\mathbb{E}_{\mathcal{C}, p} \left[ \mathcal{C} \right] = \sum_{p}\mathbb{P}(p)\mathbb{E}_{\mathcal{C}} \left[ \mathcal{C} \mid p \right] \label{eq:softrobust}, 
\end{eqnarray} where $\mathbb{E}_{\mathcal{C}}\left[ \left. \mathcal{C} ~\right\vert~ p \right]$ is the expected loss on a parameterized MDP $\langle S, A, \mathbb{C}, \gamma, T_p, \mathbb{P}_{0} \rangle$, in which the transition probability is parameterized by $p$. 

As with Derman et al.~\cite{derman2018soft} and Xu and Mannor~\cite{xu2010distributionally}, we make the rectangularity assumption on $P$ and $\mathbb{P}(p)$. 
That is, $P$ is assumed to be structured as a Cartesian product $\bigotimes_{s \in S}P_s$, and $\mathbb{P}$ is also assumed to be a Cartesian product $\bigotimes_{s \in S}\mathbb{P}_{s}(p_{s} \in P_{s})$. These assumptions enable us to define a model parameter distribution independently for each state. 

We also assume that the learning agent interacts with an environment on the basis of its policy, which takes the form of the call-and-return option~\cite{stolle2002learning}. 
Option $\omega \in \Omega$ is a temporally extended action and represented as a tuple $\langle I_\omega, \pi_\omega, \beta_\omega \rangle$, in which $I_\omega \subseteq S$ is an initiation set, $\pi_\omega$ is an intra-option policy, and $\beta_\omega : S \rightarrow [0, 1]$ is a termination function. 
In the call-and-return option, an agent picks option $\omega$ in accordance with its policy over options $\pi_\Omega$, and follows the intra-option policy $\pi_\omega$ until termination (as dictated by $\beta_\omega$), at which point this procedure is repeated. 
Here $\pi_\Omega$, $\pi_\omega$, and $\beta_\omega$ are parameterized by $\theta_{\pi_\Omega}$, $\theta_{\pi_\omega}$, and $\theta_{\beta_\omega}$, respectively. 

Our objective is to optimize parameters $\theta = (\theta_{\pi_\Omega},  \theta_{\pi_\omega}, \theta_{\beta_\omega})$~\footnote{For simplicity, we assume that the parameters $\theta_{\pi_\Omega}, \theta_{\pi_\omega}, \theta_{\beta_\omega}$ are scalar variables. All the discussion in this paper can be extended to the case of multi-dimensional variables by making the following changes: 1) define $\mathbf{\theta}$ as the concatenation of the multidimensional extensions of the parameters, and 2) replace all partial derivatives with respect to scalar parameters with vector derivatives with respect to the multidimensional parameters.} so that the learned options work well in both the average and worst cases. 
An optimization criterion will be described in the next section. 

\section{Option Critic with the CVaR Constraint}\label{sec:propose}
To achieve our objective, we need a method to produce the option policies with parameters $\theta$ that work well in both the average and worst cases. 
Chow and Ghavamzadeh~\cite{chow2014algorithms} proposed a CVaR-based policy gradient method to find such parameters in ordinary MDPs. 
In this section, we extend their policy gradient methods to robust MDPs, and then apply the extended methods to option learning. 

First, we define the optimization objective: 
\begin{eqnarray}
\min_{\theta} \mathbb{E}_{\mathcal{C}, p}\left[ \mathcal{C} \right] ~~~ &s.t.& ~~ 
\mathbb{E}_{\mathcal{C}, p} \left[\mathcal{C} ~\left\vert~ \mathcal{C} \geq \text{VaR}_{\epsilon}\left(\mathcal{C}\right) \right. \right] \leq \zeta \label{exp:obj14oc}, \\
\text{VaR}_{\epsilon}\left(\mathcal{C}\right) &=& \max \left\{ z \in \mathbb{R} \mid \mathbb{P}(\mathcal{C} \geq z ) \geq \epsilon \right\} \label{exp:var}, 
\end{eqnarray}
where $\zeta$ is the loss tolerance and $\text{VaR}_{\epsilon}$ is the upper $\epsilon$ percentile (also known as an upper-tail value at risk). 
The optimization term represents the soft robust loss (Eq.~\ref{eq:softrobust}). 
The expectation term in the constraint is CVaR, which is the expected value exceeding the value at risk (i.e., CVaR evaluates the expected loss in the worst case). 
The constraint requires that CVaR must be equal to or less than $\zeta$. 
Eq.~\ref{exp:obj14oc} is an extension of the optimization objective of  Chow and Ghavamzadeh~\cite{chow2014algorithms} to soft robust loss (Eq.~\ref{eq:softrobust}), and thus it uses the model parameter distribution for adjusting the importance of the expected loss on each parameterized MDP. 
In the following part of this section, we derive an algorithm for robust option policy learning to minimize Eq.~\ref{exp:obj14oc} similarly to Chow and Ghavamzadeh~\cite{chow2014algorithms}. 

By Theorem 16 in \cite{rockafellar2002conditional}, Eq.~\ref{exp:obj14oc} can be shown to be equivalent to the following objective: 
\begin{eqnarray}
     \min_{\theta, v\in\mathbb{R}} \mathbb{E}_{\mathcal{C}, p} \left[ \mathcal{C} \right] 
     ~~~ s.t. ~~  v + \frac{1}{\epsilon}\mathbb{E}_{\mathcal{C}, p} \left[\max\left(0, \mathcal{C} - v\right)\right] \leq \zeta, \label{exp:obj24oc} 
\end{eqnarray}

To solve Eq.~\ref{exp:obj24oc}, we use the Lagrangian relaxation method~\cite{bertsekas1999nonlinear} to convert it into the following unconstrained problem: 
\begin{eqnarray}
    \max_{\lambda\geq0}\min_{\theta, v} L(\theta, v,\lambda) ~~~s.t.~~~ L\left(\theta, v,\lambda\right) = \begin{matrix} \mathbb{E}_{\mathcal{C}, p} \left[ \mathcal{C} \right] + \lambda \left( v + \frac{1}{\epsilon}\mathbb{E}_{\mathcal{C}, p}\left[\max(0, \mathcal{C} - v)\right] - \zeta \right) \end{matrix}, \label{exp:lagl}
\end{eqnarray}
where $\lambda$ is the Lagrange multiplier. 
The goal here is to find a saddle point of $L\left(\theta, v, \lambda\right)$, i.e., a point $\left(\theta^*. v^*, \lambda^*\right)$ that satisfies $L\left(\theta, v, \lambda^*\right) \geq L\left(\theta^*, v^*, \lambda^*\right) \geq L\left(\theta^*, v^*, \lambda\right), \forall{\theta, v}, \forall{\lambda \leq 0}$. 
This is achieved by descending in $\theta$ and $v$ and ascending in $\lambda$ using the gradients of $L(\theta, v, \lambda)$:
\begin{eqnarray}
     \nabla_\theta L(\theta, v, \lambda) &=& \nabla_\theta \mathbb{E}_{\mathcal{C}, p} \left[ \mathcal{C} \right] + \frac{\lambda}{\epsilon}\nabla_\theta\mathbb{E}_{\mathcal{C}, p} \left[\max(0, \mathcal{C} - v)\right] \label{eq:grad_theta}, \\
     \frac{\partial L(\theta, v, \lambda)}{\partial v} &=& \lambda \left( 1 + \frac{1}{\epsilon} \frac{\partial \mathbb{E}_{\mathcal{C}, p} \left[\max(0, \mathcal{C} - v)\right]}{\partial v} \right) \label{eq:grad_y}, \\
     \frac{\partial L(\theta, v, \lambda)}{\partial \lambda} &=& v + \frac{1}{\epsilon}\mathbb{E}_{\mathcal{C}, p} \left[\max(0,  \mathcal{C} - v)\right] - \zeta \label{eq:grad_lambda}. 
\end{eqnarray}

\subsection{Gradient with respect to the option policy parameters $\theta_{\pi_\Omega}$, $\theta_{\pi_\omega}$, and $\theta_{\beta_\omega}$}\label{sec:grad_wrt_op}
%
In this section, based on Eq.~\ref{eq:grad_theta}, we propose policy gradients with respect to $\theta_{\pi_\Omega}$, $\theta_{\pi_\omega}$, and $\theta_{\beta_\omega}$ (Eq.~\ref{eq:n_grad_Omega}, Eq.~\ref{eq:n_grad_omega}, and Eq.~\ref{eq:n_grad_beta}), which are more useful for developing an option-critic style algorithm. 
First, we show that Eq.~\ref{eq:grad_theta} can be simplified as the gradient of soft robust loss in an augmented MDP, and then propose policy gradients of this. 

By applying Corollary~\ref{th:cvar_dec} in Appendices and Eq.~\ref{eq:softrobust},  Eq.~\ref{eq:grad_theta} can be rewritten as 
\begin{eqnarray}
     \nabla_\theta L(\theta, v, \lambda) &=& \sum_{p} \mathbb{P}(p) \nabla_\theta \left( \mathbb{E}_{\mathcal{C}} \left[ \left. \mathcal{C} ~\right\vert~ p \right] + \frac{\lambda}{\epsilon}\mathbb{E}_{\mathcal{C}}\left[\max\left(0, \left. \mathcal{C} - v\right) ~\right\vert~ p\right] \right) \label{eq:grad_theta_mod}. 
\end{eqnarray}
In Eq.~\ref{eq:grad_theta_mod}, the expected soft robust loss term and constraint term are contained inside the gradient operator. 
If we proceed with the derivation of the gradient while treating these terms as they are, the derivation becomes complicated. 
Therefore, as in Chow and Ghavamzadeh~\cite{chow2014algorithms}, we simplify these terms to a single term by extending the optimization problem. 
For this, we augment the parameterized MDP by adding an extra reward factor of which the discounted expectation matches the constraint term. 
We also extend the state to consider an additional state factor $x \in \mathbb{R}$ for calculating the extra reward factor. 
Formally, given the original parameterized MDP $\langle S, A, \mathbb{C}, \gamma, T_p, \mathbb{P}_{0}\rangle$, we define an augmented parameterized MDP $\left\langle S', A, \mathbb{C}', \gamma, T_{p}', \mathbb{P}_{0}'\right\rangle$ where $S'=S\times \mathbb{R}, \mathbb{P}_{0}' = \mathbb{P}_{0} \cdot \mathbf{1}\left(x_0=v\right)$~\footnote{$\mathbf{1}(\bullet)$ is an indicator function, which returns the value one if the argument $\bullet$ is true and zero otherwise. }, and 
\begin{eqnarray}
\mathbb{C}'\left(s', a\right) &=& \begin{cases} 
  \mathbb{C}(s, a) + \lambda \max(0, -x) / \epsilon ~~~ (\text{if} ~ s ~ \text{is a terminal state}) \\
  \mathbb{C}(s, a) ~~~ (\text{otherwise}) 
\end{cases}, \nonumber\\
T_{p}'\left(s', a\right) &=& \begin{cases} 
  T_p(s, a) ~~~ (\text{if} ~ x' = \left(x - \mathbb{C}\left(s, a\right)\right)/\gamma)) \\
  0 ~~~ (\text{otherwise}) 
\end{cases}. \nonumber
\end{eqnarray}
Here the augmented state transition and cost functions are used in the form of $s_{t+1}' \sim T'_{p}\left(s_{t}', a_t\right)$ and $c_{t+1}' \sim \mathbb{C}'\left(s_{t}', a_t\right)$, respectively. 
On this augmented parameterized MDP, the loss $\mathcal{C}'$ can be written as~\footnote{Note that $x_T = - \sum_{t=0}^{T} \gamma^{-T+t}c_t + \gamma^{-T}v$, and thus $\gamma^{T} \max\left(0, -x_T\right) = \max\left(0, \sum_{t=0}^{T}\gamma^{t}c_t - v\right)$} 
\begin{eqnarray}
    \mathcal{C}' = \sum_{t=0}^{T}\gamma^t c_{t}' = \sum_{t=0}^{T} \gamma^{t} c_t + \frac{\lambda\max\left(0, \sum_{t=0}^{T}\gamma^{t}c_t - v\right)}{\epsilon} \label{eq:rewardintraj1}. 
\end{eqnarray}
From Eq.~\ref{eq:rewardintraj1}, it is clear that, by extending the optimization problem from the parameterized MDP into the augmented parameterized MDP, Eq.~\ref{eq:grad_theta_mod} can be rewritten as 
\begin{eqnarray}
     \nabla_\theta L(\theta, v, \lambda) &=& \sum_p \mathbb{P}(p) \nabla_\theta  \mathbb{E}_{\mathcal{C}'} \left[ \left. \mathcal{C}' ~\right\vert~ p \right]\label{eq:valueinaugMDPs}. 
\end{eqnarray}

By Theorems~\ref{th:policy_over_opt_pg}, \ref{th:intra_option_pg}, and \ref{th:termination_pg} in Appendices\footnote{To save the space of main content pages, we describe the detail of these theorems and these proofs in Appendices. 
Note that the soft robust loss (a model parameter uncertainty) is not considered in the policy gradient theorems for the vanilla option critic framework~\cite{Bacon2017TheOA}, and that the policy gradient theorems of the soft robust loss (Theorems~\ref{th:policy_over_opt_pg}, \ref{th:intra_option_pg}, and \ref{th:termination_pg}) are newly proposed in out paper.}, Eq.~\ref{eq:valueinaugMDPs} with respect to $\theta_{\pi_\Omega}$, $\theta_{\pi_\omega}$, and $\theta_{\beta_\omega}$ can be written as 
\begin{eqnarray}
    \frac{\partial L(\theta, v, \lambda)}{\partial \theta_{\pi_\Omega}} &=& \mathbb{E}_{s'} \left[ \left. \sum_{\omega} \frac{\partial \pi_\Omega\left(\omega \left\vert s'\right. \right)}{\partial \theta_{\pi_\Omega}} \overline{Q}_{\Omega}\left(s', \omega\right) ~\right\vert~ \overline{p}\right], \label{eq:n_grad_Omega}\\
    \frac{\partial L(\theta, v, \lambda)}{\partial \theta_{\pi_\omega}} &=& \mathbb{E}_{s', \omega}\left[ \left. \sum_{a} \frac{\partial \pi_\omega\left(a \left\vert s'\right. \right)}{\partial \theta_{\pi_\omega}} \overline{Q}_{\omega}\left(s', \omega, a\right) ~\right\vert~ \overline{p} \right], \label{eq:n_grad_omega}\\
    \frac{\partial L(\theta, v, \lambda)}{\partial \theta_{\beta_\omega}} &=& \mathbb{E}_{s', \omega}\left[ \left. \frac{\partial \beta_{\omega}\left(s'\right)}{\partial \theta_{\beta_\omega}} \left( \overline{V}_{\Omega}\left(s'\right) - \overline{Q}_{\Omega}\left(s', \omega\right) \right) ~\right\vert~ \overline{p} \right]. \label{eq:n_grad_beta}
\end{eqnarray}
Here $\overline{Q}_\Omega$, $\overline{Q}_\omega$, and $\overline{V}_\Omega$ are the average value (i.e., the soft robust loss) of executing option $\omega$, the value of executing an action in the context of a state-option pair, and state value. 
In addition, $\mathbb{E}_{s'}\left[ \bullet \mid \overline{p} \right]$ and $\mathbb{E}_{s', \omega}\left[ \bullet \mid \overline{p} \right]$ are the expectations of argument $\bullet$, with respect to the probability of trajectories generated from the average augmented MDP $\left\langle S', A, \mathbb{C}', \gamma, \mathbb{E}_{p}\left[T_{p}'\right], \mathbb{P}_{0}' \right\rangle$, which follows the average transition function $\mathbb{E}_{p}\left[T_{p}'\right]=\sum_{p} \mathbb{P}(p)T_{p}'$, respectively. 

\subsection{Gradient with respect to $\lambda$ and $v$}
As in the last section, we extend the parameter optimization problem into an augmented MDP. 
We define an augmented parameterized MDP $\left\langle S^{'}, A, \mathbb{C}'', \gamma, T_{p}', \mathbb{P}_{0}'\right\rangle$ where 
\begin{eqnarray}
\mathbb{C}''(s', a) &=& \begin{cases} 
  \max(0, -x) ~~~ (\text{if} ~ s ~ \text{is a terminal state}) \\
  0 ~~~ (\text{otherwise}) 
\end{cases}, 
\end{eqnarray}
and other functions and variables are the same as those in the augmented parameterized MDP in the last section. Here the augmented cost function is used in the form of $c_{t+1}'' \sim \mathbb{C}'' \left(s_t', a_t\right)$. 
On this augmented parameterized MDP, the expected loss $\mathcal{C}''$ can be written as 
\begin{eqnarray}
 \mathcal{C}'' = \sum_{t=0}^T \gamma^t c_{t}'' = \max\left(0, \sum_{t=0}^{T}\gamma^{t}c_t - v\right) \label{eq:rewardintraj2}. 
\end{eqnarray}
From Eq.~\ref{eq:rewardintraj2}, it is clear that, by extending the optimization problem from the parameterized MDP to the augmented parameterized MDP,  $\mathbb{E}_{\mathcal{C}, p}\left[ \max\left(0, \mathcal{C} - v\right) \right]$ in Eq.~\ref{eq:grad_y} and Eq.~\ref{eq:grad_lambda} can be replaced by $\mathbb{E}_{\mathcal{C}'', p} \left[ \mathcal{C}'' \right]$. 
Further, by Corollary \ref{co:conv} in Appendices, this term can be rewritten as $\mathbb{E}_{\mathcal{C}'', p} \left[ \mathcal{C}'' \right] = \mathbb{E}_{\mathcal{C}''} \left[ \left. \mathcal{C}'' ~\right\vert~ \overline{p} \right]. \label{eq:rew_another_augMDPs}$

Therefore, Eq.~\ref{eq:grad_y} and Eq.~\ref{eq:grad_lambda} can be rewritten as 
\begin{eqnarray}
     \frac{\partial L(\theta, v, \lambda)}{\partial v} &=& \lambda \left( 1 + \frac{1}{\epsilon} \frac{\partial\mathbb{E}_{\mathcal{C}''}\left[ \left. \mathcal{C}'' ~\right\vert~ \overline{p} \right] }{\partial v} \right) = \lambda \left( 1 - \frac{1}{\epsilon}\mathbb{E}_{\mathcal{C}''}\left[ \left. \mathbf{1} \left(\mathcal{C}'' \geq 0\right) ~\right\vert~ \overline{p} \right]\right), \label{eq:n_grad_y}\\
     \frac{\partial L(\theta, v, \lambda)}{\partial \lambda} &=& v + \frac{1}{\epsilon}\mathbb{E}_{\mathcal{C}''}\left[ \left. \mathcal{C}'' ~\right\vert~ \overline{p} \right] - \zeta. \label{eq:n_grad_lambda}
\end{eqnarray}

\subsection{Algorithmic representation of Option Critic with CVaR Constraints (OC3)}
In the last two sections, we derived the gradient of $L(\theta, v,\lambda)$ with respect to the option policy parameters, $v$, and $\lambda$. 
In this section, on the basis of the derivation result, we propose an algorithm to optimize the option policy parameters. 

Algorithm~\ref{alg2:oc4CVaR} shows a pseudocode for learning options with the CVaR constraint. 
In lines 2--5, we sample trajectories to estimate state-action values and find parameters. 
In this sampling phase, analytically finding the average transition function may be difficult, if the transition function is given as a black-box simulator. 
In such cases, sampling approaches can be used to find such average transition functions. 
One possible approach is to sample $p$ from $\mathbb{P}(p)$, and approximate $\mathbb{E}_{p}\left[ T_{p}' \right]$ by a sampled transition function. 
Another approach is to, for each trajectory sampling, sample $p$ from $\mathbb{P}(p)$ and then generate a trajectory from the augment parameterized MDP $\left\langle S', A', (\mathbb{C}', \mathbb{C}''), \gamma, T_{p}', \mathbb{P}_{0}' \right\rangle$. 
In lines 6--11, we update the value functions, and then update the option policy parameters on the basis of on Eq.~\ref{eq:n_grad_Omega}, Eq.~\ref{eq:n_grad_omega}, and Eq.~\ref{eq:n_grad_beta}. 
The implementation of this update part differs depending on the base reinforcement learning method. 
In this paper, we adapt the proximal policy option critic (PPOC)~\cite{klissarov2017learnings} to this part~\footnote{In PPOC, the Proximal Policy Optimization method is applied to update parameters~\cite{schulman2017proximal}. We decided to use PPOC since it produced high-performance option policies in continuous control tasks.}. 
In lines 12--14, we update $v$ and $\lambda$. 
In this part, the gradient with respect to $v$ and $\lambda$ are calculated by Eq.~\ref{eq:n_grad_y} and Eq.~\ref{eq:n_grad_lambda}. 
%
\begin{algorithm}[t]
\caption{Option Critic with CVaR Constraint (OC3)}
\label{alg2:oc4CVaR}
\begin{algorithmic}[1]
\REQUIRE $\theta_{\pi_\omega, 0}$, $\theta_{\beta_\omega, 0}$, $\theta_{\pi_\Omega, 0}$, $N_{iter}, N_{epi}, \zeta, \Omega$
\FOR{iteration $i = 0, 1, ..., N_{iter}$}
 \FOR{$k = 0, 1, ..., N_{epi}$}
  \STATE Sample a trajectory $\tau_k = \left\{s_t', \omega_t, a_t, \left(c_t', c_t''\right), s_{t+1}'\right\}_{t=0}^{T-1}$ from the average augment MDP $\left\langle S', A, (\mathbb{C}', \mathbb{C}''), \gamma, \mathbb{E}_{p}\left[T_{p}'\right], \mathbb{P}_{0}^{'}\right\rangle$ by option policies ($\pi_\omega$, $\beta_\omega$, $\pi_\Omega$) with $\theta_{\pi_\omega, i}$, $\theta_{\beta_\omega, i}$, and $\theta_{\pi_\Omega, i}$. 
 \ENDFOR
  \STATE Update $\overline{Q}_\Omega$, and $\overline{V}_\Omega$ with $\left\{\tau_0, ..., \tau_{N_{epi}}\right\}$. 
  \FOR{$\omega \in \Omega$}
     \STATE Update $\overline{Q}_\omega$ with $\left\{\tau_0, ..., \tau_{N_{epi}}\right\}$. 
     \STATE $\theta_{\pi_\omega , i+1} \leftarrow \theta_{\pi_\omega, i} - \alpha \frac{\partial L(\theta_i, v_i, \lambda_i)}{\partial \theta_{\pi_\omega, i}}$
     \STATE $\theta_{\beta_\omega, i+1} \leftarrow \theta_{\beta_\omega, i} - \alpha \frac{\partial L(\theta_i, v_i, \lambda_i)}{\partial \theta_{\beta_\omega, i}}$
  \ENDFOR
  \STATE $\theta_{\pi_\Omega, i+1} \leftarrow \theta_{\pi_\Omega, i} - \alpha \frac{\partial L(\theta_i, v_i, \lambda_i)}{\partial \theta_{\pi_\Omega, i}}$
  
  \STATE $v_{i+1} \leftarrow v_i - \alpha \frac{\partial L(\theta_i, v_i, \lambda_i)}{\partial v_i}$
  \STATE $\lambda_{i+1} \leftarrow \lambda_i + \alpha \frac{\partial L(\theta_i, v_i, \lambda_i)}{\partial \lambda_i}$
\ENDFOR
\end{algorithmic}
\end{algorithm}

Our algorithm is different from Actor-Critic Algorithm for CVaR Optimization~\citep{chow2014algorithms} mainly in the following points: \\
\textbf{Policy structure :} 
In the algorithm in \citep{chow2014algorithms}, a flat policy is optimized, while, in our algorithm, the option policies, which are more general than the flat policy~\footnote{By using single option, the options framework can be specified to a flat policy.}, are optimized. \\
\textbf{The type of augmented MDP:} 
In the algorithm in \citep{chow2014algorithms}, the augmented MDP takes the form of $\left\langle S', A, \mathbb{C}', \gamma, T', \mathbb{P}_{0}' \right\rangle$, 
while, in our algorithm, the augmented MDP takes the form of $\left\langle S', A, \mathbb{C}', \gamma, \mathbb{E}_{p}\left[T_{p}'\right], \mathbb{P}_{0}' \right\rangle$. 
Here $T'$ is the transition function without model parameters. 
%
This difference ($T'$ and $\mathbb{E}_{p}\left[T_{p}'\right]$) comes from the difference of optimization objectives. 
In \citep{chow2014algorithms}, the optimization objective is the expected loss without a model parameter uncertainty ($\mathbb{E}_{\mathcal{C}'} \left[ \mathcal{C}' \right]$), 
while in our optimization objective (Eq.~\ref{exp:obj14oc}), the model parameter uncertainty is considered by taking the expectation of the model parameter distribution ($\mathbb{E}_{\mathcal{C}', p} \left[ \mathcal{C}' \right] = \sum_{p}\mathbb{P}(p)\mathbb{E}_{\mathcal{C}'} \left[ \mathcal{C}' \mid p \right]$). 
%

\section{Experiments}\label{sec:experiment}
In this section, we conduct an experiment to evaluate our method (OC3) on multi-joint robot control tasks with model parameter uncertainty~\footnote{Source code to replicate the experiments is available at \url{https://github.com/TakuyaHiraoka/Learning-Robust-Options-by-Conditional-Value-at-Risk-Optimization}}. 
Through this experiment, we elucidate answers for the following questions: \\
{\bf Q1:} Can our method (OC3) successfully produce options that satisfy constraints in Eq.~\ref{exp:obj14oc} in non-trivial cases? 
Note that since $\zeta$ in Eq.~\ref{exp:obj14oc} is a hyper-parameter, we can set it to an arbitrary value. If we use a very high value for $\zeta$, the learned options easily satisfy the constraints. Our interest lies in the non-trivial cases, in which the $\zeta$ is set to a reasonably low value. \\
{\bf Q2:} Can our method successfully produce options that work better in the worst case than the options that are learned only to minimize the average-case loss (i.e., soft robust loss Eq.~\ref{eq:softrobust})? \\
{\bf Q3:} Can our method successfully produce options that work better in the average case than the options that are learned only to minimize worst-case losses? 
Here the worst case losses are the losses in the worst case. The expectation term of the constraint in Eq.~\ref{exp:obj14oc} is an instance of such losses. Another instance is the loss in an environment following the worst-case model parameter: 
\begin{equation}
\mathbb{E}_{\mathcal{C}}\left[ \mathcal{C} \mid \Tilde{p}\right] ~~s.t.~~~ \tilde{p} = \argmax_{p \in P} \mathbb{E}_{\mathcal{C}}\left[\mathcal{C} \mid \Tilde{p}\right] \label{eq:worstcase}. 
\end{equation}

In this experiment, we compare our proposed method with three baselines. 
All methods are implemented by extending PPOC~\cite{klissarov2017learnings}. \\
{\bf SoftRobust:} 
In this method, the option policies are learned to minimize an expected soft robust loss (Eq.~\ref{eq:softrobust}) the same as in Frans et al.~\cite{frans2018meta}. 
We implemented this method, by adapting the meta learning shared hierarchies (MLSH) algorithm~\cite{frans2018meta}~\footnote{To make a fair comparison, ``warmup period'' and reinitialize $\theta_\Omega$ just after ``repeat'' in MLSH are avoided.} to PPOC. \\
{\bf WorstCase:} 
In this method, the option policies are learned to minimize the expected loss on the worst case (i.e., Eq.~\ref{eq:worstcase}) as in Mankowitz et al.~\cite{Mankowitz2018LearningRO}. 
This method is implemented by modifying the temporal difference error evaluation in the generalized advantage estimation~\cite{schulman2015high} part of PPOC, as $- V(s_t) + r(s_t, a_t) + \gamma \max_{p \in P} \left[ V\left(s_{t+1} \sim T_{p}\left(s_t, a_t\right)\right) \right] \label{eq:tdinworstcase}$.
If the model parameters are continuous, $P$ becomes an infinite set and thus evaluating Eq.~\ref{eq:tdinworstcase} becomes intractable. 
To mitigate this intractability for the continuous model parameter, we discretize the parameters by tile-coding and use the discretized parameter set as $P$. \\
{\bf EOOpt-$\epsilon$:} In this method, the option policies are learned to minimize the expected loss in the worst case. 
Unlike WorstCase, in this method, the expectation term of the constraint in Eq.~\ref{exp:obj14oc} is used for the optimization objective. 
This method is implemented by adapting the EPOpt-$\epsilon$ algorithm~\cite{Rajeswaran2016EPOptLR} to PPOC (see Algorithm~\ref{alg1:eoopt} in Appendices). 
In this experiment, we set the value of $\epsilon$ to $0.1$. \\
{\bf OC3:} 
In this method, the option policies are learned to minimize the expected loss on the average augmented MDP while satisfying the CVaR constraints (i.e., Eq.~\ref{exp:obj24oc}). 
This method is implemented by applying Algorithm~\ref{alg2:oc4CVaR} to PPOC. 
In this experiment, $\epsilon$ is set to $0.1$, and $\zeta$ is determined so that the produced options achieve a CVaR score that is equal to or better than the score achieved by the best options produced by the other baselines (we will explain this in a later paragraph). \\
For all of the aforementioned methods, we set the hyper-parameters (e.g., policy and value network architecture and learning rate) for PPOC to the same values as in the original paper~\cite{klissarov2017learnings}. 
The parameters of the policy network and the value network are updated when the total number of trajectory reaches 10240. 
This parameter update is repeated 977 times for each learning trial. 

The experiments are conducted in the robust MDP extension of the following environments:\\
{\bf HalfCheetah:} In this environment, options are used to control the half-cheetah, which is a planar biped robot with eight rigid links, including two legs and a torso, along with six actuated joints~\cite{wawrzynski2009real}. \\
{\bf Walker2D:} In this environment, options are used to control the walker, which is a planar biped robot consisting of seven links, corresponding to two legs and a torso, along with six actuated joints~\cite{1606.01540}. \\
{\bf HopperIceBlock:} 
In this environment, options are used to control the hopper, which is a robot with four rigid links, corresponding to the torso, upper leg, lower leg, and foot, along with three actuated joints~\cite{Henderson2017BenchmarkEF,klissarov2017learnings}. 
The hopper has to pass the block either by jumping completely over them or by sliding on their surface. 
This environment is more suitable for evaluating the option framework than standard Mujoco environments~\cite{1606.01540} since it contains explicit task compositionality. \\
To extend these environments to the robust MDP setting (introduced in Section \ref{sec:problem}), we change the environment so that some model parameters  (e.g., the mass of robot's torso and ground friction) are randomly initialized in accordance with model parameter distributions. 
For the model parameter distribution, we prepare two types of distribution: continuous and discrete. 
For the continuous distribution, as in Rajeswaran et al.~\cite{Rajeswaran2016EPOptLR}, we use a truncated Gaussian distribution, which follows the hyperparameters described in Table \ref{tab:param_distribution_continuous} in Appendices. 
For the discrete distribution, we use a Bernoulli distribution, which follows hyperparameters described in Table \ref{tab:param_distribution_discrete} in Appendices. 
We define the cost as the negative of the reward retrieved from the environment. 
%

Regarding \textbf{Q1}, we find that OC3 can produce feasible options that keep CVaR lower than a given $\zeta$ (i.e., it satisfies the constraint in Eq.~\ref{exp:obj14oc}). 
The negative CVaR of each method in each environment is shown in Figure~\ref{fig:vrrews_cvar} (b). 
To focus on the non-trivial cases, we choose the best CVaR of the baselines (SoftRobust, WorstCase, and EOOpt-$\epsilon$) and set $\zeta$ to it. 
For example, in the ``Walker2D-disc'' environment, the CVaR of WorstCase is used for $\zeta$. 
The ``OC3'' in Figure~\ref{fig:vrrews_cvar} (b) is the negative CVaR with the given $\zeta$. 
We can see that the negative CVaR score of OC3 is higher than the best negative CVaR score the baselines in each environment (i.e., the CVaR score of OC3 is lower than the given $\zeta$). 
These results clearly demonstrate that our method successfully produces options satisfying the constraint. 
In addition, the numbers of successful learning trials for OC3 also support our claim (see Table~\ref{tab:num_success} in Appendices).  
\begin{figure}[t]
\begin{minipage}{1.0\hsize}
\begin{center}
\includegraphics[clip, width=0.99\hsize]{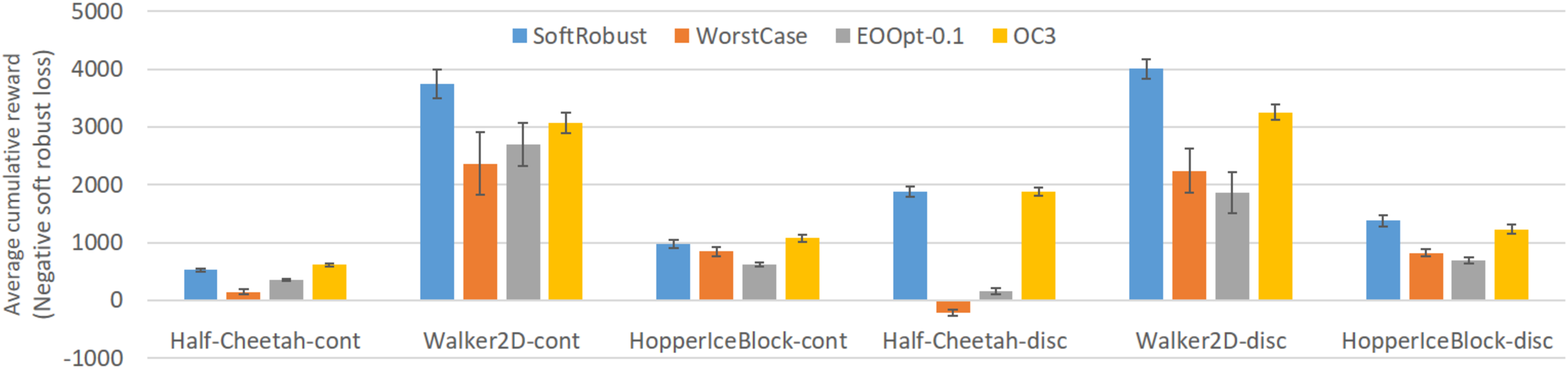}
\vspace{-7pt}
\subcaption{The average-case performance: average cumulative reward (\textbf{the negative of the soft robust loss} Eq.~\ref{eq:softrobust})}
\end{center}
\end{minipage}\\\begin{minipage}{1.0\hsize}
\begin{center}
\includegraphics[clip, width=0.99\hsize]{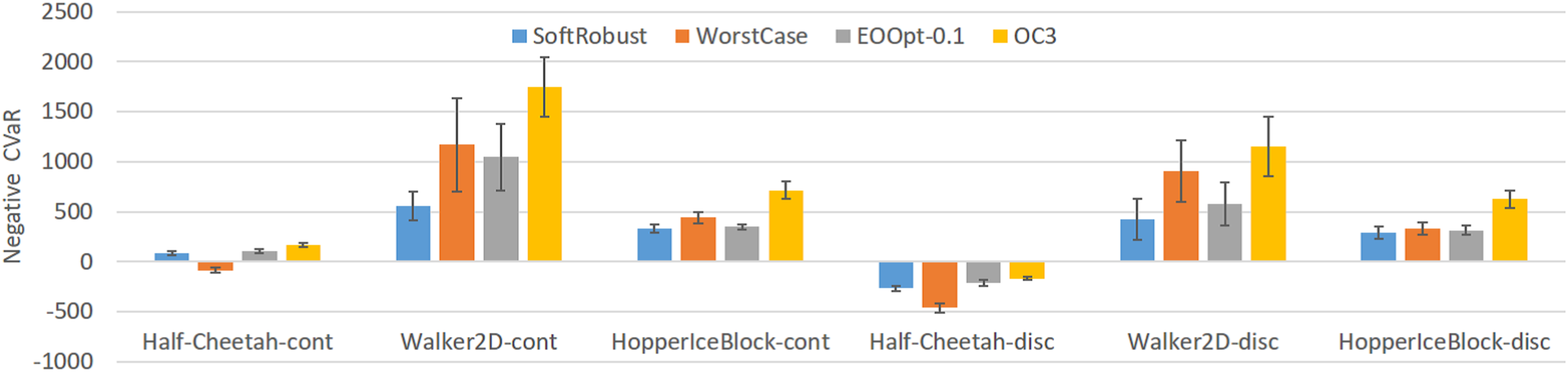}
\vspace{-7pt}
\subcaption{The worst-case performance: negative CVaR with $\epsilon=0.1$. Here, CVaR is calculated as the average of losses included in the upper 10th percentile, and the negative of it is shown to the figure. }
\end{center}
\end{minipage}
\vspace{-7pt}
\caption{Comparison of methods in the environments. In (a), the vertical axis represents the average cumulative reward (\textbf{the negative of the soft robust loss}) of each method. 
In (b), the vertical axis represents \textbf{the negative of CVaR} of each method. 
In both (a) and (b), the horizontal axis represents environments where the methods are evaluated. The environments with suffix ``-cont'' are environments with the continuous model parameter distributions, and the environments with ``-disc'' are environments with the discrete model parameter distributions. 
The methods with high score values can be regarded as better. 
In addition, each score is averaged over 36 learning trials with different initial random seeds, and the 95$\%$ confidence interval is attached to the score. }
\label{fig:vrrews_cvar}
\end{figure}

In addition, regarding \textbf{Q2}, Figure~\ref{fig:vrrews_cvar} (b) also shows that the negative CVaRs of OC3 are higer than those of SoftRobust, in which the option is learned only to minimize the loss in the average case. 
Therefore, our method can be said to successfully produce options that work better in the worst case than the options learned only for the average case. 

Regarding \textbf{Q3}, we compare the average-case (soft robust) loss (Eq.~\ref{eq:softrobust}) of OC3 and those of worst-case methods (WorstCase and EOOpt-0.1). 
The soft robust loss of each method in each environment is shown in Figure~\ref{fig:vrrews_cvar} (a). 
We can see that the scores of OC3 are higher than those of WorstCase and EOOpt-0.1. 
These results indicate that our method can successfully produce options that work better in the average case than the options learned only for the worst case. 

%
In the analysis of learned options, we find that OC3 and SoftRobust successfully produce options corresponding to decomposed skills required for solving the tasks in most environments. 
Especially, OC3 produces robust options. 
For example, in HopperIceBlock-disc, OC3 produces options corresponding to walking on slippery grounds and jumping onto a box (Figures~\ref{fig:learned_options_hoppericeblock}). 
In addition, in HalfCheetah-disc, OC3 produces an option for running (highlighted in green in Figure~\ref{fig:three}) and an option for stabilizing the cheetah-bot's body (highlighted in red in Figure~\ref{fig:three}), which is used mainly in the rare-case model parameter setups. 
Acquisition of such decomposed robust skills is useful when transferring to a different domain, and they can be also used for post-hoc human analysis and maintenance, which is an important advantage of option-based reinforcement learning over flat-policy reinforcement learning.
%
\begin{figure}[t]
 \begin{minipage}{0.5\hsize}
  \begin{center}
   \includegraphics[width=0.31\hsize]{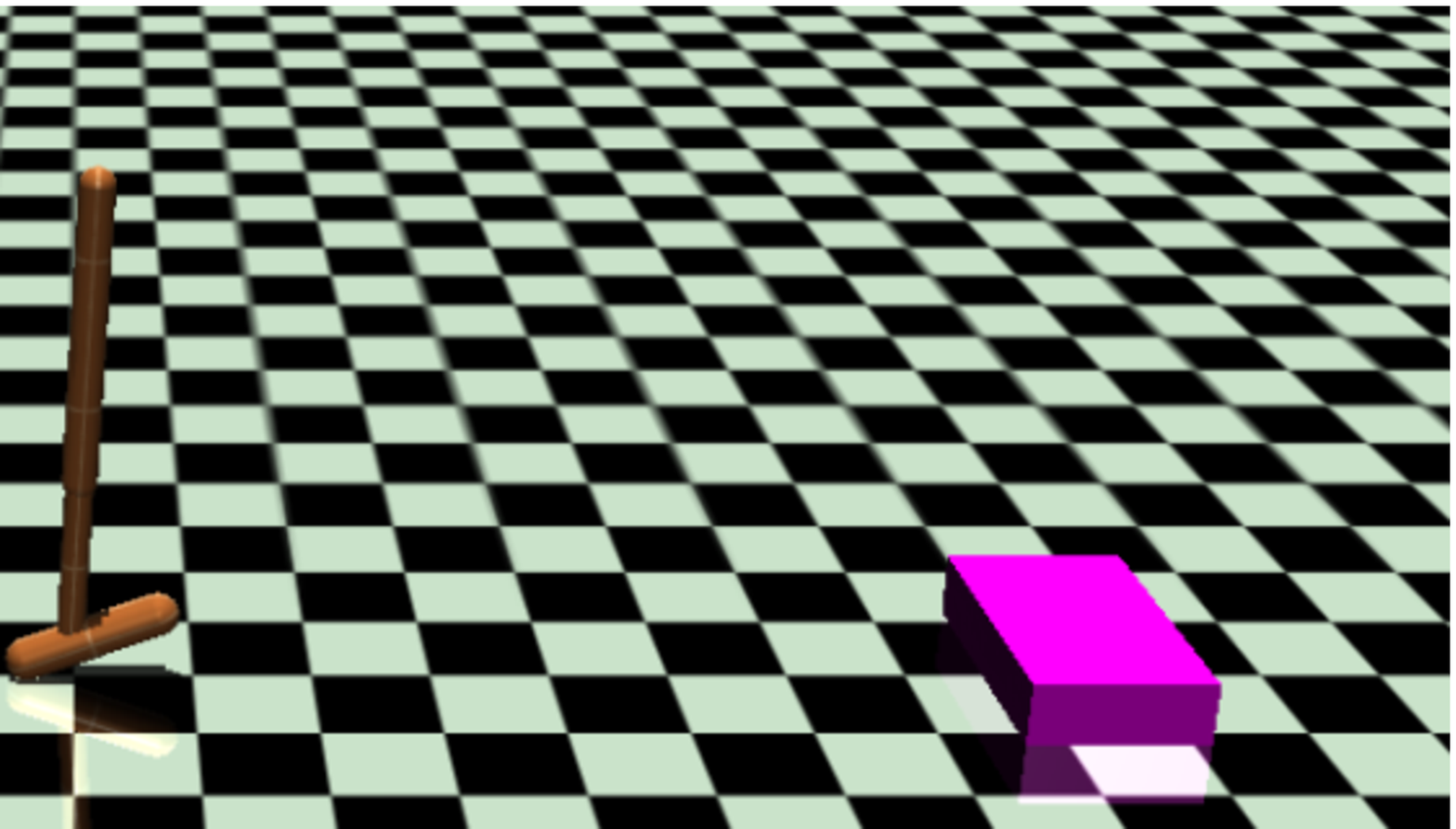}
   \includegraphics[width=0.31\hsize]{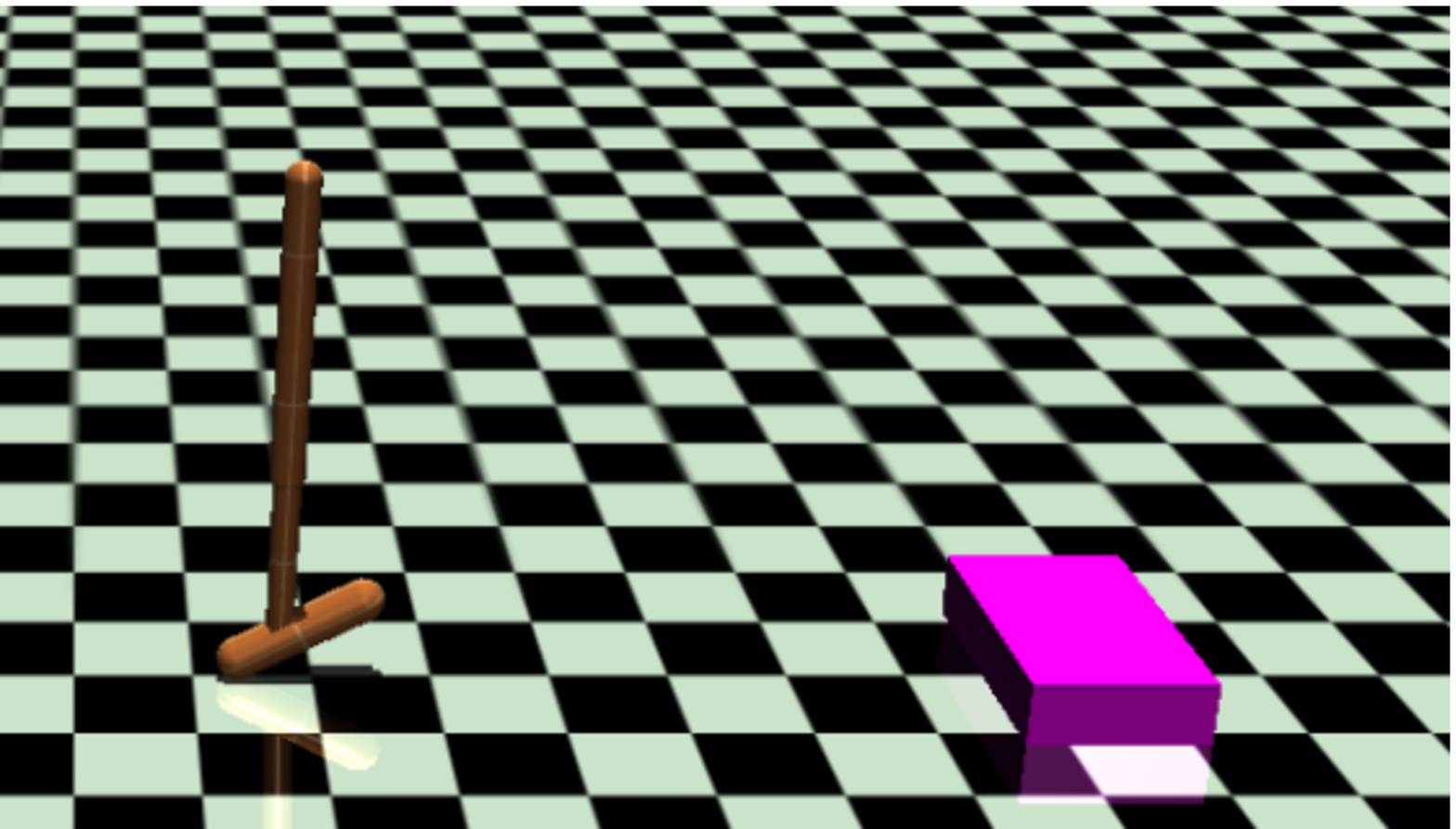}
  \includegraphics[width=0.31\hsize]{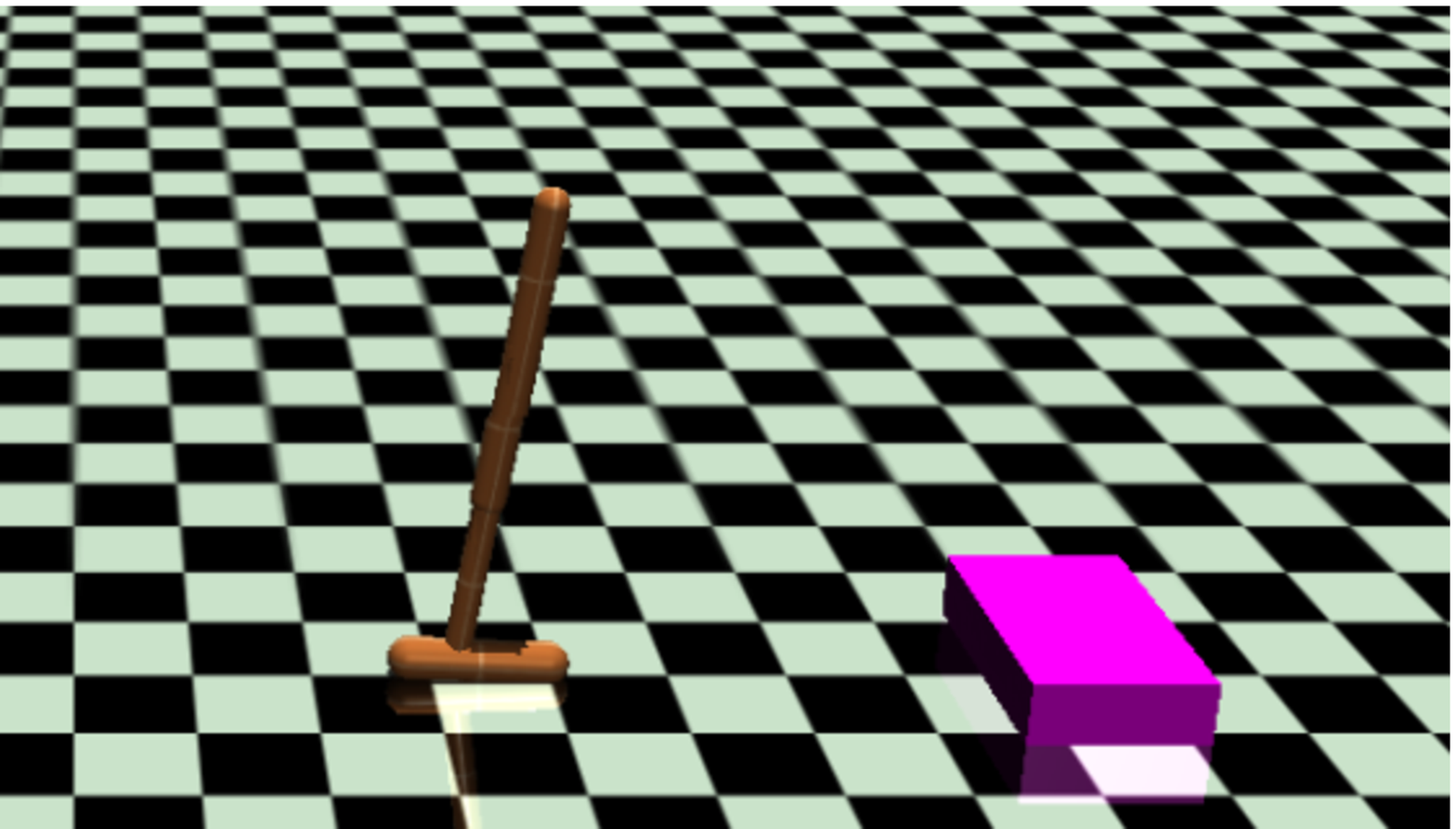}
  \end{center}
 \end{minipage}\vline\vline\begin{minipage}{0.5\hsize}
  \begin{center}
   \includegraphics[width=0.31\hsize]{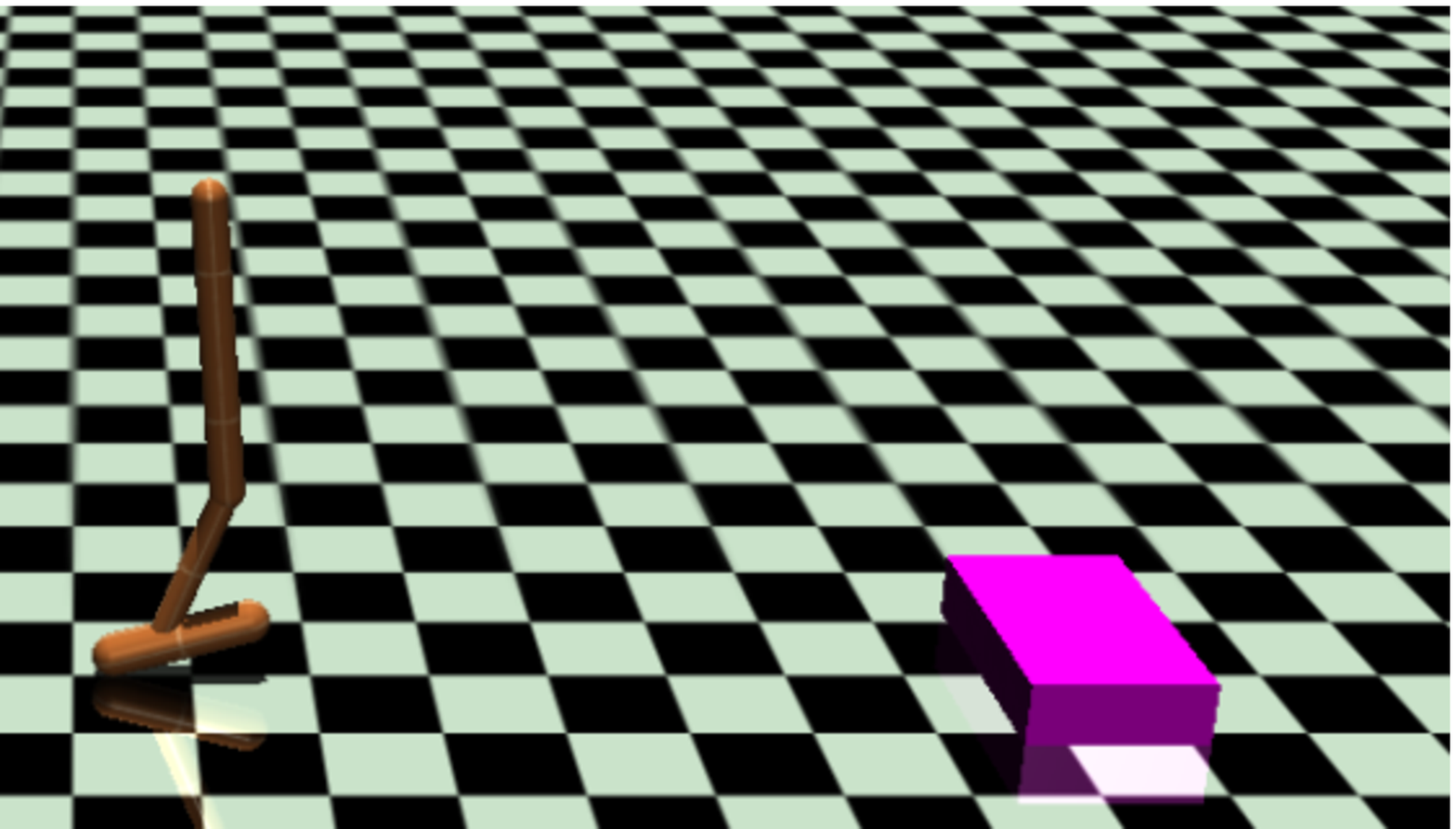}
\includegraphics[width=0.31\hsize]{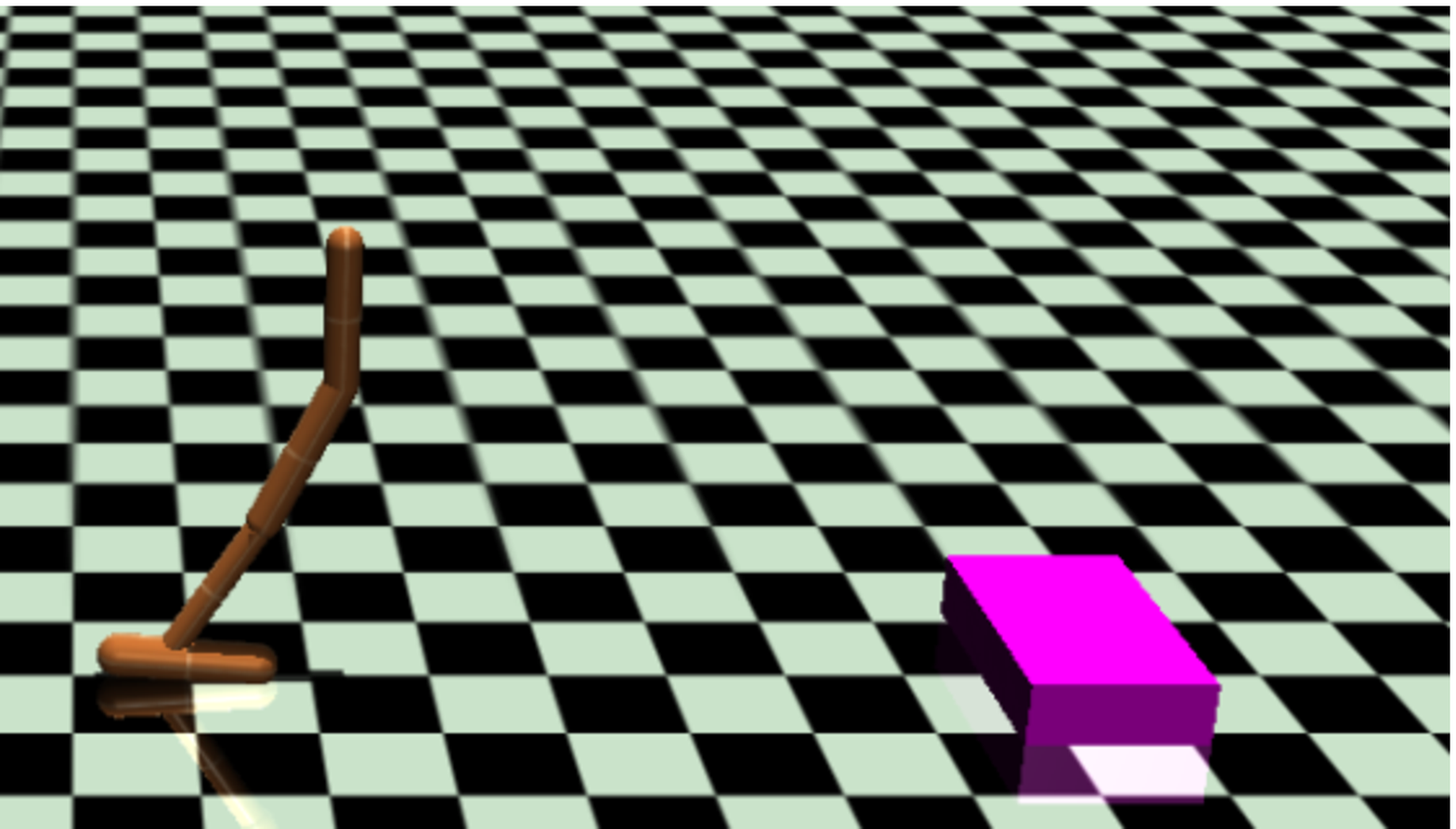}
\includegraphics[width=0.31\hsize]{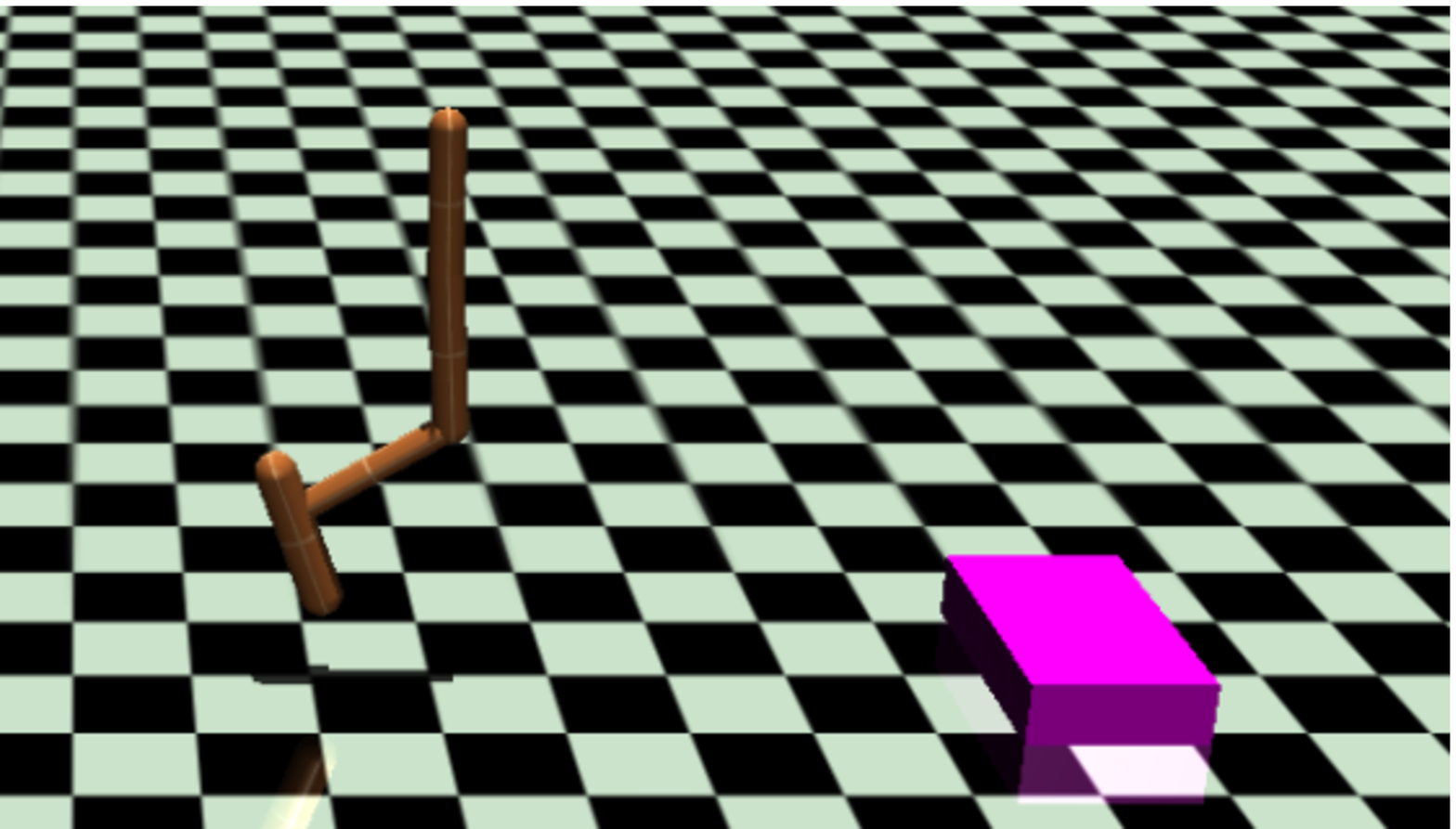}
  \end{center}
 \end{minipage}
 \vspace{-7pt}
 \caption{Learned options in HopperIceBlock-disc. Left: the option for walking on the slippery ground. Right: the option for jumping onto the box.}
\label{fig:learned_options_hoppericeblock}
\end{figure}
\begin{figure}[t]
\begin{minipage}{0.48\hsize}
  \begin{center}
  \includegraphics[width=0.24\hsize]{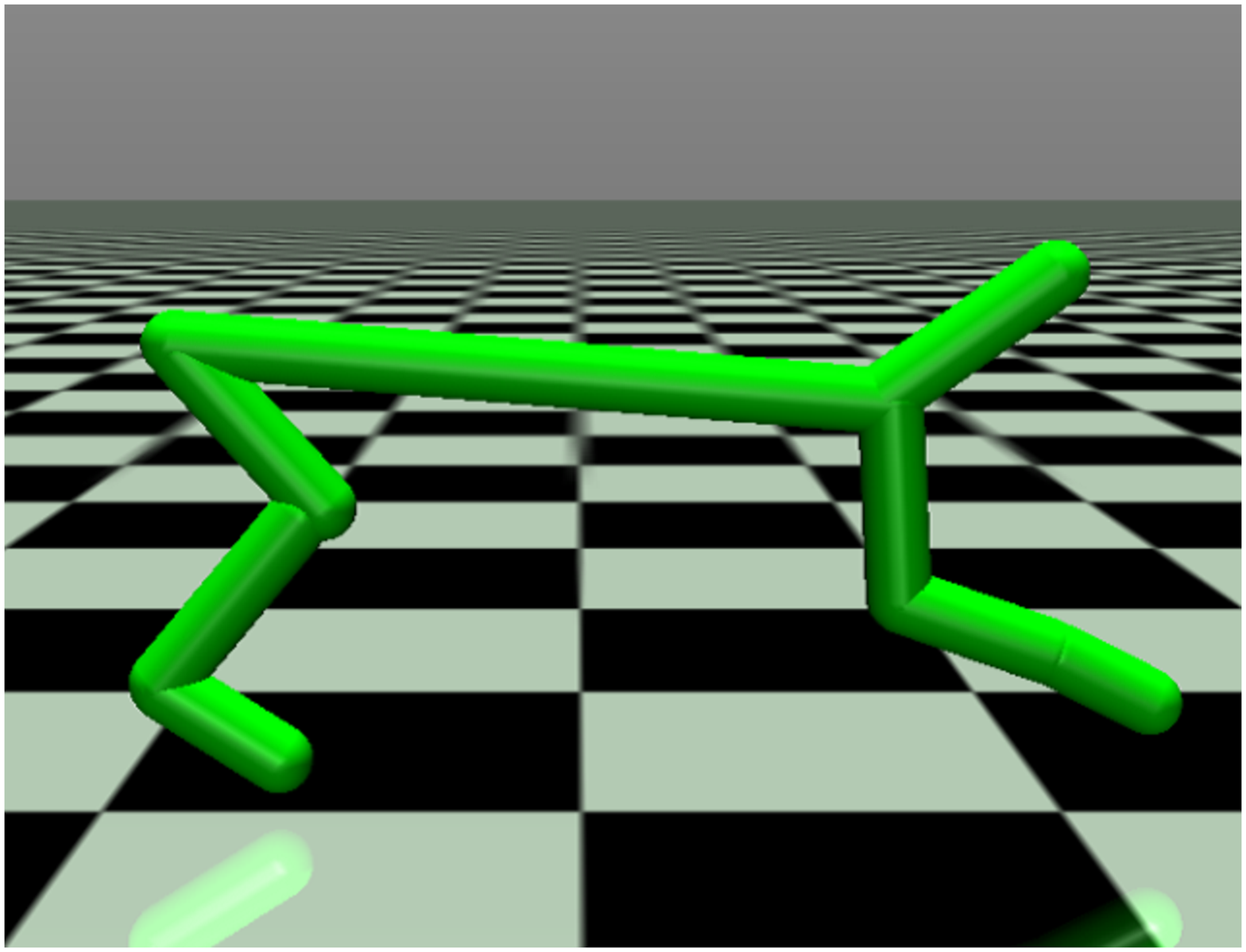}
  \includegraphics[width=0.24\hsize]{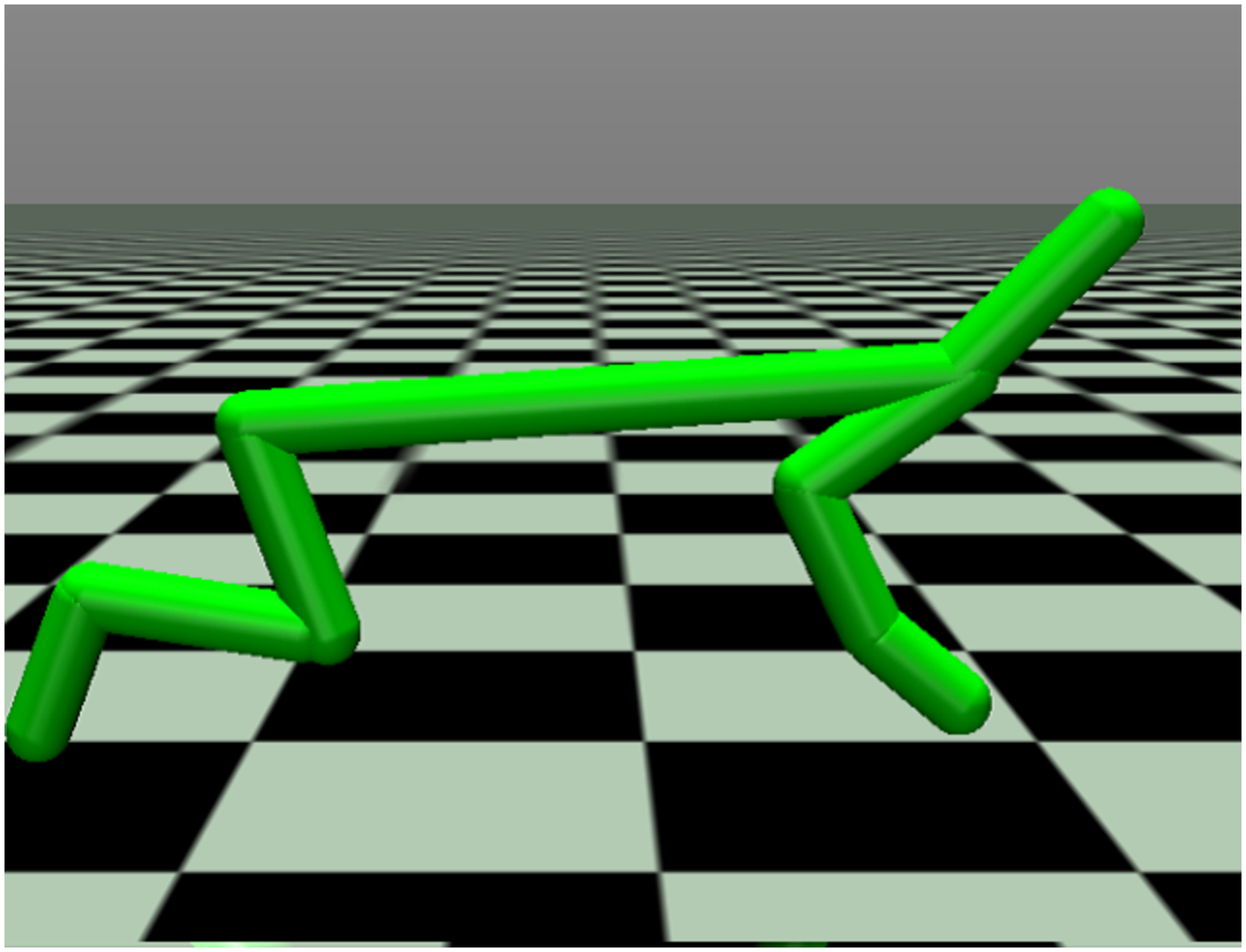}
   \includegraphics[width=0.24\hsize]{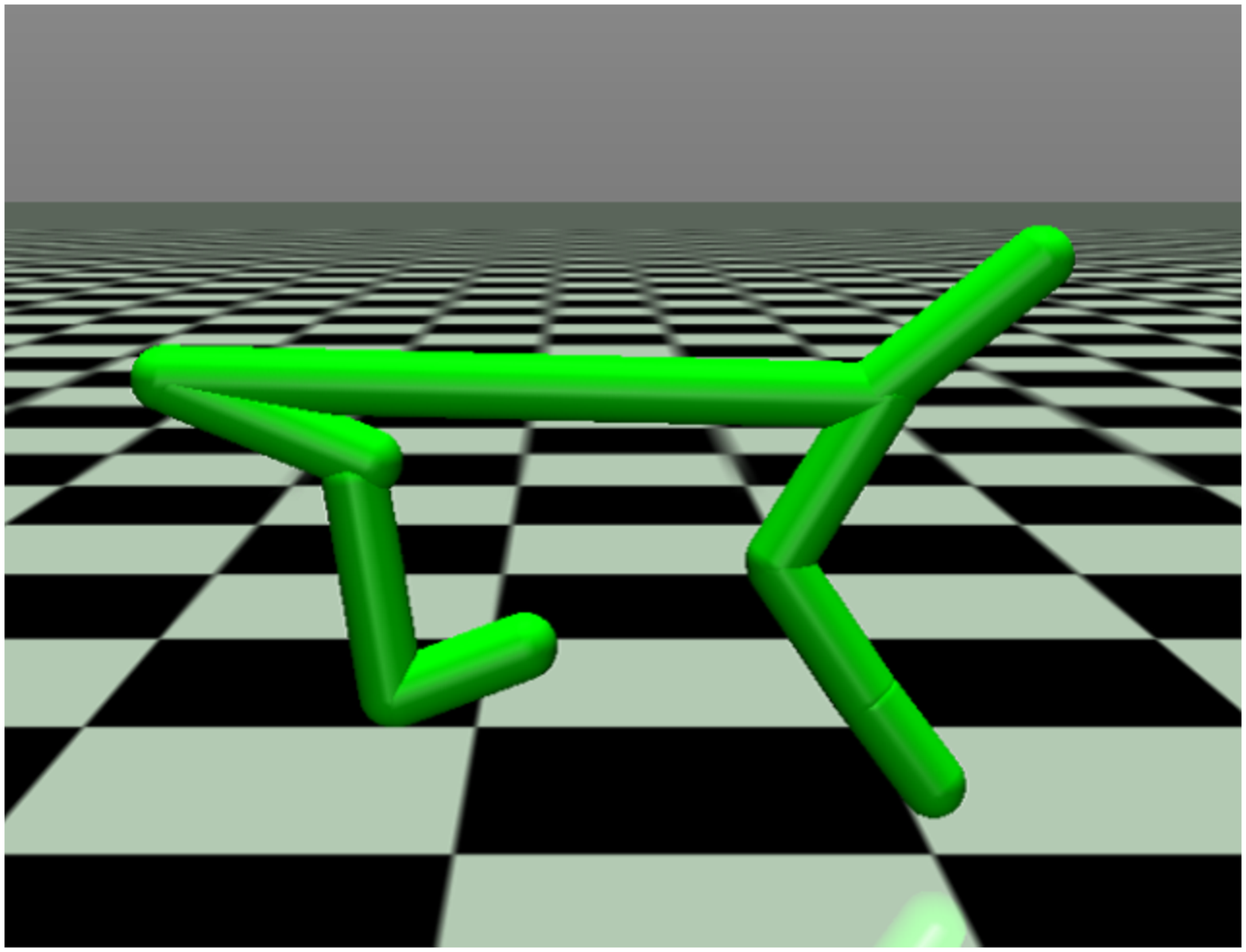}
   \includegraphics[width=0.24\hsize]{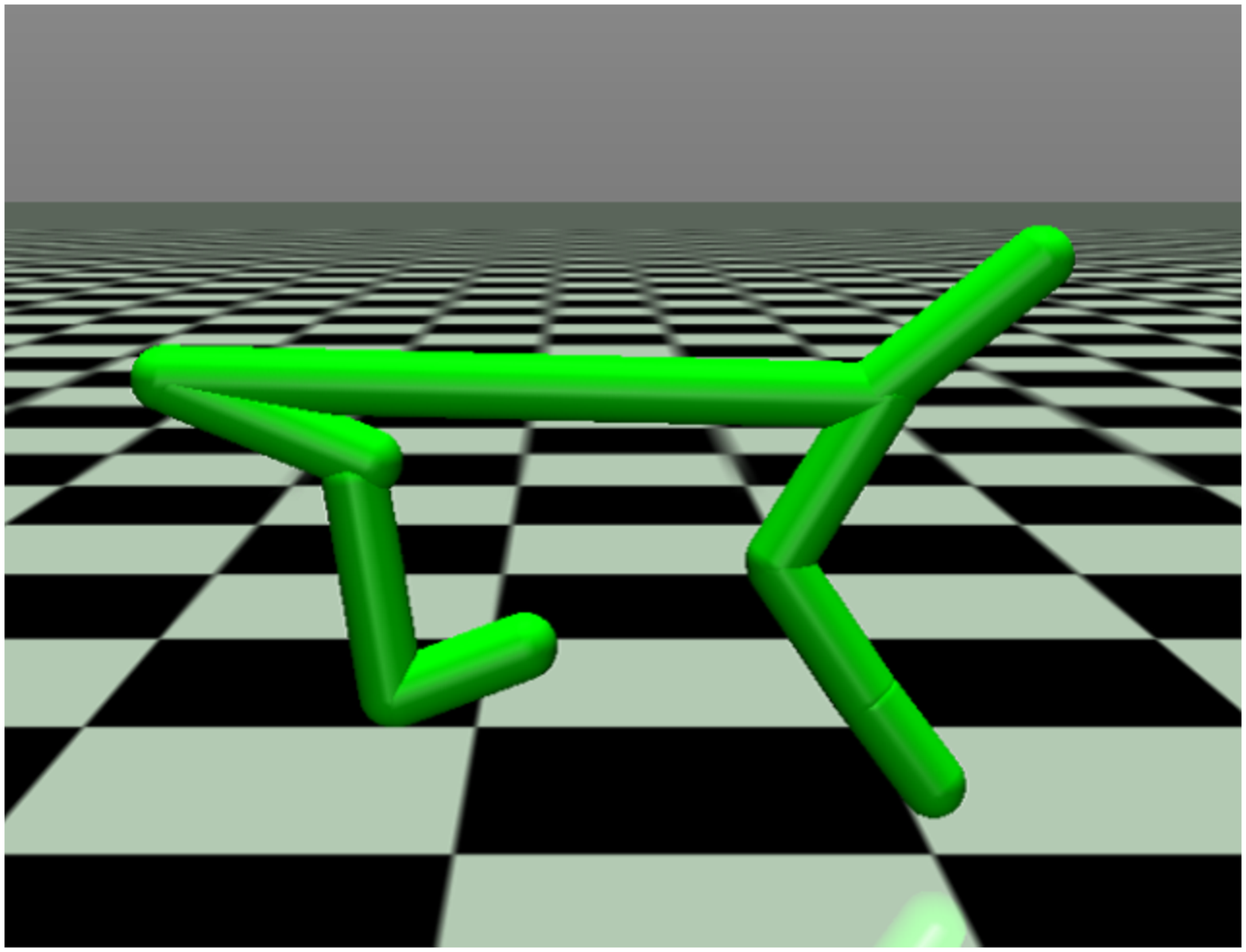}
  \end{center}
  \label{fig:one1}
 \end{minipage}\hspace{5pt}\vline\vline\hspace{5pt}
 \begin{minipage}{0.48\hsize}
  \begin{center}
  \includegraphics[width=0.24\hsize]{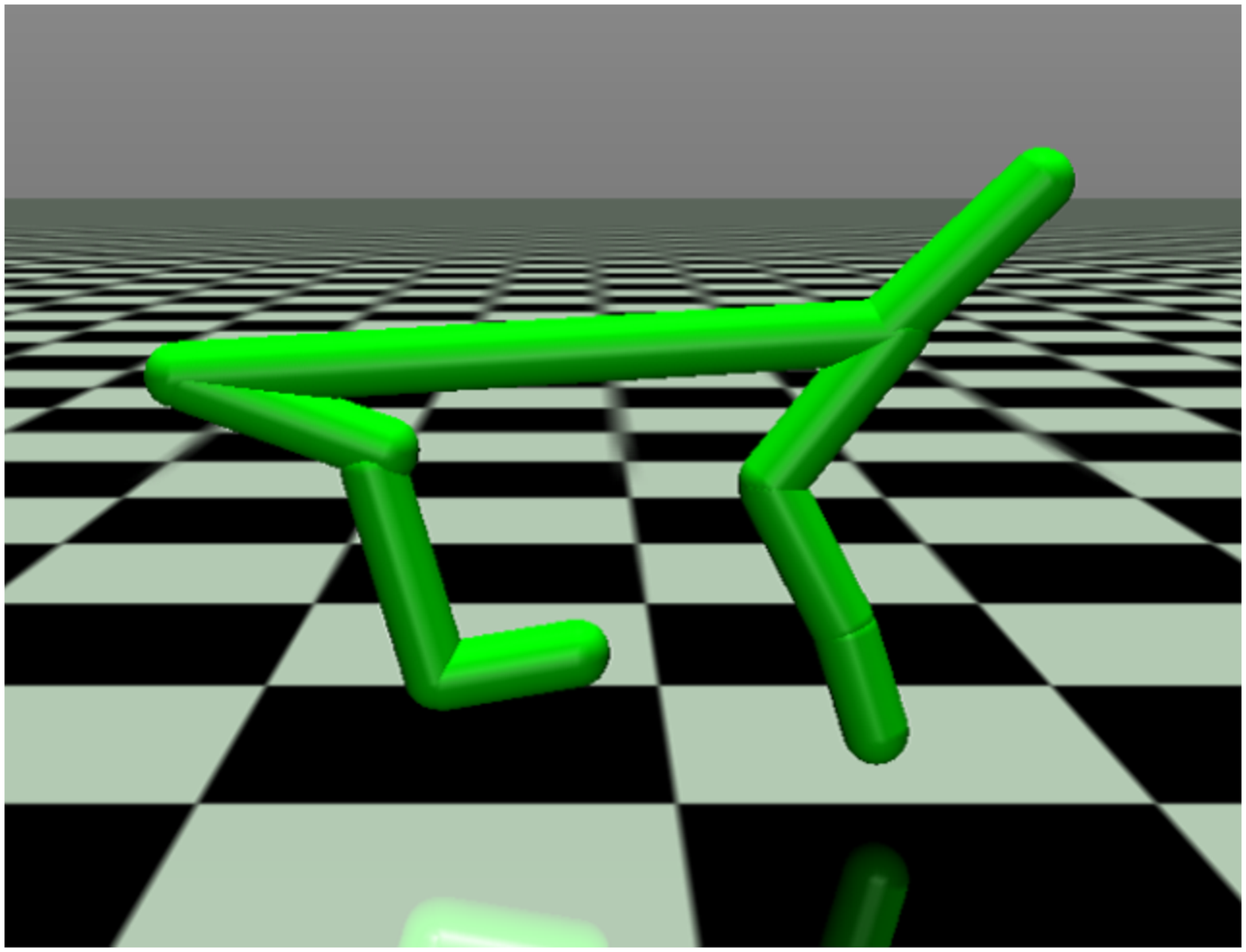}
\includegraphics[width=0.24\hsize]{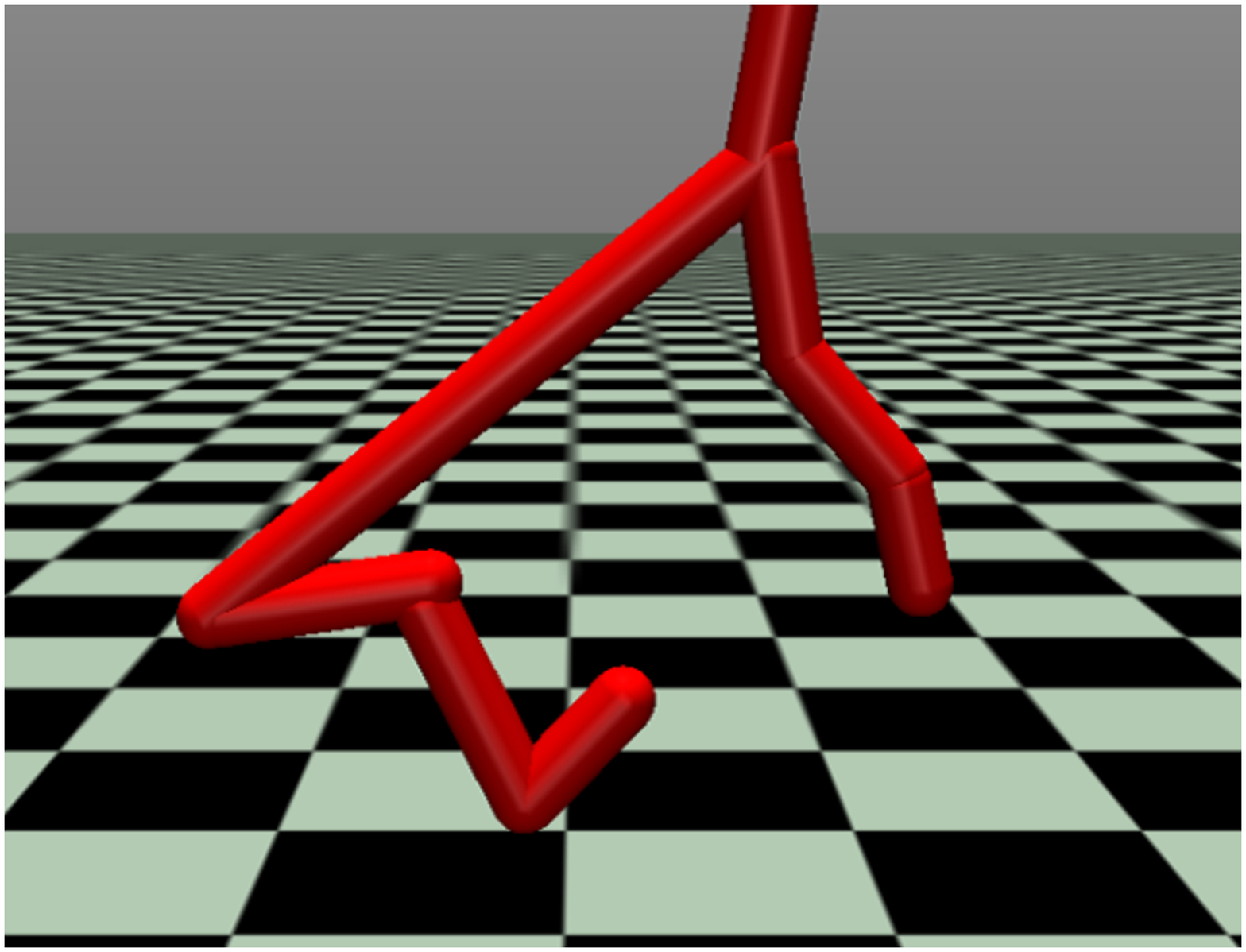}
\includegraphics[width=0.24\hsize]{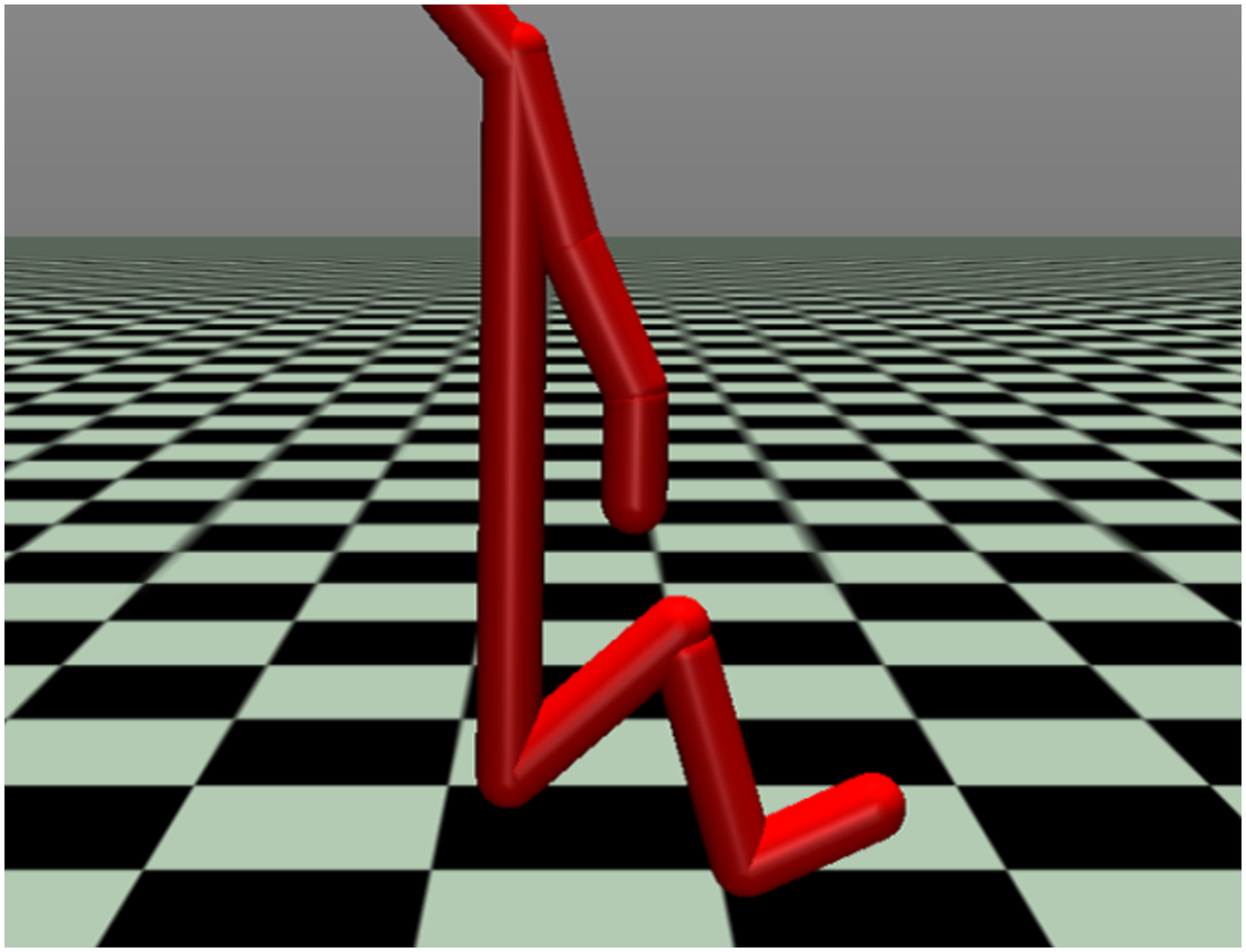}
\includegraphics[width=0.24\hsize]{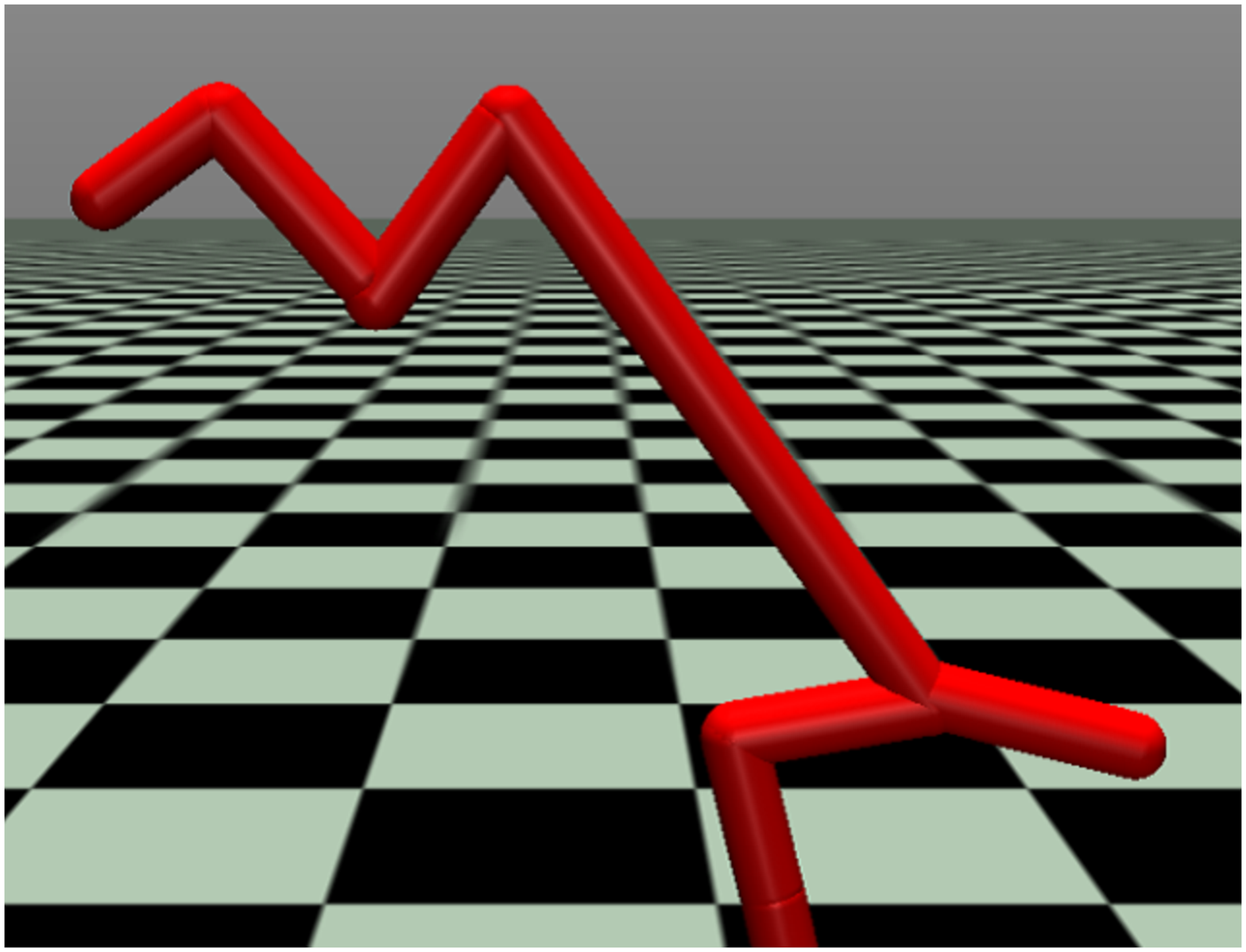}
  \end{center}
  \label{fig:two2}
 \end{minipage}
\caption{Learned options (shown in green when option 1 is taking the control and in red otherwise) in Halfcheetah-cont. Left: the case with an ordinary model parameter setup. Right: the case with a rare model parameters setup.}
\label{fig:three}
\vspace{-15pt}
\end{figure}
%

\section{Related Work}\label{sec:research}
{\bf Robust reinforcement learning (worst case):} One of the main approaches for robust reinforcement learning is to learn policy by minimizing the expected loss in the worst case. 
``Worst case'' is used in two different contexts: 1) the worst case under coherent uncertainty and 2) the worst case under parameter uncertainty. 
Coherent uncertainty is induced by the stochastic nature of MDPs (i.e., stochastic rewards and transition), whereas parameter uncertainty is induced by inaccurate environment models. 
For the worst case under coherent uncertainty, many robust reinforcement learning approaches have been proposed (e.g., ~\cite{coraluppi1997mixed,coraluppi1999risk,heger1994consideration}). 
Also, many robust reinforcement learning methods have been proposed for the worst case under parameter uncertainty~\cite{Bagnell2001SolvingUM,iyengar2005robust,nilim2005robust,pinto2017robust,tamar2014scaling,wiesemann2013robust}. 
These works focus on learning robust ``flat'' policies (i.e., non-hierarchical policies), whereas we focus on learning robust ``option'' policies (i.e., hierarchical policies). 

{\bf Robust reinforcement learning (CVaR):} CVaR has previously been applied to reinforcement learning. 
Boda and Filar~\cite{boda2006time} used a dynamic programming approach to optimize CVaR. 
Morimura et al.~\cite{morimura2010nonparametric} introduce CVaR to the exploration part of the SARSA algorithm. 
Rajeswaran et al.~\cite{Rajeswaran2016EPOptLR} introduced CVaR into model-based Bayesian reinforcement learning. 
Tamar and Chow ~\cite{chow2014algorithms,tamar2015optimizing} presented a policy gradient theorem for CVaR optimizing. 
These research efforts focus on learning robust flat policies, whereas we focus on learning robust options. 


{\bf Learning robust options:} Learning robust options is a relatively recent topic that not much work has addressed. 
Mankowitz et al.~\cite{Mankowitz2018LearningRO} are pioneers of this research topic. 
They proposed a robust option policy iteration to learn an option that minimizes the expected loss in the worst case under parameter uncertainty. 
Frans et al.~\cite{frans2018meta} proposed a method to learn options that maximize the expected loss in the average case\footnote{Although their method is presented as a transfer learning method, their optimization objective is essentially equivalent to the ``soft robust'' objective in the robust reinforcement learning literature~\cite{derman2018soft}. 
Thus, we regard their method as a robust option learning framework.}. 
Both methods consider loss in either the worst or average case. 
Mankowitz's method considers only the loss in the worst case, so the produced options are overly adapted to the worst case and does not work well in the average case. 
On the other hand, Frans's method considers only the loss in the average case, so the produced options work poorly in the unconsidered case. 
In contrast to these methods, our method considers the losses in both the average and worst cases and thus can mitigate the aforementioned problems. 

\section{Conclusion}
In this paper, we proposed a conditional value at risk (CVaR)-based method to learn options so that they 1) keep the expected loss lower than the given threshold (loss tolerance $\zeta$) in the worst case and also 2) decrease the expected loss in the average case as much as possible. 
To achieve this, we extended Chow and Ghavamzadeh's CVaR policy gradient method~\cite{chow2014algorithms} to adapt robust Markov decision processes (robust MDPs) and then applied the extended method to learn robust options. 
For application, we derived a theorem for an option policy gradient for soft robust loss minimization~\cite{derman2018soft}. 
We conducted experiments to evaluate our method in multi-joint robot control tasks. 
Experimental results show that our method produces options that 1) work better in the worst case than the options learned only to minimize the loss in the average case and 2) work better in the average case than the options learned only to minimize the loss in the worst case. 

Although, in general, the model parameter distribution is not necessarily correct and thus needs to be updated to be more precise by reflecting observations retrieved from the real environment~\cite{Rajeswaran2016EPOptLR}, our current method does not consider such model adaptation. 
One interesting direction for future works is to introduce this model adaptation, extending our method to adopt Bayes-Adaptive Markov decision processes~\cite{ghavamzadeh2015bayesian}. 

\clearpage
\bibliographystyle{icml2019}
\bibliography{ijcai19}

\begin{thebibliography}{35}
\providecommand{\natexlab}[1]{#1}
\providecommand{\url}[1]{\texttt{#1}}
\expandafter\ifx\csname urlstyle\endcsname\relax
  \providecommand{\doi}[1]{doi: #1}\else
  \providecommand{\doi}{doi: \begingroup \urlstyle{rm}\Url}\fi

\bibitem[Bacon et~al.(2017)Bacon, Harb, and Precup]{Bacon2017TheOA}
Bacon, P.-L., Harb, J., and Precup, D.
\newblock The option-critic architecture.
\newblock In \emph{Proc. AAAI}, 2017.

\bibitem[Bagnell et~al.(2001)Bagnell, Ng, and Schneider]{Bagnell2001SolvingUM}
Bagnell, J.~A., Ng, A.~Y., and Schneider, J.~G.
\newblock Solving uncertain {Markov} decision processes.
\newblock \emph{Technical Report CMU-RITR-01-25}, 2001.

\bibitem[Bertsekas(1999)]{bertsekas1999nonlinear}
Bertsekas, D.~P.
\newblock \emph{Nonlinear programming}.
\newblock Athena scientific Belmont, 1999.

\bibitem[Boda \& Filar(2006)Boda and Filar]{boda2006time}
Boda, K. and Filar, J.~A.
\newblock Time consistent dynamic risk measures.
\newblock \emph{Mathematical Methods of Operations Research}, 63\penalty0
  (1):\penalty0 169--186, 2006.

\bibitem[Brockman et~al.(2016)Brockman, Cheung, Pettersson, Schneider,
  Schulman, Tang, and Zaremba]{1606.01540}
Brockman, G., Cheung, V., Pettersson, L., Schneider, J., Schulman, J., Tang,
  J., and Zaremba, W.
\newblock {OpenAI Gym}.
\newblock \emph{arXiv:1606.01540 [cs.LG]}, 2016.

\bibitem[Chow \& Ghavamzadeh(2014)Chow and Ghavamzadeh]{chow2014algorithms}
Chow, Y. and Ghavamzadeh, M.
\newblock Algorithms for {CVaR} optimization in {MDPs}.
\newblock In \emph{Proc. NIPS}, pp.\  3509--3517, 2014.

\bibitem[Coraluppi \& Marcus(1999{\natexlab{a}})Coraluppi and
  Marcus]{coraluppi1997mixed}
Coraluppi, S.~P. and Marcus, S.~I.
\newblock Risk-sensitive, minimax, and mixed risk-neutral / minimax control of
  markov decision processes.
\newblock \emph{Stochastic Analysis, Control, Optimization and Applications: A
  Volume in Honor of W.H. Fleming}, pp.\  21--40, 1999{\natexlab{a}}.

\bibitem[Coraluppi \& Marcus(1999{\natexlab{b}})Coraluppi and
  Marcus]{coraluppi1999risk}
Coraluppi, S.~P. and Marcus, S.~I.
\newblock Risk-sensitive and minimax control of discrete-time, finite-state
  markov decision processes.
\newblock \emph{Automatica}, 35\penalty0 (2):\penalty0 301--309,
  1999{\natexlab{b}}.

\bibitem[Derman et~al.(2018)Derman, Mankowitz, Mann, and
  Mannor]{derman2018soft}
Derman, E., Mankowitz, D.~J., Mann, T.~A., and Mannor, S.
\newblock {Soft-Robust Actor-Critic Policy-Gradient}.
\newblock In \emph{Proc. UAI}, 2018.

\bibitem[Frans et~al.(2018)Frans, Ho, Chen, Abbeel, and
  Schulman]{frans2018meta}
Frans, K., Ho, J., Chen, X., Abbeel, P., and Schulman, J.
\newblock {META} {LEARNING} {SHARED} {HIERARCHIES}.
\newblock In \emph{Proc. ICLR}, 2018.

\bibitem[Ghavamzadeh et~al.(2015)Ghavamzadeh, Mannor, Pineau, Tamar,
  et~al.]{ghavamzadeh2015bayesian}
Ghavamzadeh, M., Mannor, S., Pineau, J., Tamar, A., et~al.
\newblock Bayesian reinforcement learning: A survey.
\newblock \emph{Foundations and Trends in Machine Learning}, 8\penalty0
  (5-6):\penalty0 359--483, 2015.

\bibitem[Heger(1994)]{heger1994consideration}
Heger, M.
\newblock Consideration of risk in reinforcement learning.
\newblock In \emph{Proc. ML}, pp.\  105--111. Elsevier, 1994.

\bibitem[Henderson et~al.(2017)Henderson, Chang, Shkurti, Hansen, Meger, and
  Dudek]{Henderson2017BenchmarkEF}
Henderson, P., Chang, W., Shkurti, F., Hansen, J., Meger, D., and Dudek, G.
\newblock Benchmark environments for multitask learning in continuous domains.
\newblock \emph{arXiv:1708.04352 [cs.AI]}, 2017.

\bibitem[Iyengar(2005)]{iyengar2005robust}
Iyengar, G.~N.
\newblock Robust dynamic programming.
\newblock \emph{Mathematics of Operations Research}, 30\penalty0 (2):\penalty0
  257--280, 2005.

\bibitem[James et~al.(2018)James, Wohlhart, Kalakrishnan, Kalashnikov, Irpan,
  Ibarz, Levine, Hadsell, and Bousmalis]{james2018sim}
James, S., Wohlhart, P., Kalakrishnan, M., Kalashnikov, D., Irpan, A., Ibarz,
  J., Levine, S., Hadsell, R., and Bousmalis, K.
\newblock {Sim-to-Real via Sim-to-Sim: Data-efficient Robotic Grasping via
  Randomized-to-Canonical Adaptation Networks}.
\newblock \emph{arXiv:1812.07252 [cs.RO]}, 2018.

\bibitem[Klissarov et~al.(2017)Klissarov, Bacon, Harb, and
  Precup]{klissarov2017learnings}
Klissarov, M., Bacon, P., Harb, J., and Precup, D.
\newblock Learnings options end-to-end for continuous action tasks.
\newblock \emph{ArXiv}, 2017.

\bibitem[Konidaris \& Barto(2009)Konidaris and Barto]{konidaris2009skill}
Konidaris, G. and Barto, A.~G.
\newblock Skill discovery in continuous reinforcement learning domains using
  skill chaining.
\newblock In \emph{Proc. NIPS}, pp.\  1015--1023, 2009.

\bibitem[Kulkarni et~al.(2016)Kulkarni, Narasimhan, Saeedi, and
  Tenenbaum]{kulkarni2016hierarchical}
Kulkarni, T.~D., Narasimhan, K., Saeedi, A., and Tenenbaum, J.
\newblock Hierarchical deep reinforcement learning: Integrating temporal
  abstraction and intrinsic motivation.
\newblock In \emph{Proc. NIPS}, pp.\  3675--3683, 2016.

\bibitem[Mankowitz et~al.(2018)Mankowitz, Mann, Bacon, Precup, and
  Mannor]{Mankowitz2018LearningRO}
Mankowitz, D.~J., Mann, T.~A., Bacon, P.-L., Precup, D., and Mannor, S.
\newblock {Learning Robust Options}.
\newblock In \emph{Proc. AAAI}, 2018.

\bibitem[McGovern \& Barto(2001)McGovern and Barto]{mcgovern2001automatic}
McGovern, A. and Barto, A.~G.
\newblock Automatic discovery of subgoals in reinforcement learning using
  diverse density.
\newblock \emph{Computer Science Department Faculty Publication Series}, 8,
  2001.

\bibitem[Morimura et~al.(2010)Morimura, Sugiyama, Kashima, Hachiya, and
  Tanaka]{morimura2010nonparametric}
Morimura, T., Sugiyama, M., Kashima, H., Hachiya, H., and Tanaka, T.
\newblock Nonparametric return distribution approximation for reinforcement
  learning.
\newblock In \emph{Proc. ICML}, pp.\  799--806, 2010.

\bibitem[Nilim \& El~Ghaoui(2005)Nilim and El~Ghaoui]{nilim2005robust}
Nilim, A. and El~Ghaoui, L.
\newblock Robust control of markov decision processes with uncertain transition
  matrices.
\newblock \emph{Operations Research}, 53\penalty0 (5):\penalty0 780--798, 2005.

\bibitem[Pinto et~al.(2017)Pinto, Davidson, Sukthankar, and
  Gupta]{pinto2017robust}
Pinto, L., Davidson, J., Sukthankar, R., and Gupta, A.
\newblock Robust adversarial reinforcement learning.
\newblock In \emph{Proc. ICML}, pp.\  2817--2826, 2017.

\bibitem[Rajeswaran et~al.(2017)Rajeswaran, Ghotra, Levine, and
  Ravindran]{Rajeswaran2016EPOptLR}
Rajeswaran, A., Ghotra, S., Levine, S., and Ravindran, B.
\newblock {EPOpt: Learning Robust Neural Network Policies Using Model
  Ensembles}.
\newblock In \emph{Proc. ICLR}, 2017.

\bibitem[Rockafellar \& Uryasev(2002)Rockafellar and
  Uryasev]{rockafellar2002conditional}
Rockafellar, R.~T. and Uryasev, S.
\newblock Conditional value-at-risk for general loss distributions.
\newblock \emph{Journal of banking \& finance}, 26\penalty0 (7):\penalty0
  1443--1471, 2002.

\bibitem[Schulman et~al.(2016)Schulman, Moritz, Levine, Jordan, and
  Abbeel]{schulman2015high}
Schulman, J., Moritz, P., Levine, S., Jordan, M., and Abbeel, P.
\newblock High-dimensional continuous control using generalized advantage
  estimation.
\newblock In \emph{Proc. ICLR}, 2016.

\bibitem[Schulman et~al.(2017)Schulman, Wolski, Dhariwal, Radford, and
  Klimov]{schulman2017proximal}
Schulman, J., Wolski, F., Dhariwal, P., Radford, A., and Klimov, O.
\newblock Proximal policy optimization algorithms.
\newblock \emph{arXiv:1707.06347 [cs.LG]}, 2017.

\bibitem[Silver \& Ciosek(2012)Silver and Ciosek]{Silver2012CompositionalPU}
Silver, D. and Ciosek, K.
\newblock Compositional planning using optimal option models.
\newblock In \emph{Proc. ICML}, 2012.

\bibitem[Stolle \& Precup(2002)Stolle and Precup]{stolle2002learning}
Stolle, M. and Precup, D.
\newblock Learning options in reinforcement learning.
\newblock In \emph{Proc. SARA}, pp.\  212--223. Springer, 2002.

\bibitem[Sutton et~al.(1999)Sutton, Precup, and Singh]{sutton1999between}
Sutton, R.~S., Precup, D., and Singh, S.
\newblock Between mdps and semi-mdps: A framework for temporal abstraction in
  reinforcement learning.
\newblock \emph{Artificial intelligence}, 112\penalty0 (1-2):\penalty0
  181--211, 1999.

\bibitem[Tamar et~al.(2014)Tamar, Mannor, and Xu]{tamar2014scaling}
Tamar, A., Mannor, S., and Xu, H.
\newblock Scaling up robust mdps using function approximation.
\newblock In \emph{Proc. ICML}, pp.\  181--189, 2014.

\bibitem[Tamar et~al.(2015)Tamar, Glassner, and Mannor]{tamar2015optimizing}
Tamar, A., Glassner, Y., and Mannor, S.
\newblock {Optimizing the CVaR via Sampling}.
\newblock In \emph{Proc. AAAI}, pp.\  2993--2999, 2015.

\bibitem[Wawrzy{\'n}ski(2009)]{wawrzynski2009real}
Wawrzy{\'n}ski, P.
\newblock Real-time reinforcement learning by sequential actor--critics and
  experience replay.
\newblock \emph{Neural Networks}, 22\penalty0 (10):\penalty0 1484--1497, 2009.

\bibitem[Wiesemann et~al.(2013)Wiesemann, Kuhn, and
  Rustem]{wiesemann2013robust}
Wiesemann, W., Kuhn, D., and Rustem, B.
\newblock Robust markov decision processes.
\newblock \emph{Mathematics of Operations Research}, 38\penalty0 (1):\penalty0
  153--183, 2013.

\bibitem[Xu \& Mannor(2010)Xu and Mannor]{xu2010distributionally}
Xu, H. and Mannor, S.
\newblock Distributionally robust markov decision processes.
\newblock In \emph{Proc. NIPS}, pp.\  2505--2513, 2010.

\end{thebibliography}

\clearpage
\section*{Appendices}

\section*{Algorithmic Representation of EOOpt-$\epsilon$}
The outline of EOOpt-$\epsilon$ is summarized in Algorithm~\ref{alg1:eoopt}. 
Here $\mathcal{C}(\tau_k)=\sum_{t=0}^{T}\gamma^t c_t^{(k)}$ is a loss in the trajectory $k$ and $c_{t}^{(k)}$ is a cost at the $t$-th turn in $k$. PPOCUpdateParams returns the updated parameters on the basis of given trajectories and parameters. 
This functions is identical to the advantage estimation and parameter update part of Algorithm 1 in Klissarov et al.~\cite{klissarov2017learnings}. 
\begin{algorithm}[h!]
\caption{EOOpt-$\epsilon$}
\label{alg1:eoopt}
\begin{algorithmic}[1]
\REQUIRE $\theta_{\pi_\omega, 0}$, $\theta_{\beta_\omega, 0}$, $\theta_{\pi_\Omega, 0}$, $N_{iter}, N_{epi}, \epsilon, \Omega$

\FOR{iteration $i = 0, 1, ..., N_{iter}$}
 \FOR{$k = 0, 1, ..., N_{epi}$}
  \STATE sample model parameters $p_k \sim \mathbb{P}$
  \STATE sample a trajectory $\tau_k = \left\{s_t, \omega_t, a_t, c_t, s_{t+1}\right\}_{t=0}^{T-1}$ from parameterized MDPs $\left\langle S, A, \mathbb{C}, \gamma, T_{p_k}, \mathbb{P}_{0}\right\rangle$ using option policies ($\pi_\omega$, $\beta_\omega$, and $\pi_\Omega$) with $\theta_{\pi_\omega, i}$, $\theta_{\beta_\omega, i}$, and $\theta_{\pi_\Omega, i}$. 
 \ENDFOR
  \STATE compute $Q_\epsilon =$ upper $\epsilon$ percentile of $\{ \mathcal{C}(\tau_k) \}_{k=0}^{N_{epi}}$
  \STATE select sub-set $\mathbb{T}= \{\tau_k | \mathcal{C}(\tau_k) \geq Q_\epsilon \}$
  \STATE $\theta_{\pi_\omega, i+1}$, $\theta_{\beta_\omega, i+1}$, $\theta_{\pi_\Omega, i+1} \leftarrow \text{PPOCUpdateParams}(\mathbb{T}, \theta_{\pi_\omega, i}$, $\theta_{\beta_\omega, i}$, $\theta_{\pi_\Omega, i})$
\ENDFOR
\end{algorithmic}
\end{algorithm}

\section*{Decompositionability of $\mathbb{E}_{\mathcal{C}, p}\left[ \max(0, \mathcal{C} - v) \right]$ with Respect to $p$}
\begin{corollary}[{Decompositionability of $\mathbb{E}_{\mathcal{C}, p}\left[ \max(0, \mathcal{C} - v) \right]$ with Respect to $p$}]
\begin{flalign}
\mathbb{E}_{\mathcal{C}, p} \left[ \max(0, \mathcal{C} - v) \right] = \sum_{p} \mathbb{P}(p) \mathbb{E}_{\mathcal{C}} \left[ \max(0, \mathcal{C} - v) \mid p \right]. \nonumber
\end{flalign}
\label{th:cvar_dec}\end{corollary}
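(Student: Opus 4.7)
The target equality is an instance of the law of total expectation (tower property), specialized to the deterministic measurable function $g(c) = \max(0, c - v)$. My plan is to mirror the decomposition the authors already use for the soft robust loss in Eq.~\ref{eq:softrobust}, where $\mathbb{E}_{\mathcal{C}, p}[\mathcal{C}]$ is rewritten as $\sum_p \mathbb{P}(p)\mathbb{E}_{\mathcal{C}}[\mathcal{C} \mid p]$. The only change is that the argument of the expectation is replaced by a measurable function of the loss random variable rather than the random variable itself, and the joint distribution over $(\mathcal{C}, p)$ has exactly the same factorization as before.

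The key steps, in order, are: (i) observe that for any fixed $v \in \mathbb{R}$, the map $c \mapsto \max(0, c - v)$ is continuous and hence measurable, so $\max(0, \mathcal{C} - v)$ is a well-defined random variable on the same probability space as $\mathcal{C}$; (ii) apply the tower property while conditioning on $p$, yielding $\mathbb{E}_{\mathcal{C}, p}[\max(0, \mathcal{C} - v)] = \mathbb{E}_{p}\bigl[\mathbb{E}_{\mathcal{C}}[\max(0, \mathcal{C} - v) \mid p]\bigr]$; and (iii) expand the outer expectation as the sum $\sum_p \mathbb{P}(p)(\cdot)$ using the discrete model parameter distribution indicated in the target equality, which produces the claimed right-hand side.

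There is essentially no hard step: the result is immediate from basic probability once the notation is unpacked. The only technical caveat is integrability, which is automatic in this setting because the loss $\mathcal{C} = \sum_{t=0}^{T} \gamma^{t} c_{t}$ is a finite-horizon discounted sum of costs generated by the parameterized MDP, so under the standard boundedness assumptions implicitly used in the paper, both $\mathcal{C}$ and $\max(0, \mathcal{C} - v)$ are integrable, and Fubini/Tonelli justifies the interchange. The proof therefore amounts to little more than pointing out that the authors' soft robust decomposition in Eq.~\ref{eq:softrobust} extends verbatim to any measurable function of $\mathcal{C}$, and then specializing to $g(\mathcal{C}) = \max(0, \mathcal{C} - v)$.
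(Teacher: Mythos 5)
Your proposal is correct and matches the paper's proof in substance: the paper likewise factors the joint distribution $\mathbb{P}(\mathcal{C},p)=\mathbb{P}(p)\,\mathbb{P}(\mathcal{C}\mid p)$ and interchanges the sums, which is exactly the tower-property argument you describe applied to $g(\mathcal{C})=\max(0,\mathcal{C}-v)$. Your added remarks on measurability and integrability are harmless extras; no gap.
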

\begin{proof}
By letting $\mathbb{P}\left(\mathcal{C}, p\right)$ be a joint distribution of the loss and the model parameter and $\mathbb{P}\left(\mathcal{C} \mid p\right)$ be the conditional distribution of the loss, $\mathbb{E}_{\mathcal{C}, p}\left[ \max(0, \mathcal{C} - v) \right]$ can be transformed as 
\begin{eqnarray}
  \mathbb{E}_{\mathcal{C}, p}\left[ \max(0, \mathcal{C} - v) \right] &=& \sum_{\mathcal{C}} \sum_{p} \mathbb{P}\left(\mathcal{C}, p\right) \max(0, \mathcal{C} - v) \nonumber\\
  &=& \sum_{p} \mathbb{P}(p) \sum_{\mathcal{C}} \mathbb{P}\left(\mathcal{C} \mid p\right) \max(0, \mathcal{C} - v) \nonumber\\
  &=& \sum_{p} \mathbb{P}(p) \mathbb{E}_{\mathcal{C}}\left[ \max(0, \mathcal{C} - v) \mid p \right]. 
\end{eqnarray}

\end{proof}

\section*{Derivation of Option Policy Gradient Theorems for Soft Robust Loss}
Here, we derive option policy gradient theorems for soft robust loss $\sum_p \mathbb{P}(p) \mathbb{E}_{\mathcal{C}} \left[ \mathcal{C} \mid p \right]$, where $\mathbb{E}_{\mathcal{C}} \left[ \mathcal{C} ~|~ p \right]$ is the expected loss on the general class of parameterized MDPs $\langle S, A, \mathbb{C}, \gamma, T_p, \mathbb{P}_{0} \rangle$\footnote{
For derivation on the general case,  we consider the general class of parameterized MDPs, to which the parameterized MDP in Section~\ref{sec:problem} and the augmented parameterized MDP in Section~\ref{sec:grad_wrt_op} belong. 
}, in which the transition probability is parameterized by $p \in P$. 
In addition, $\mathbb{P}(p)$ is a distribution of $p$. 
For convenience, in the latter part, we use $\mathbb{P}(s'\in S \mid s \in S, a \in A, p)$ to refer to the parameterized transition function (i.e., $T_p$). 
We make the rectangularity assumption on $P$ and $\mathbb{P}(p)$. That is, $P$ is assumed to be structured as a Cartesian product $\bigotimes_{s \in S}P_s$, and $\mathbb{P}$ is also assumed to be structured as a Cartesian product $\bigotimes_{s \in S}\mathbb{P}_{s}(p_{s} \in P_{s})$ \footnote{More explicitly, $P$ and $\mathbb{P}(p)$ are defined as $\bigotimes_{t \in 0}^{\infty} \bigotimes_{s \in S} P_{s,t}$ and $\bigotimes_{t \in 0}^{\infty} \bigotimes_{s \in S} \mathbb{P}_{s, t}(p_{s, t} \in P_{s, t})$, where $P_{s, t}=P_{s}$ and ${P}_{s, t}(p_{s} \in P_{s, t}) = {P}_{s}(p_{s} \in P_{s})$, respectively. }. 

To prepare for the derivation, we define functions and variables. 
First, considering the definition of value functions in Bacon et al.~\cite{Bacon2017TheOA}, we define value functions in (the general class of) the parameterized MDP: 
\begin{eqnarray}
  Q_\Omega(s, \omega, p_s) &=& \sum_{a} \pi_{\omega}(a \mid s) Q_\omega(s, \omega, a, p_s), \label{eq:q_Omega}\\
  Q_\omega(s, \omega, a, p_s) &=& \mathbb{C}(s, a) + \gamma \sum_{s'} \mathbb{P}\left(s' \mid s, a, p_s\right) Q_{\beta}\left(s', \omega, p_s\right), \label{eq:q_omega}\\
  Q_{\beta}(s, \omega, p_s) &=& (1-\beta_\omega(s))Q_\Omega(s, \omega, p_s) + \beta_\omega(s) V_{\Omega}(s, p_s), \label{eq:q_beta}\\
  V_\Omega(s, p_s) &=& \sum_{\omega} \pi_{\Omega}(\omega \mid s) Q_\Omega(s, \omega, p_s). \label{eq:v_Omega}
\end{eqnarray}
Note that by fixing $p_s$ to a constant value, these value functions can be seen as identical to those in Bacon et al.~\cite{Bacon2017TheOA}. 

Second, for convenience, we define the discounted probabilities to $(s_{t+1}, \omega_{t+1})$ from $(s_{t}, \omega_{t})$:
\begin{eqnarray}
\mathbb{P}^{(1)}_{\gamma}(s_{t+1}, \omega_{t+1} \mid s_t, \omega_t) &=& \sum_{a} \pi_\omega(a \mid s_{t}) \gamma \sum_{p_{s_t}} \mathbb{P}_{s_t}(p_{s_t}) \mathbb{P}\left(s_{t+1} \mid s_t, a, p_{s_t}\right) \nonumber\\
& & \cdot \left( (1 - \beta_\omega(s_{t+1})) \textbf{1}(\omega_{t+1}=\omega_{t}) + \beta_\omega(s_{t+1})  \pi_\Omega(\omega_{t+1} | s_{t+1}) \right)\label{eq:arg_proc1}, \\
\mathbb{P}^{(k)}_{\gamma}(s_{t+k}, \omega_{t+k} \mid s_t, \omega_t) &=& \sum_{s_{t+1}} \sum_{\omega_{t+1}} \mathbb{P}_{\gamma}^{(1)}(s_{t+1}, \omega_{t+1} \mid s_t, \omega_t) \mathbb{P}^{(k-1)}_{\gamma}(s_{t+k}, \omega_{t+k} \mid s_{t+1}, \omega_{t+1}). \label{eq:arg_proc}
\end{eqnarray}
Similar discounted probabilities are also introduced in Bacon et al.~\cite{Bacon2017TheOA}. 
Ours (Eq.~\ref{eq:arg_proc1} and Eq.~\ref{eq:arg_proc}) are different from theirs in that the transition function is averaged over the model parameter (i.e., $\sum_{p_{s_t}} \mathbb{P}_{s_t}(p_{s_t}) \mathbb{P}(s_{t+1} \mid s_t, a, p_{s_t}) = \mathbb{E}_{p_s}\left[\mathbb{P}(s' \mid s, a, p_{s})\right]$). 

In addition, by using the rectangularity assumption on $\mathbb{P}(p)$, we introduce the equation which represents the state-wise independence of the model parameter distribution\footnote{This state-wise independence is essentially the same as that  introduced in the proof of Proposition 3.2 in Derman et al.~\cite{derman2018soft}. }: 
\begin{flalign}
& \sum_{p_{s_t}  \in P_{s_t} } \mathbb{P}_{s_t}(p_{s_t}) \sum_{s_{t+1}} \mathbb{P}\left(\left. s_{t+1} ~\right| s_{t}, a_t, p_{s_t}\right) Q_\Omega(s_{t+1}, \omega_t, p_{s_t}) \nonumber\\
& ~~~~ = \sum_{s_{t+1}} \left( \sum_{p_{s_t}} \mathbb{P}_{s_t}(p_{s_t}) \mathbb{P}\left( \left. s_{t+1} ~\right| s_{t}, a_{t}, p_{s_t}\right) \right) \left( \sum_{p_{s_{t+1}} \in P_{s_{t+1}}} \mathbb{P}_{s_{t+1}} \left(p_{s_{t+1}}\right) Q_\Omega\left(s_{t+1}, \omega_t, p_{s_{t+1}}\right) \right). \label{asm:statewise_ind_mp}
\end{flalign}

Now, we start the derivation of option policy gradient theorems for soft robust loss. 
We derive the gradient theorem in a similar manner to Bacon et al.~\cite{Bacon2017TheOA}. 
\begin{theorem}[Policy over options gradient theorem for soft robust loss] Given a set of options with a stochastic policy over options $\pi_\Omega$ that are differentiable in their parameters $\theta_{\pi_\Omega}$, the gradient of the soft robust loss with respect to $\theta_{\pi_\Omega}$ and initial condition $(s_0, \omega_0)$ is: \begin{flalign}
\sum_{p_s} \mathbb{P}_s(p_s) \frac{\partial Q_\Omega(s, \omega, p_s)}{\partial \theta_{\pi_\Omega}} = \sum_{s} \overline{d}_{\Omega}(s \mid s_0, \omega_0) \sum_{\omega} \frac{\partial \pi_\Omega(\omega | s)}{\partial \theta_{\pi_\Omega}} \overline{Q}_\Omega(s, \omega), \nonumber
\end{flalign}
where $\overline{d}_{\Omega}(s \mid s_0, \omega_0)$ is a discounted weighting of option pairs along trajectories from $(s_0, \omega_0)$: $\overline{d}_{\Omega}(s \mid s_0, \omega_0) = \sum_{k=0}^{\infty} \mathbb{P}^{(k)}_{\gamma}(s \mid s_0, \omega_0)$, and $\overline{Q}_\Omega(s, \omega)$ is the average option value function: $\overline{Q}_\Omega(s, \omega) = \sum_{p_s} \mathbb{P}_{s}(p_s) Q_\Omega(s, \omega, p_s)$. 
By letting $\mathbb{E}_{p_s}\left[\mathbb{P}(s' \mid s, a, p_{s})\right]$ be a transition probability, the trajectories can be regarded as ones generated from the general class of the average parameterized MDPs $\left\langle S, A, \mathbb{C}, \gamma, \mathbb{E}_{p_{s}}\left[\mathbb{P}(s' \mid s, a, p_{s})\right], \mathbb{P}_{0} \right\rangle$. 
\label{th:policy_over_opt_pg}\end{theorem}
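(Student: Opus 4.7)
The plan is to follow the inductive derivation used by Bacon et al.~\cite{Bacon2017TheOA} for the standard option-critic framework, but lifted to the robust setting by averaging over the model parameter distribution at the appropriate moment. Since $\mathbb{P}_s(p_s)$ does not depend on $\theta_{\pi_\Omega}$, the left-hand side equals $\partial_{\theta_{\pi_\Omega}} \overline{Q}_\Omega(s,\omega)$, so it suffices to derive a Bellman-style recursion for this averaged gradient and then unroll it along trajectories from $(s_0,\omega_0)$.

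First, I would differentiate Eq.~\ref{eq:q_Omega} with respect to $\theta_{\pi_\Omega}$: since $\pi_\omega$ does not depend on this parameter, only $Q_\omega$ contributes, and substituting Eq.~\ref{eq:q_omega} reduces the calculation to $\gamma$ times a transition-weighted expectation of $\partial_{\theta_{\pi_\Omega}} Q_\beta(s',\omega,p_s)$. Using Eq.~\ref{eq:q_beta} splits this into two branches: with weight $1-\beta_\omega(s')$ the option persists and $\partial_{\theta_{\pi_\Omega}} Q_\Omega(s',\omega,p_s)$ reappears recursively; with weight $\beta_\omega(s')$ the option terminates and we must differentiate $V_\Omega(s',p_s)$ via Eq.~\ref{eq:v_Omega}. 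The latter yields the only non-recursive contribution, $\sum_{\omega'} \bigl[\partial_{\theta_{\pi_\Omega}} \pi_\Omega(\omega'\mid s')\bigr]\, Q_\Omega(s',\omega',p_s)$, plus a further recursive piece weighted by $\pi_\Omega(\omega'\mid s')$.

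Next, I would take the expectation over $p_s \sim \mathbb{P}_s$ of both sides. The crucial step is pushing this average inside the nested sums using the rectangularity assumption (Eq.~\ref{asm:statewise_ind_mp}): the $p_s$ appearing in $\mathbb{P}(s'\mid s,a,p_s)$ separates from the $p_{s'}$ that parametrises future values at $s'$, so the transition kernel averages to $\mathbb{E}_{p_s}[\mathbb{P}(s'\mid s,a,p_s)]$ while $\mathbb{P}_{s'}(p_{s'})$ converts each $Q_\Omega(s',\omega',p_{s'})$ into $\overline{Q}_\Omega(s',\omega')$. Collecting the weights from the two branches, the $(1-\beta_\omega)\mathbf{1}(\omega'=\omega)+\beta_\omega \pi_\Omega(\omega'\mid s')$ combination produces exactly the one-step kernel $\mathbb{P}_\gamma^{(1)}(s',\omega'\mid s,\omega)$ of Eq.~\ref{eq:arg_proc1}, so the averaged recursion becomes
\[
\partial_{\theta_{\pi_\Omega}} \overline{Q}_\Omega(s,\omega) = \sum_{s',\omega'} \mathbb{P}_\gamma^{(1)}(s',\omega'\mid s,\omega)\, \partial_{\theta_{\pi_\Omega}} \overline{Q}_\Omega(s',\omega') + h(s,\omega),
\]
where $h(s,\omega)$ aggregates the termination-induced one-step contribution $\gamma \sum_a \pi_\omega(a\mid s)\, \mathbb{E}_{p_s}[\mathbb{P}(s'\mid s,a,p_s)]\, \beta_\omega(s') \sum_{\omega'} \bigl[\partial_{\theta_{\pi_\Omega}} \pi_\Omega(\omega'\mid s')\bigr]\, \overline{Q}_\Omega(s',\omega')$.

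Finally, iterating this fixed-point equation $k$ times, observing that the kernel composes into $\mathbb{P}_\gamma^{(k)}$ by Eq.~\ref{eq:arg_proc}, and summing yields a geometric series that collapses into $\overline{d}_\Omega(s\mid s_0,\omega_0)=\sum_{k\geq 0}\mathbb{P}_\gamma^{(k)}(s\mid s_0,\omega_0)$, giving the closed form in the theorem. I expect the main obstacle to be the bookkeeping of the averaging step: I need to verify that the rectangularity assumption lets the $p_s$-expectation pass through the transition kernel \emph{and} simultaneously generates the $(1-\beta_\omega) + \beta_\omega\,\pi_\Omega$ weights that precisely match $\mathbb{P}_\gamma^{(1)}$, so that the recursive and non-recursive pieces separate cleanly without cross-contamination between different states' parameter samples.
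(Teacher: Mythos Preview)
Your proposal is correct and follows essentially the same route as the paper: differentiate $Q_\Omega$ through $Q_\beta$ and $V_\Omega$ to obtain a recursion with a non-recursive term carrying $\partial_{\theta_{\pi_\Omega}}\pi_\Omega$, average over $p_s$ using the rectangularity identity (Eq.~\ref{asm:statewise_ind_mp}) to decouple the transition kernel from the next-state value and produce $\overline{Q}_\Omega$, recognise the resulting one-step weight as $\mathbb{P}_\gamma^{(1)}$, and then unroll via Eq.~\ref{eq:arg_proc} to obtain $\overline{d}_\Omega$. The paper's proof is exactly this sequence of steps, so there is nothing to add.
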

\begin{proof}
The gradient of Eq.~\ref{eq:q_Omega} and Eq.~\ref{eq:q_beta} with respect to $\theta_{\pi_\Omega}$ can be transformed as 
\begin{flalign}
& \frac{\partial Q_\Omega(s, \omega, p_s)}{\partial \theta_{\pi_{\Omega}}} = \sum_a \pi_\omega(a \mid s) \sum_{s'} \gamma \mathbb{P}(s' \mid s, a, p_s) \frac{\partial Q_\beta(\omega, s', p_s)}{\partial \theta_{\pi_\Omega}}, \label{eq:del_q_Omega} \\
& \frac{\partial Q_\beta(s, \omega, p_s)}{\partial \theta_{\pi_\Omega}} = (1 - \beta_\omega(s')) \frac{\partial Q_\Omega(s', \omega, p_s)}{\partial \theta_{\pi_\Omega}} + \beta_\beta(s') \frac{\partial V_\Omega(s', p_s)}{\partial \theta_{\pi_\Omega}} \nonumber\\
& = (1 - \beta_\omega(s')) \frac{\partial Q_\Omega(s', \omega, p_s)}{\partial \theta_{\pi_\Omega}} + \beta_\omega(s') \sum_{\omega'} \frac{\partial \pi_\Omega(\omega' | s')}{\partial \theta_{\pi_\Omega}} Q_\Omega(s', \omega', p_s) \nonumber\\
& ~~~~ + \beta_\omega(s') \sum_{\omega'} \pi_\Omega(\omega' | s') \frac{\partial Q_\Omega(s', \omega', p_s)}{\partial \theta_{\pi_\Omega}} \nonumber\\
& = \beta_\omega(s') \sum_{\omega'} \frac{\partial \pi_\Omega(\omega' | s')}{\partial \theta_{\pi_\Omega}} Q_\Omega(s', \omega', p_s) \nonumber\\ 
& ~~~~ + \sum_{\omega'} \left( (1 - \beta_\omega(s')) \bf{1}(\omega'=\omega) + \beta_\omega(s')  \pi_\Omega(\omega' | s') \right) \frac{\partial Q_\Omega(s', \omega', p_s)}{\partial \theta_{\pi_\Omega}}. \label{eq:del_q_omega}
\end{flalign}

By substituting Eq.~\ref{eq:del_q_omega} into Eq.~\ref{eq:del_q_Omega}, a recursive expression of the gradient of the value functions is acquired as 
\begin{flalign}
& \frac{\partial Q_\Omega(s, \omega, p_s)}{\partial \theta_{\pi_\Omega}} = \sum_a \pi_\omega(a \mid s) \sum_{s'} \gamma \mathbb{P}(s' \mid s, a, p_s)  \beta_\omega(s') \sum_{\omega'} \frac{\partial \pi_\Omega(\omega' | s')}{\partial \theta_{\partial \theta_{\pi_\Omega}}} Q_\Omega(s', \omega', p_s) \nonumber\\
& ~~~~ + \sum_{s'} \sum_{\omega'} \sum_a \pi_\omega(a \mid s) \gamma \mathbb{P}(s' \mid s, a, p_s)  \left( (1 - \beta_\omega(s')) \bf{1}(\omega'=\omega) + \beta_\omega(s')  \pi_\Omega(\omega' | s') \right) \frac{\partial Q_\Omega(s', \omega', p_s)}{\partial \theta_{\pi_\Omega}}\label{eq:ie_grad_Omega_wrt_Omega}. 
\end{flalign}

By using Eq.~\ref{asm:statewise_ind_mp}, the gradient of soft robust loss style value functions can also be recursively expressed as
\begin{flalign}
& \sum_{p_s} \mathbb{P}_s(p_s) \frac{\partial Q_\Omega(s, \omega, p_s)}{\partial \theta_{\pi_\Omega}} \nonumber\\ 
& = \sum_{p_s} \mathbb{P}_s(p_s) \sum_a \pi_\omega(a \mid s) \sum_{s'} \gamma \mathbb{P}(s' \mid s, a, p_s)  \beta_\omega(s') \sum_{\omega'} \frac{\partial \pi_\Omega(\omega' | s')}{\partial \theta_{\pi_\Omega}} Q_\Omega(s', \omega', p_s) \nonumber\\
& ~~~~ + \sum_{p_s} \mathbb{P}_s(p_s) \sum_{s'} \sum_{\omega'} \sum_a \pi_\omega(a \mid s) \gamma \mathbb{P}(s' \mid s, a, p_s) \left( (1 - \beta_\omega(s')) \textbf{1}(\omega'=\omega) + \beta_\omega(s')  \pi_\Omega(\omega' | s') \right) \nonumber\\ 
& ~~~~ \cdot \frac{\partial Q_\Omega(s', \omega', p_s)}{\partial \theta_{\pi_\Omega}} \nonumber\\
& = \sum_a \pi_\omega(a \mid s) \sum_{s'} \gamma \sum_{p_s} \mathbb{P}_s(p_s) \mathbb{P}(s' \mid s, a, p_s)  \beta_\omega(s') \sum_{\omega'} \frac{\partial \pi_\Omega(\omega' | s')}{\partial \theta_{\pi_\Omega}} \overline{Q}_\Omega(s', \omega') \nonumber\\
& ~~~~ + \sum_{s'} \sum_{\omega'} \sum_a \pi_\omega(a \mid s) \gamma \sum_{p_s} \mathbb{P}_s(p_s) \mathbb{P}(s' \mid s, a, p_s) \left( (1 - \beta_\omega(s')) \textbf{1}(\omega'=\omega) + \beta_\omega(s') \pi_\Omega(\omega' | s') \right) \nonumber\\
& ~~~~ \cdot \sum_{p_{s'}} \mathbb{P}_{s'}(p_{s'}) \frac{\partial Q_\Omega(s', \omega', p_{s'})}{\partial \theta_{\pi_\Omega}}. \label{eq:grad_pol_over_options1}
\end{flalign}

By using Eq.~\ref{eq:arg_proc1} and Eq.~\ref{eq:arg_proc}, Eq.~\ref{eq:grad_pol_over_options1} can be transformed as 
\begin{flalign}
& \sum_{p_s} \mathbb{P}_s(p_s) \frac{\partial Q_\Omega(s, \omega, p_s)}{\partial \theta_{\pi_\Omega}} =  \sum_a \pi_\omega(a \mid s) \sum_{s'} \gamma \sum_{p_s} \mathbb{P}_s(p_s) \mathbb{P}(s' \mid s, a, p_s)  \beta_\omega(s') \sum_{\omega'} \frac{\partial \pi_\Omega(\omega' | s')}{\partial \theta_{\pi_\Omega}} \overline{Q}_\Omega(s', \omega') \nonumber\\ 
& ~~~~ + \sum_{s'} \sum_{\omega'} \mathbb{P}^{(1)}_{\gamma}(s', \omega' \mid s, \omega) \sum_{p_{s'}} \mathbb{P}_{s'}(p_{s'}) \frac{\partial Q_\Omega(s', \omega', p_{s'})}{\partial \theta_{\pi_\Omega}} \nonumber\\
& = \sum_{k=0}^{\infty} \sum_{s''} \underbrace{\sum_{s'} \sum_{\omega'} \mathbb{P}_{\gamma}^{(k)}(s', \omega' \mid s, \omega) \sum_a \pi_{\omega'}(a \mid s') \gamma \sum_{p_{s'}} \mathbb{P}_{s'}(p_{s'}) \mathbb{P}(s'' \mid s', a, p_{s'})  \beta_{\omega'}(s'')}_{\mathbb{P}^{(k)}_\gamma(s'' \mid s, \omega)} \nonumber\\ 
& ~~~~ \cdot \sum_{\omega''} \frac{\partial \pi_\Omega(\omega'' | s'')}{\partial \theta_{\pi_\Omega}} \overline{Q}_\Omega(s'', \omega''). 
\end{flalign}

The gradient of the expected discounted soft robust loss with respect to $\theta_{\pi_\Omega}$ is then 
\begin{eqnarray}
\sum_{p_s} \mathbb{P}_s(p_s) \frac{\partial Q_\Omega(s, \omega, p_s)}{\partial \theta_{\pi_\Omega}} &=& \sum_{k=0}^{\infty} \sum_{s} \mathbb{P}^{(k)}_\gamma(s \mid s_0, \omega_0) \sum_{\omega} \frac{\partial \pi_\Omega(\omega | s)}{\partial \theta_{\pi_\Omega}} \overline{Q}_\Omega(s, \omega) \nonumber\\
 &=& \sum_{s} \overline{d}_{\Omega}(s \mid s_0, \omega_0) \sum_{\omega} \frac{\partial \pi_\Omega(\omega | s)}{\partial \theta_{\pi_\Omega}} \overline{Q}_\Omega(s, \omega). 
\end{eqnarray}
\end{proof}

\begin{theorem}[Intra-option policy gradient theorem for soft robust loss] Given a set of options with stochastic intra-option policies $\pi_\omega$ that are differentiable in their parameters $\theta_{\pi_\omega}$, the gradient of soft robust loss with respect to $\theta_{\pi_\omega}$ and initial condition $(s_0, \omega_0)$ is: 
\begin{equation}
\sum_{p_{s}} \mathbb{P}_s(p_s) \frac{\partial Q_\Omega(s, \omega, p_s)}{\partial \theta_{\pi_\omega}} = \sum_{s} \sum_{\omega} \overline{d}_\Omega(s, \omega \mid s_0, \omega_0) \sum_a \frac{\partial \pi_{\omega}(a \mid s)}{\partial \theta_{\pi_\omega}} \overline{Q}_\omega(s, \omega, a), 
\end{equation}
where $\overline{d}_{\Omega}(s, \omega \mid s_0, \omega_0)$ is a discounted weighting of state option pairs along trajectories from $(s_0, \omega_0)$: $\overline{d}_{\Omega}(s, \omega \mid s_0, \omega_0) = \sum_{t=0}^{\infty} \gamma^{t} \mathbb{P}^{(k)}_{\gamma}(s, \omega \mid s_0, \omega_0)$, and $\overline{Q}_\omega(s, \omega, a)$ is the average value function of actions in the context of a state-option pair over the model parameter distribution: $\overline{Q}_\omega(s, \omega, a) = \sum_{p_s} \mathbb{P}_s(p_s) Q_\omega(s, \omega, a, p_s)$. By letting $\mathbb{E}_{p_s}\left[\mathbb{P}(s' \mid s, a, p_s)\right]$ be a transition probability, the trajectories can be regarded as ones generated from the general class of the average parameterized MDPs $\langle S, A, \mathbb{C}, \gamma, \mathbb{E}_{p_{s}}\left[\mathbb{P}(s' \mid s, a, p_{s})\right], \mathbb{P}_{0} \rangle$. \label{th:intra_option_pg}\end{theorem}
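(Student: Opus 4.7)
The plan is to mirror the derivation used for Theorem~\ref{th:policy_over_opt_pg}, only with $\theta_{\pi_\omega}$ replacing $\theta_{\pi_\Omega}$ as the differentiation variable, so the ``direct'' contribution appears at the intra-option-policy layer (Eq.~\ref{eq:q_Omega}) instead of the policy-over-options layer. First I would differentiate Eq.~\ref{eq:q_Omega} with respect to $\theta_{\pi_\omega}$; unlike the policy-over-options case, the product rule now yields a nonzero direct term $\sum_a \tfrac{\partial \pi_\omega(a\mid s)}{\partial \theta_{\pi_\omega}} Q_\omega(s,\omega,a,p_s)$ together with the ``future'' term $\sum_a \pi_\omega(a\mid s)\,\tfrac{\partial Q_\omega(s,\omega,a,p_s)}{\partial \theta_{\pi_\omega}}$. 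Substituting the gradients of Eq.~\ref{eq:q_omega} and Eq.~\ref{eq:q_beta} (which contain no explicit dependence on $\theta_{\pi_\omega}$, so only the recursive pieces survive) collapses the future term into a one-step expression that mimics Eq.~\ref{eq:ie_grad_Omega_wrt_Omega}, namely
\begin{align*}
\frac{\partial Q_\Omega(s,\omega,p_s)}{\partial \theta_{\pi_\omega}}
 &= \sum_a \frac{\partial \pi_\omega(a\mid s)}{\partial \theta_{\pi_\omega}} Q_\omega(s,\omega,a,p_s) \\
 &\quad + \sum_{s'}\sum_{\omega'} \sum_a \pi_\omega(a\mid s)\,\gamma\,\mathbb{P}(s'\mid s,a,p_s)\, \\
 &\qquad\quad \cdot\bigl((1-\beta_\omega(s'))\mathbf{1}(\omega'=\omega) + \beta_\omega(s')\pi_\Omega(\omega'\mid s')\bigr)\,\frac{\partial Q_\Omega(s',\omega',p_s)}{\partial \theta_{\pi_\omega}}.
\end{align*}

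Next I would average both sides against $\mathbb{P}_s(p_s)$. The direct term becomes $\sum_a \tfrac{\partial \pi_\omega(a\mid s)}{\partial \theta_{\pi_\omega}}\,\overline{Q}_\omega(s,\omega,a)$ by definition, while the recursive term is handled exactly as in the proof of Theorem~\ref{th:policy_over_opt_pg}: invoke the rectangularity-driven state-wise independence identity (Eq.~\ref{asm:statewise_ind_mp}, now applied with the factor $\partial Q_\Omega/\partial\theta_{\pi_\omega}$ in place of $Q_\Omega$) to split $\sum_{p_s}\mathbb{P}_s(p_s)\mathbb{P}(s'\mid s,a,p_s)\,\tfrac{\partial Q_\Omega(s',\omega',p_s)}{\partial \theta_{\pi_\omega}}$ into $\bigl(\sum_{p_s}\mathbb{P}_s(p_s)\mathbb{P}(s'\mid s,a,p_s)\bigr)\cdot\sum_{p_{s'}}\mathbb{P}_{s'}(p_{s'})\,\tfrac{\partial Q_\Omega(s',\omega',p_{s'})}{\partial \theta_{\pi_\omega}}$. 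This yields a clean one-step recursion for the soft-robust gradient, with the one-step kernel being precisely $\mathbb{P}^{(1)}_\gamma(s',\omega'\mid s,\omega)$ defined in Eq.~\ref{eq:arg_proc1}.

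Finally I would iterate the recursion using Eq.~\ref{eq:arg_proc} to compose one-step kernels into $\mathbb{P}^{(k)}_\gamma$, and sum the resulting geometric series to get $\overline{d}_\Omega(s,\omega\mid s_0,\omega_0)=\sum_{k=0}^\infty \mathbb{P}^{(k)}_\gamma(s,\omega\mid s_0,\omega_0)$. Relabeling the running variables produces the claimed identity. I expect the main obstacle to be the careful bookkeeping in the recursion-unrolling step: one must keep track of the option-transition kernel (through $\beta_\omega$ and $\pi_\Omega$) separately from the state-transition kernel, while simultaneously invoking the rectangularity identity at every depth so that each $p_s$ is absorbed locally at the state where it acts, exactly as in Theorem~\ref{th:policy_over_opt_pg}. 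Once this factorization is justified at each level of the recursion, the remaining algebra is routine.
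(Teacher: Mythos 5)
Your proposal follows essentially the same route as the paper's proof: differentiating the value recursion (Eqs.~\ref{eq:q_Omega}--\ref{eq:q_beta}) to isolate the direct term $\sum_a \frac{\partial \pi_\omega(a\mid s)}{\partial \theta_{\pi_\omega}} Q_\omega(s,\omega,a,p_s)$ plus a recursive term, averaging over $\mathbb{P}_s(p_s)$ and invoking the rectangularity identity (Eq.~\ref{asm:statewise_ind_mp}) to localize each $p_s$, then recognizing the one-step kernel $\mathbb{P}^{(1)}_\gamma$ and unrolling via Eq.~\ref{eq:arg_proc} to obtain $\overline{d}_\Omega$. The argument is correct and matches the paper's derivation.
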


\begin{proof} 
The gradient of Eq.~\ref{eq:q_omega} with respect to $\theta_{\pi_\omega}$ can be recursively written as 
\begin{flalign}
&\frac{\partial Q_\Omega(s, \omega, p_s)}{\partial \theta_{\pi_\omega}} = \sum_a \frac{\partial \pi_{\omega}(a \mid s)}{\partial \theta_{\pi_\omega}} Q_\omega(s, \omega, a, p_s) \nonumber\\
& ~~~~ + \sum_a \pi_{\omega}(a \mid s) \gamma \sum_{s'} \mathbb{P}(s \mid s, a, p_s) \sum_{\omega'} \left( \beta_\omega(s') \pi_\Omega(\omega' \mid s') + (1 - \beta_\omega(s')\textbf{1}(\omega' = \omega)) \right) \nonumber\\ 
& ~~~~ \cdot \frac{\partial Q_\Omega(s', \omega', p_s)}{\partial \theta_{\pi_\omega}}. \label{eq:grad_intra_option1}
\end{flalign}

By using Eq.~\ref{asm:statewise_ind_mp}, the gradient of the soft robust style value function can be recursively expressed as 
\begin{flalign}
&\sum_{p_s} \mathbb{P}_s(p_s) \frac{\partial Q_\Omega(s, \omega, p_s)}{\partial \theta_{\pi_\omega}} = \sum_a \frac{\partial \pi_{\omega}(a \mid s)}{\partial \theta_{\pi_\omega}} \sum_{p_s} \mathbb{P}_s(p_s) Q_\omega(s, \omega, a, p_s) \nonumber\\
& ~~~~ + \sum_a \pi_{\omega}(a \mid s) \sum_{s'} \gamma \sum_{p_s} \mathbb{P}_s(p_s) \mathbb{P}(s \mid s, a, p_s) \sum_{\omega'} \left( \beta_\omega(s') \pi_\Omega(\omega' \mid s') + (1 - \beta_\omega(s')\textbf{1}(\omega' = \omega)) \right) \nonumber\\ 
& ~~~~ \cdot \sum_{p_{s'}} \mathbb{P}_{s'}(p_{s'}) \frac{\partial Q_\Omega(s', \omega', p_{s'})}{\partial \theta_{\pi_\omega}}. \label{eq:grad_intra_option2}
\end{flalign}

By using Eq.~\ref{eq:arg_proc1} and Eq.~\ref{eq:arg_proc}, Eq.~\ref{eq:grad_intra_option2} can be transformed as 
\begin{eqnarray}
\sum_{p_s} \mathbb{P}_s(p_s) \frac{\partial Q_\Omega(s, \omega, p_s)}{\partial \theta_{\pi_\omega}}  &=& \sum_a \frac{\partial \pi_{\omega}(a \mid s)}{\partial \theta_{\pi_\omega}} \sum_{p_s} \mathbb{P}_s(p_s) Q_\omega(s, \omega, a, p_s) \nonumber\\
&& + \sum_{s'} \sum_{\omega'} \mathbb{P}^{(1)}_{\gamma}(s', \omega' \mid s, \omega) \sum_{p_{s'}} \mathbb{P}_{s'}(p_{s'}) \frac{\partial Q_\Omega(s', \omega', p_{s'})}{\partial \theta_{\pi_\omega}} \nonumber\\
&=& \sum_{k=0}^{\infty} \sum_{s'} \sum_{\omega'} \mathbb{P}^{(k)}_{\gamma}(s', \omega' \mid s, \omega) \sum_a \frac{\partial \pi_{\omega}(a \mid s')}{\partial \theta_{\pi_\omega}} \nonumber\\ 
&& \cdot \sum_{p_{s'}} \mathbb{P}_{s'}(p_{s'}) Q_\omega(s', \omega', a, p_{s'}) \nonumber\\
&=& \sum_{k=0}^{\infty} \sum_{s'} \sum_{\omega'} \mathbb{P}^{(k)}_{\gamma}(s', \omega' \mid s, \omega) \sum_a \frac{\partial \pi_{\omega}(a \mid s')}{\partial \theta_{\pi_\omega}} \overline{Q}_\omega(s', \omega', a). 
\end{eqnarray}

The gradient of the expected discounted soft robust loss with respect to $\theta_{\pi_\omega}$ is then 
\begin{eqnarray}
\sum_{p_s} \mathbb{P}_{s}(p_s) \frac{\partial Q_\Omega(s, \omega, p_s)}{\partial \theta_{\pi_\omega}} &=& \sum_{k=0}^{\infty} \sum_{s} \sum_{\omega} \mathbb{P}^{(k)}_{\gamma}(s, \omega \mid s_0, \omega_0) \sum_a \frac{\partial \pi_{\omega}(a \mid s)}{\partial \theta_{\pi_\omega}} \overline{Q}_\omega(s, \omega, a) \nonumber\\
&=& \sum_{s} \sum_{\omega} \overline{d}_\Omega(s, \omega \mid s_0, \omega_0) \sum_a \frac{\partial \pi_{\omega}(a \mid s)}{\partial \theta_{\pi_\omega}} \overline{Q}_\omega(s, \omega, a). 
\end{eqnarray}
\end{proof}

\begin{theorem}[Termination function gradient theorem for soft robust loss] 
Given a set of options with stochastic termination functions $\beta_{\omega}$ that are differentiable in their parameters $\theta_{\beta_\omega}$, the gradient of the expected soft robust loss with respect $\theta_{\beta_\omega}$ is:
\begin{eqnarray}
\sum_{p_s} \mathbb{P}_s(p_s) \frac{\partial Q_\Omega(s, \omega, p_s)}{\partial \theta_{\beta_{\omega}}} &=& \sum_{s} \sum_{\omega} \overline{d}_\Omega(s, \omega \mid s_0, \omega_0) \frac{\partial \beta_{\omega}(s)}{\partial \theta_{\beta_\omega}} \left( \overline{V}_\Omega(s) - \overline{Q}_\Omega(s, \omega) \right), 
\end{eqnarray}
where $\overline{d}_{\Omega}(s, \omega \mid s_0, \omega_0)$ is a discounted weighting of state option pairs along trajectories from $(s_0, \omega_0)$: $\overline{d}_{\Omega}(s, \omega \mid s_0, \omega_0) = \sum_{t=0}^{\infty} \gamma^{t} \mathbb{P}^{(k)}_{\gamma}(s, \omega \mid s_0, \omega_0)$. In addition, $\overline{Q}_\Omega(s, \omega)$ is the value function of options in the context of a state, which is averaged over the model parameter distribution: $\overline{Q}_\Omega(s, \omega) = \sum_{p_s} \mathbb{P}_s(p_s) Q_\Omega(s, \omega, p_s)$, and $\overline{V}_\Omega(s)$ is the value function averaged over the model parameter distribution: $\overline{V}_\Omega(s) = \sum_{p_s} \mathbb{P}_s(p_s) \overline{V}_\Omega(s, p_s)$. By letting $\mathbb{E}_{p_s}\left[\mathbb{P}(s' \mid s, a, p_s)\right]$ as a transition probability, the trajectories can be regarded as ones generated from the general class of the average parameterized MDPs $\langle S, A, \mathbb{C}, \gamma, \mathbb{E}_{p_s}\left[\mathbb{P}(s' \mid s, a, p_s)\right], \mathbb{P}_{0} \rangle$. 
\label{th:termination_pg}\end{theorem}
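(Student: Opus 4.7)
The plan is to mirror the derivations of Theorems \ref{th:policy_over_opt_pg} and \ref{th:intra_option_pg}, substituting the differentiation variable by $\theta_{\beta_\omega}$ and exploiting the product rule on $Q_\beta$. The starting point would be the recursive definitions Eq.~\ref{eq:q_Omega}--Eq.~\ref{eq:v_Omega} of the option value functions on the parameterized MDP. First I would differentiate Eq.~\ref{eq:q_Omega} with respect to $\theta_{\beta_\omega}$; since neither $\pi_\omega(a \mid s)$ nor $\mathbb{C}(s,a)$ depend on $\theta_{\beta_\omega}$, this just gives $\frac{\partial Q_\Omega(s,\omega,p_s)}{\partial\theta_{\beta_\omega}} = \sum_a \pi_\omega(a\mid s)\gamma\sum_{s'}\mathbb{P}(s'\mid s,a,p_s)\frac{\partial Q_\beta(s',\omega,p_s)}{\partial\theta_{\beta_\omega}}$.

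Next I would differentiate Eq.~\ref{eq:q_beta} via the product rule. This is the step where the characteristic ``advantage'' term $\frac{\partial \beta_\omega(s')}{\partial \theta_{\beta_\omega}}(V_\Omega(s',p_s)-Q_\Omega(s',\omega,p_s))$ appears, alongside the residual terms $(1-\beta_\omega(s'))\frac{\partial Q_\Omega(s',\omega,p_s)}{\partial\theta_{\beta_\omega}}+\beta_\omega(s')\frac{\partial V_\Omega(s',p_s)}{\partial\theta_{\beta_\omega}}$. Using Eq.~\ref{eq:v_Omega} to expand $\frac{\partial V_\Omega}{\partial\theta_{\beta_\omega}} = \sum_{\omega'}\pi_\Omega(\omega'\mid s')\frac{\partial Q_\Omega(s',\omega',p_s)}{\partial\theta_{\beta_\omega}}$ (since $\pi_\Omega$ has no dependence on $\theta_{\beta_\omega}$), I can collect the two recursive terms into the familiar factor $(1-\beta_\omega(s'))\mathbf{1}(\omega'=\omega)+\beta_\omega(s')\pi_\Omega(\omega'\mid s')$ that already appears in the previous two proofs.

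Substituting back yields a recursive equation for $\frac{\partial Q_\Omega(s,\omega,p_s)}{\partial\theta_{\beta_\omega}}$ analogous to Eq.~\ref{eq:ie_grad_Omega_wrt_Omega} and Eq.~\ref{eq:grad_intra_option1}. At this point I would average both sides against $\mathbb{P}_s(p_s)$ and invoke the state-wise independence Eq.~\ref{asm:statewise_ind_mp} to push the average over $p_s$ through the next-state expectation: this produces the averaged transition $\sum_{p_s}\mathbb{P}_s(p_s)\mathbb{P}(s'\mid s,a,p_s)$ on one side and the averaged derivative $\sum_{p_{s'}}\mathbb{P}_{s'}(p_{s'})\frac{\partial Q_\Omega(s',\omega',p_{s'})}{\partial\theta_{\beta_\omega}}$ on the other, and collapses the advantage term into $\frac{\partial\beta_\omega(s')}{\partial\theta_{\beta_\omega}}(\overline{V}_\Omega(s')-\overline{Q}_\Omega(s',\omega))$ by definition of the averaged value functions. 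Recognizing the transition kernel that results as precisely $\mathbb{P}_\gamma^{(1)}$ from Eq.~\ref{eq:arg_proc1}, I can unroll the recursion using Eq.~\ref{eq:arg_proc} to obtain $\sum_{k=0}^{\infty}\sum_{s,\omega}\mathbb{P}_\gamma^{(k)}(s,\omega\mid s_0,\omega_0)\frac{\partial\beta_\omega(s)}{\partial\theta_{\beta_\omega}}(\overline{V}_\Omega(s)-\overline{Q}_\Omega(s,\omega))$, which is exactly the claim after identifying $\overline{d}_\Omega(s,\omega\mid s_0,\omega_0)=\sum_{k=0}^{\infty}\mathbb{P}_\gamma^{(k)}(s,\omega\mid s_0,\omega_0)$.

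The main obstacle I foresee is bookkeeping rather than any deep idea: carefully tracking which indicator/mixture factor comes from $\beta_\omega(s')$ versus from the $V_\Omega$ expansion, making sure the product rule on $Q_\beta$ produces the signed advantage $(\overline{V}_\Omega-\overline{Q}_\Omega)$ with the correct sign, and invoking Eq.~\ref{asm:statewise_ind_mp} at the exact place where $Q_\Omega$ (not an already-averaged $\overline{Q}_\Omega$) sits inside the inner expectation. Once the recursion is correctly assembled the unrolling is formally identical to that of Theorems \ref{th:policy_over_opt_pg} and \ref{th:intra_option_pg}, so no new machinery is needed.
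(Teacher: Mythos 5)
Your proposal is correct and follows essentially the same route as the paper's own proof: differentiate $Q_\Omega$ and $Q_\beta$ with respect to $\theta_{\beta_\omega}$, use the product rule to isolate the advantage term $\frac{\partial \beta_\omega}{\partial \theta_{\beta_\omega}}\left(V_\Omega - Q_\Omega\right)$ and collect the recursive terms under the factor $(1-\beta_\omega)\mathbf{1}(\omega'=\omega)+\beta_\omega\pi_\Omega$, then average over $\mathbb{P}_s(p_s)$ via the state-wise independence (Eq.~\ref{asm:statewise_ind_mp}) and unroll with $\mathbb{P}^{(k)}_\gamma$ to obtain $\overline{d}_\Omega$. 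No gaps; the placement of the independence step (applied to the unaveraged $Q_\Omega$ inside the next-state expectation) matches the paper exactly.
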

\begin{proof}

%
The gradient of Eq.~\ref{eq:v_Omega} with respect to $\theta_{\beta_\omega}$ can be written as follows: 
\begin{flalign}
& \frac{\partial Q_\Omega(s, \omega, p_s)}{\partial \theta_{\beta_{\omega}}} = \sum_a \pi_\omega(a \mid s) \sum_{s'} \gamma \mathbb{P}(s' \mid s, a, p_s) \frac{\partial Q_\beta(\omega, s', p_s)}{\partial \theta_{\beta_\omega}}. \label{eq:grad_Q_Omega_wrt_beta}
\end{flalign}

In addition, the gradient of Eq.~\ref{eq:q_beta} with respect to $\theta_{\beta_\omega}$ can be written as  
\begin{flalign}
& \frac{\partial Q_\beta(\omega, s, p_s)}{\partial \theta_{\beta_{\omega}}} = - \frac{\partial \beta_\omega(s)}{\theta \theta_{\beta_\omega}} Q_\Omega(s, \omega, p_s) + (1 - \beta_{\omega}(s)) \frac{\partial Q_\Omega(s, \omega, p)}{\partial \theta_{\beta_\omega}} + \frac{\partial \beta_\omega(s)}{\partial \theta_{\beta_\omega}} V_\Omega(s, p) \nonumber\\ 
& ~~~~ + \beta_{\omega}(s) \frac{\partial V_\Omega(s, p_s)}{\partial \theta_{\beta_\omega}} \nonumber\\ 
& = \frac{\partial \beta_\omega(s')}{\partial \theta_{\beta_\omega}} \left( V_\Omega(s, p_s) - Q_\Omega(s, \omega, p_s) \right) \nonumber\\ 
& ~~~~ + \sum_{\omega'} \left( (1 - \beta_\omega(s)) \textbf{1}(\omega' = \omega) + \beta_\omega(s) \pi_\Omega(\omega' \mid s) \right) \frac{\partial Q_\Omega(s, \omega', p_s)}{\partial \theta_{\beta_\omega}}. \label{eq:grad_Q_beta_wrt_beta}
\end{flalign}

By substituting Eq.~\ref{eq:grad_Q_beta_wrt_beta} into Eq.~\ref{eq:grad_Q_Omega_wrt_beta}, a recursive expression of the gradient can be written as 
\begin{flalign}
& \frac{\partial Q_\Omega(s, \omega, p_s)}{\partial \theta_{\beta_{\omega}}} = \sum_a \pi_\omega(a \mid s) \sum_{s'} \gamma \mathbb{P}_s(s' \mid s, a, p_s) \frac{\partial \beta_\omega(s')}{\partial \theta_{\beta_\omega}} \left( V_\Omega(s', p_s) - Q_\Omega(s', \omega, p_s) \right) \nonumber\\
& + \sum_a \pi_\omega(a \mid s) \sum_{s'} \gamma \mathbb{P}(s' \mid s, a, p_s) \sum_{\omega'} \left( (1 - \beta_\omega(s')) \textbf{1}(\omega' = \omega) + \beta_\omega(s') \pi_\Omega(\omega' \mid s') \right) \frac{\partial Q_\Omega(s', \omega', p_s)}{\partial \theta_{\beta_\omega}}. 
\end{flalign}

By using Eq.~\ref{asm:statewise_ind_mp}, the gradient of soft robust loss style value functions can also be recursively expressed as 
\begin{flalign}
& \sum_{p_s} \mathbb{P}_s(p_s) \frac{\partial Q_\Omega(s, \omega, p_s)}{\partial \theta_{\beta_{\omega}}} = \sum_a \pi_\omega(a \mid s) \sum_{s'} \gamma \sum_{p_s} \mathbb{P}_s(p_s) \mathbb{P}(s' \mid s, a, p_s) \frac{\partial \beta_\omega(s')}{\partial \theta_{\beta_\omega}} \left( \overline{V}_\Omega(s') - \overline{Q}_\Omega(s', \omega) \right) \nonumber\\
& ~~~~ + \sum_a \pi_\omega(a \mid s) \sum_{s'} \gamma \sum_{p_s} \mathbb{P}_s(p_s) \mathbb{P}_{s'}(s' \mid s, a, p_s) \sum_{\omega'} \left( (1 - \beta_\omega(s')) \textbf{1}(\omega' = \omega) + \beta_\omega(s') \pi_\Omega(\omega' \mid s') \right) \nonumber\\ 
& ~~~~ \cdot \sum_{p_{s'}} \mathbb{P}_{s'}(p_{s'}) \frac{\partial Q_\Omega(s', \omega', p_{s'})}{\partial \theta_{\beta_\omega}}. \label{eq:ie_grad_Omega_wrt_beta}
\end{flalign}

By using Eq.~\ref{eq:arg_proc}, Eq.~\ref{eq:ie_grad_Omega_wrt_beta} can be transformed as 
\begin{flalign}
& \sum_{p_s} \mathbb{P}_s(p_s) \frac{\partial Q_\Omega(s, \omega, p_s)}{\partial \theta_{\beta_{\omega}}} = \sum_a \pi_\omega(a \mid s) \sum_{s'} \gamma \sum_{p_s} \mathbb{P}(p_s) \mathbb{P}_s(s' \mid s, a, p_s) \frac{\partial \beta_\omega(s')}{\partial \theta_{\beta_\omega}} \left( \overline{V}_\Omega(s') - \overline{Q}_\Omega(s', \omega) \right) \nonumber\\
& ~~~~ + \sum_{s'} \sum_{\omega'} \mathbb{P}^{(1)}_{\gamma}(s', \omega' \mid s, \omega) \sum_{p_{s'}} \mathbb{P}(p_{s'}) \frac{\partial Q_\Omega(s', \omega', p_{s'})}{\partial \theta_{\beta_\omega}} \nonumber\\
& = \sum_{k=0}^{\infty} \sum_{s'} \sum_{\omega'} \mathbb{P}^{(k)}_{\gamma}(s', \omega' \mid s, \omega) \sum_a \pi_\omega(a \mid s') \sum_{s''} \gamma \sum_{p_{s'}} \mathbb{P}_s(p_{s'}) \mathbb{P}(s'' \mid s', a, p_{s'}) \nonumber\\ 
& ~~~~ \cdot \frac{\partial \beta_{\omega'}(s'')}{\partial \theta_{\beta_\omega}} \left( \overline{V}_\Omega(s'') - \overline{Q}_\Omega(s'', \omega') \right) \nonumber\\
& = \sum_{k=0}^{\infty} \sum_{s''} \sum_{\omega'} \underbrace{\sum_{s'} \mathbb{P}^{(k)}_{\gamma}(s', \omega' \mid s, \omega) \cdot \sum_a \pi_\omega(a \mid s') \gamma \sum_{p_{s'}} \mathbb{P}_s(p_{s'}) \mathbb{P}(s'' \mid s', a, p_{s'})}_{\mathbb{P}^{(k)}_{\gamma}(s'', \omega' \mid s, \omega)} \nonumber\\ 
& ~~~~ \cdot \frac{\partial \beta_{\omega'}(s'')}{\partial \theta_{\beta_\omega}} \left( \overline{V}_\Omega(s'') - \overline{Q}_\Omega(s'', \omega') \right). 
\end{flalign}

The gradient of the expected discounted soft robust loss with respect to $\theta_{\beta_{\omega}}$ is then 
\begin{eqnarray}
\sum_{p_s} \mathbb{P}_s(p_s) \frac{\partial Q_\Omega(s, \omega, p_s)}{\partial \theta_{\beta_{\omega}}} &=& \sum_{k=0}^{\infty} \sum_{s} \sum_{\omega} \mathbb{P}^{(k)}_{\gamma}(s, \omega \mid s_0, \omega_0) \frac{\partial \beta_{\omega}(s)}{\partial \theta_{\beta_\omega}} \left( \overline{V}_\Omega(s) - \overline{Q}_\Omega(s, \omega) \right) \nonumber\\
 &=& \sum_{s} \sum_{\omega} \overline{d}_\Omega(s, \omega \mid s_0, \omega_0) \frac{\partial \beta_{\omega}(s)}{\partial \theta_{\beta_\omega}} \left( \overline{V}_\Omega(s) - \overline{Q}_\Omega(s, \omega) \right). 
\end{eqnarray}

\end{proof}

In addition, we derive a corollary for the derivation of the gradient of $\lambda$ and $v$. 
\begin{corollary}[Relation between the soft robust loss over parameterized MDPs and the loss on an average parameterized MDP]
\begin{eqnarray}
  \mathbb{E}_{\mathcal{C}, p}\left[ \mathcal{C} \right] &=& \mathbb{E}_{\mathcal{C}}\left[ \left. \mathcal{C} ~\right\vert~ \overline{p} \right]. \label{eq:rel_rew_another_augMDPs}
\end{eqnarray}
\label{co:conv}\end{corollary}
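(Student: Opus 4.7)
The plan is to expand the left-hand side as a marginal over trajectories, exploit the rectangularity structure of the ambiguity set and the parameter distribution to push the expectation over $p$ inside, and recognize the resulting trajectory distribution as that induced by the average transition function. Concretely, I would write the soft robust loss as
\begin{equation*}
\mathbb{E}_{\mathcal{C},p}[\mathcal{C}] = \sum_{p} \mathbb{P}(p) \sum_{\tau} \mathbb{P}(\tau \mid p)\, \mathcal{C}(\tau),
\end{equation*}
where $\tau=(s_0,a_0,s_1,a_1,\dots)$ and
\begin{equation*}
\mathbb{P}(\tau\mid p) = \mathbb{P}_0(s_0)\prod_{t\geq 0} \pi(a_t\mid s_t)\, \mathbb{P}(s_{t+1}\mid s_t,a_t,p_{s_t,t}),
\end{equation*}
using the footnote's explicit form $p=(p_{s,t})_{s\in S,\, t\geq 0}$ together with the fact that the transition out of $(s_t,a_t)$ depends on $p$ only through the single component $p_{s_t,t}$.

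Next I would apply the rectangularity assumption $\mathbb{P}(p)=\bigotimes_{s,t}\mathbb{P}_{s,t}(p_{s,t})$ to factor the outer sum over $p$ into independent sums over each $p_{s,t}$. Since $\mathbb{P}(\tau\mid p)$ only depends on the components $p_{s_t,t}$ that actually appear along $\tau$, every other factor $\sum_{p_{s,t}}\mathbb{P}_{s,t}(p_{s,t})=1$ drops out, giving
\begin{equation*}
\sum_{p}\mathbb{P}(p)\mathbb{P}(\tau\mid p) = \mathbb{P}_0(s_0)\prod_{t\geq 0}\pi(a_t\mid s_t)\sum_{p_{s_t,t}}\mathbb{P}_{s_t,t}(p_{s_t,t})\,\mathbb{P}(s_{t+1}\mid s_t,a_t,p_{s_t,t}).
\end{equation*}
The inner sum is precisely $\mathbb{E}_{p_{s_t}}[\mathbb{P}(s_{t+1}\mid s_t,a_t,p_{s_t})]$, i.e.\ the transition kernel of the average parameterized MDP $\langle S,A,\mathbb{C},\gamma,\mathbb{E}_p[T_p],\mathbb{P}_0\rangle$ that $\overline{p}$ denotes. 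I would then interchange the order of summation (justified because $\mathcal{C}$ is a bounded discounted sum or, more generally, by nonnegativity/integrability assumed throughout the paper) to conclude
\begin{equation*}
\mathbb{E}_{\mathcal{C},p}[\mathcal{C}] = \sum_{\tau} \mathbb{P}(\tau\mid \overline{p})\,\mathcal{C}(\tau) = \mathbb{E}_{\mathcal{C}}[\mathcal{C}\mid\overline{p}].
\end{equation*}

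The main obstacle is bookkeeping rather than mathematical depth: one has to be precise that a trajectory touches at most one parameter component per time step, and that the remaining (uncountably or countably many) components marginalize to one under the product distribution. Once this factorization is written out cleanly, the rest follows by relabeling. A secondary subtlety is that the cost $\mathcal{C}$ is assumed to depend on $\tau$ alone (via $c_t\sim\mathbb{C}(s_t,a_t)$), not on $p$ directly; this is what lets the $p$-marginalization act purely on the transition kernel and yields the clean identification with the average MDP. This corollary is essentially the same idea that underlies the state-wise independence identity (Eq.~\ref{asm:statewise_ind_mp}) already used in the option policy gradient derivations, specialized to the full trajectory distribution rather than to a one-step Bellman backup.
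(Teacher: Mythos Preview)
Your argument is correct but takes a genuinely different route from the paper. The paper proves the identity by backward induction on value functions: it defines the option-value functions $\overline{V}_\Omega,\overline{Q}_\Omega,\overline{Q}_\omega,\overline{Q}_\beta$ on the average parameterized MDP, writes both sides of the claim as $\sum_{p_{s_0}}\mathbb{P}_{s_0}(p_{s_0})V_\Omega(s_0,p_{s_0})$ and $\overline{V}_\Omega(s_0)$ respectively, and then shows $\sum_{p_{s_t}}\mathbb{P}_{s_t}(p_{s_t})V_\Omega(s_t,p_{s_t})=\overline{V}_\Omega(s_t)$ by inducting backward from the terminal time $T$, invoking the state-wise independence identity (Eq.~\ref{asm:statewise_ind_mp}) at each step. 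Your argument instead works directly at the trajectory level: you factor $\mathbb{P}(\tau\mid p)$ and use the product form $\mathbb{P}(p)=\bigotimes_{s,t}\mathbb{P}_{s,t}$ to push the $p$-average onto each transition kernel in one shot. Your approach is more elementary (no value-function apparatus, no induction) and does not tacitly require a finite horizon to anchor the base case; the paper's approach, on the other hand, stays within the Bellman/option-critic machinery used throughout and makes the equality of the \emph{value functions} $\overline{Q}_\Omega(s,\omega)=\sum_{p_s}\mathbb{P}_s(p_s)Q_\Omega(s,\omega,p_s)$ explicit at every state, which is what is actually consumed later in the gradient derivations. One cosmetic point: your $\pi(a_t\mid s_t)$ should really be the option-induced action distribution (involving $\pi_\Omega,\pi_\omega,\beta_\omega$), but since none of these depend on $p$ this does not affect the marginalization step.
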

\begin{proof}

For proof, considering the definition of value functions in Bacon et al.~\cite{Bacon2017TheOA}, we define value functions on (the general class of) an average parameterized MDP $\left\langle S, A, \mathbb{C}, \gamma, \mathbb{E}_{p_s}\left[\mathbb{P}(s' \mid s, a, p_s)\right], \mathbb{P}_{0} \right\rangle$: 
\begin{eqnarray}
  \overline{Q}_\Omega(s, \omega) &=& \sum_{a} \pi_{\omega}(a \mid s) \overline{Q}_\omega(s, \omega, a), \label{eq:q_Omega_a}\\
  \overline{Q}_\omega(s, \omega, a) &=& \mathbb{C}(s, a) + \gamma \sum_{s'} \mathbb{E}_{p_{s}} \left[ \mathbb{P}\left(s' \mid s, a, p_{s}\right) \right] \overline{Q}_{\beta}\left(s', \omega\right), \label{eq:q_omega_a}\\
  \overline{Q}_{\beta}(s, \omega) &=& (1-\beta_\omega(s))\overline{Q}_\Omega(s, \omega) + \beta_\omega(s) \overline{V}_{\Omega}(s), \label{eq:q_beta_a}\\
  \overline{V}_\Omega(s) &=& \sum_{\omega} \pi_{\Omega}(\omega \mid s) \overline{Q}_\Omega(s, \omega). \label{eq:v_Omega_a}
\end{eqnarray}
Note that, by regarding $\mathbb{E}_{p_{s}} \left[ \mathbb{P}\left(s' \mid s, a, p_{s}\right) \right]$ as a transition functions, the definition of value functions become identical to those in Bacon et al.~\cite{Bacon2017TheOA}. 

By using Eq.~\ref{eq:v_Omega_a}, the loss at the average parameterized MDP can be written as 
\begin{eqnarray}
 \mathbb{E}_{\mathcal{C}}\left[ \left. \mathcal{C} ~\right\vert~ \overline{p} \right] &=& \overline{V}_\Omega(s_0). 
\end{eqnarray}

In addition, with Eq.~\ref{eq:v_Omega}, the soft robust loss can be written as 
\begin{eqnarray}
  \mathbb{E}_{\mathcal{C}, p}\left[ \mathcal{C} \right] &=& \sum_{p_{s_0}} \mathbb{P}_{s_0}(p_{s_0}) V_\Omega(s_0, p_{s_0}). 
\end{eqnarray}

In the following part, we prove $\sum_{p_{s_0}} \mathbb{P}_{s_0}(p_{s_0}) V_\Omega(s_0, p_{s_0}) = \overline{V}_\Omega(s_0)$ by backward induction. \\
\textbf{The case at the terminal state $T$}: 
\begin{eqnarray}
  \sum_{p_{s_T}} \mathbb{P}_{s_T}(p_{s_T}) V_\Omega(s_T, p_{s_T}) &=& \sum_{p_{s_T}} \mathbb{P}_{s_T}(p_{s_T}) \sum_\omega \pi_{\Omega}(\omega \mid s_T) Q_\Omega(s_T, \omega, p_{s_T}) \nonumber\\
  &=& \sum_{p_{s_T}} \mathbb{P}_{s_T}(p_{s_T}) \sum_{\omega} \pi_{\Omega}(\omega \mid s_T) \sum_{a} \pi_{\omega}(a \mid s_T) \mathbb{C}(s_T, a) \nonumber\\
  &=& \sum_{\omega} \pi_{\Omega}(\omega \mid s_T) \sum_{a} \pi_{\omega}(a \mid s_T) \mathbb{C}(s_T, a). \label{eq:vomega}\\
  \overline{V}_\Omega(s_T) &=& \sum_\omega \pi_{\Omega}(\omega \mid s_T) \overline{Q}_\Omega(s_T, \omega) \nonumber\\
  &=& \sum_{\omega} \pi_{\Omega}(\omega \mid s_T) \sum_{a} \pi_{\omega}(a \mid s_T) \mathbb{C}(s_T, a). \label{eq:mvomega}
\end{eqnarray}
From Eq.~\ref{eq:vomega} and Eq.~\ref{eq:mvomega}, it is clear that $\sum_{p_{s_T}} \mathbb{P}_{s_T}(p_{s_T}) V_\Omega(s_T, p) = \overline{V}_\Omega(s_T)$, and $\sum_{p_{s_T}} \mathbb{P}_{s_T}(p_{s_T}) Q_\Omega(s_T, \omega, p_{s_T}) =  \overline{Q}_\Omega(s_T, \omega)$. 
\\

\textbf{The case at the state in $t-1$, while assuming that $\sum_{p_{s_t}} \mathbb{P}_{s_t}(p_{s_t}) V_\Omega(s_t, p_{s_t}) = \overline{V}_\Omega(s_t)$ and $\sum_{p_{s_t}} \mathbb{P}_{s_t}(p_{s_t}) Q_\Omega(s_t, \omega, p_{s_t}) = \overline{Q}_\Omega(s_t, \omega)$}: 

By substituting Eq.\ref{eq:q_Omega}, Eq.\ref{eq:q_omega}, and Eq.\ref{eq:q_beta}, $V_\Omega(s_{t-1}, p)$ can be expanded:
\begin{flalign}
& V_\Omega(s_{t-1}, p_{s_{t-1}}) = \sum_\omega \pi_{\Omega}(\omega \mid s_{t-1})  Q_\Omega(s_{t-1}, \omega, p_{s_{t-1}}) \nonumber\\
& = \sum_{\omega} \pi_{\Omega}(\omega \mid s_{t-1}) \sum_{a} \pi_{\omega}(a \mid s_{t-1}) \nonumber\\ 
& ~~~~ \cdot \left( \mathbb{C}(s_{t-1}, a) + \gamma \sum_{s_t} \mathbb{P}\left(s_{t} \mid s_{t-1}, a, p_{s_{t-1}} \right) \left( (1-\beta_\omega(s_t))Q_\Omega(s_t, \omega, p_{s_{t-1}}) + \beta_\omega(s_t) V_\Omega(s_t, p_{s_{t-1}}) \right) \right). \label{eq:3}
\end{flalign}

By using the rectangularity assumption on $\mathbb{P}(p)$, the expectation of Eq.~\ref{eq:3} can be transformed into
\begin{flalign}
\begin{split}
& \sum_{p_{s_{t-1}}} \mathbb{P}_{s_{t-1}}({p_{s_{t-1}}}) V_\Omega(s_{t-1}, {p_{t-1}}) = \sum_{p_{s_{t-1}}} \mathbb{P}_{s_{t-1}}({p_{s_{t-1}}}) \sum_\omega \pi_{\Omega}(\omega \mid s_{t-1})  Q_\Omega(s_{t-1}, \omega, {p_{t-1}}) \\
& = \sum_{\omega} \pi_{\Omega}(\omega \mid s_{t-1})  \sum_{a} \pi_{\omega}(a \mid s_{t-1}) \\
& ~~~~ \cdot \left( \mathbb{C}(s_{t-1}, a) + \gamma \sum_{s_t} \sum_{p_{s_{t-1}}} \mathbb{P}_{s_{t-1}}(p_{s_{t-1}}) \mathbb{P}\left(s_t \mid s_{t-1}, a, p_{s_{t-1}} \right)  \begin{pmatrix} (1-\beta_\omega(s_t)) \sum_{p_t} \mathbb{P}_{s_t}(p_{s_t}) Q_\Omega(s_t, \omega, p_{s_t}) \\ + \beta_\omega(s_t) \sum_{p_t} \mathbb{P}_{s_t}(p_{s_t}) V_\Omega(s_t, p_{s_t})  \end{pmatrix} \right). \label{eq:2}
\end{split}
\end{flalign}

By applying the assumption on value functions at $t$ and letting $\sum_{p_{s_{t-1}}} \mathbb{P}_{s_{t-1}}(p_{s_{t-1}}) \mathbb{P}\left(s_t \mid s, a, p_{s_{t-1}} \right)$ be $\mathbb{E}_{p_{s_{t-1}}}\left[  \mathbb{P}\left(s_t \mid s_{t-1}, a, p_{s_{t-1}} \right) \right]$, Eq.~\ref{eq:2} can be further transformed into
\begin{flalign}
\begin{split}
& \sum_{\omega} \pi_{\Omega}(\omega \mid s_{t-1})  \sum_{a} \pi_{\omega}(a \mid s_{t-1}) \\
& ~~~ \underbrace{\cdot \left( \mathbb{C}(s_{t-1}, a) + \gamma \sum_{s_t} \mathbb{E}_{s_{t-1}}\left[ \mathbb{P} \left(s_t \mid s_{t-1}, a, p_{s_{t-1}} \right) \right] \underbrace{ \left( (1-\beta_\omega(s_t)) \overline{Q}_\Omega(s_t, \omega) + \beta_\omega(s_t) \overline{V}_\Omega(s_t) \right) }_{\overline{Q}_\beta(s_t, \omega)} \right)}_{\overline{Q}_\omega(s_{t-1}, \omega, a)} \end{split}\nonumber\\
& = \sum_\omega \pi_{\Omega}(\omega \mid s_{t-1})  \overline{Q}_\Omega(s_{t-1}, \omega) = \overline{V}_\Omega(s_{t-1}). \label{eq:1}
\end{flalign}

From Eq.~\ref{eq:2} and Eq.~\ref{eq:1}, it is clear that $\sum_{p_{s_{t-1}}} \mathbb{P}_{s_{t-1}}(p_{s_{t-1}}) V_\Omega(s_{t-1}, p_{s_{t-1}}) = \overline{V}_\Omega(s_{t-1})$, \\and $\sum_{p_{s_{t-1}}} \mathbb{P}_{s_{t-1}}(p_{s_{t-1}}) Q_\Omega(s_{t-1}, \omega, p_{s_{t-1}}) = \overline{Q}_\Omega(s_{t-1}, \omega)$. 

Finally, by letting $t=1$, we obtain $\sum_{p_{s_0}} \mathbb{P}_{s_0}(p_{s_0}) V_\Omega(s_0, p_{s_0}) = \overline{V}_\Omega(s_0)$. 

\end{proof}

\section*{Parameter Distributions for Our Experiment in Section~\ref{sec:experiment}}
\begin{table}[h!]
\begin{minipage}{0.45\hsize}
\begin{center}
  \caption{Parameter distributions for our experiment. $\mu$ and $\sigma$ are the mean and standard deviation for a Gaussian distribution, respectively. In addition, high and low are the upper and lower bounds of model parameters, respectively. }
  \begin{tabular}{lllll}\hline
    {\bf Half-Cheetah} & $\mu$ & $\sigma$ & low & high \\\hline
    torso mass & 7.0  & 3.0 & 1.0 & 13.0 \\
    ground friction & 1.6  & 0.8 & 0.1 & 3.1 \\
    joint damping & 6.0  & 2.5 & 1.0 & 11.0 \\\hline\hline
    {\bf Walker2D} & $\mu$ & $\sigma$ & low & high \\\hline
    torso mass & 6.0  & 1.5 & 3.0 & 9.0 \\
    ground friction & 1.9  & 0.4 & 0.9 & 2.9 \\\hline\hline
    {\bf HopperIceBlock} & $\mu$ & $\sigma$ & low & high \\\hline
    ground friction & 1.05  & 0.475 & 0.1 & 2.0 \\\hline
  \end{tabular}
  \label{tab:param_distribution_continuous}
\end{center}
\end{minipage}\hspace{20pt}\begin{minipage}{0.45\hsize}
\begin{center}
  \caption{Parameter distributions for our experiment. $value$ is a possible model parameter value and $\mathbb{P}(value)$ is the probability for each of values. }
  \begin{tabular}{lSS}\hline
    {\bf Half-Cheetah} & \text{$value$} & \text{$\mathbb{P}(value)$} \\\hline
    Torso mass & 1.0 &  0.9 \\
                    & 13.0 &  0.1 \\\hline
    Ground friction & 0.1 &  0.9 \\
                    & 3.1 &  0.1 \\\hline
    Joint damping & 1.0 &  0.9 \\
                    & 11.0 &  0.1 \\\hline\hline
    {\bf Walker2D} & \text{$value$} & \text{$\mathbb{P}(value)$} \\\hline
    Torso mass & 3.0 &  0.9 \\
                    & 9.0 &  0.1 \\\hline
    Ground friction & 0.9 &  0.9 \\
                    & 2.9 &  0.1 \\\hline\hline
    {\bf HopperIceBlock} & \text{$value$} & \text{$\mathbb{P}(value)$} \\\hline
    Ground friction & 0.1 &  0.1 \\
                    & 2.0 &  0.9 \\\hline

  \end{tabular}
  \label{tab:param_distribution_discrete}
\end{center}
\end{minipage}
\end{table}

\section*{The Values of $\zeta$ and The Numbers of Successful Learning Trials in Our Experiment in Section~\ref{sec:experiment}}
\begin{table}[H]
\begin{center}
  \caption{The values of $\zeta$ for OC3 in Section~\ref{sec:experiment}.}  \scalebox{0.75}{\begin{tabular}{c|c|c|c|c|c}\hline
Half-Cheetah-cont & Walker2D-cont & HopperIceBlock-cont & Half-Cheetah-disc & Walker2D-disc & HopperIceBlock-disc \\\hline\hline
$-106.1$ & $-1169.9$ & $-441.3$ & $214.9$ & $-905.1$ & $-331.7$ \\\hline
  \end{tabular}}
  \label{tab:zeta_val}
\end{center}
\end{table}
\begin{table}[H]
\begin{center}
  \caption{The numbers of successful learning trials in which the options produced  by OC3 satisfy the CVaR constraints. The values in the brackets are the total numbers of learning trials in each environment.}  
  \scalebox{0.75}{\begin{tabular}{c|c|c|c|c|c}\hline
Half-Cheetah-cont & Walker2D-cont & HopperIceBlock-cont & Half-Cheetah-disc & Walker2D-disc & HopperIceBlock-disc \\\hline\hline
$36 ~ (36)$ & $35 ~ (36)$ & $35 ~ (36)$ & $36 ~ (36)$ & $35 ~ (36)$ & $36 ~ (36)$ \\\hline
\end{tabular}}
\label{tab:num_success}
\end{center}
\end{table}

\section*{The Performance of Methods in Environments with Different Model Parameter Values}\label{app:add_exp}
In Section \ref{sec:experiment}, we elucidated the answers to questions from the viewpoint of the average-case loss (i.e., Eq.~\ref{eq:softrobust}) and the worst-case loss (i.e., CVaR). 
Since the average-case loss and the worst-case loss are summarized scores, readers may want to know about these scores in more detail. 
For this, we evaluate the options, which are learned in Section \ref{sec:experiment}, by varying model parameter values. 
The performance (cumulative rewards\footnote{This is equal to the negative of the loss. }) of each method is shown in Figures \ref{fig:performance_continuous} and \ref{fig:performance_cdiscrete}. 
In the following paragraphs, we rediscuss Q2 and Q3 on the basis of these results. 

Regarding \textbf{Q2}, we compare the performances of OC3 and SoftRobust, in the environments with the worst-case model parameter value. 
From Figures \ref{fig:performance_continuous} and \ref{fig:performance_cdiscrete}, we can see that OC3 performs almost the same or even better than SoftRobust, in the environments with the worst-case model parameter value. 
For example, in Walker2D with a continuous model parameter distribution (b in Figure \ref{fig:performance_continuous}), the minimum performance of OC3 is at $\text{Ground friction} = 9.0$. 
This minimum performance is significantly better than that of SoftRobust (the performance at $\text{Torso mass} = 9$). 
For another example, in HopperIceBlock with the discrete distribution (c in Figure \ref{fig:performance_cdiscrete}), minimum performances of all the methods are at $\text{Ground friction} = 0.1$. 
Here, the minimum performance of OC3 is significantly higher than that of SoftRobust. 
\begin{figure*}[t]
\begin{center}
\begin{tabular}{c}
\begin{minipage}{1.0\hsize}
\begin{center}
\includegraphics[clip, width=0.329\hsize]{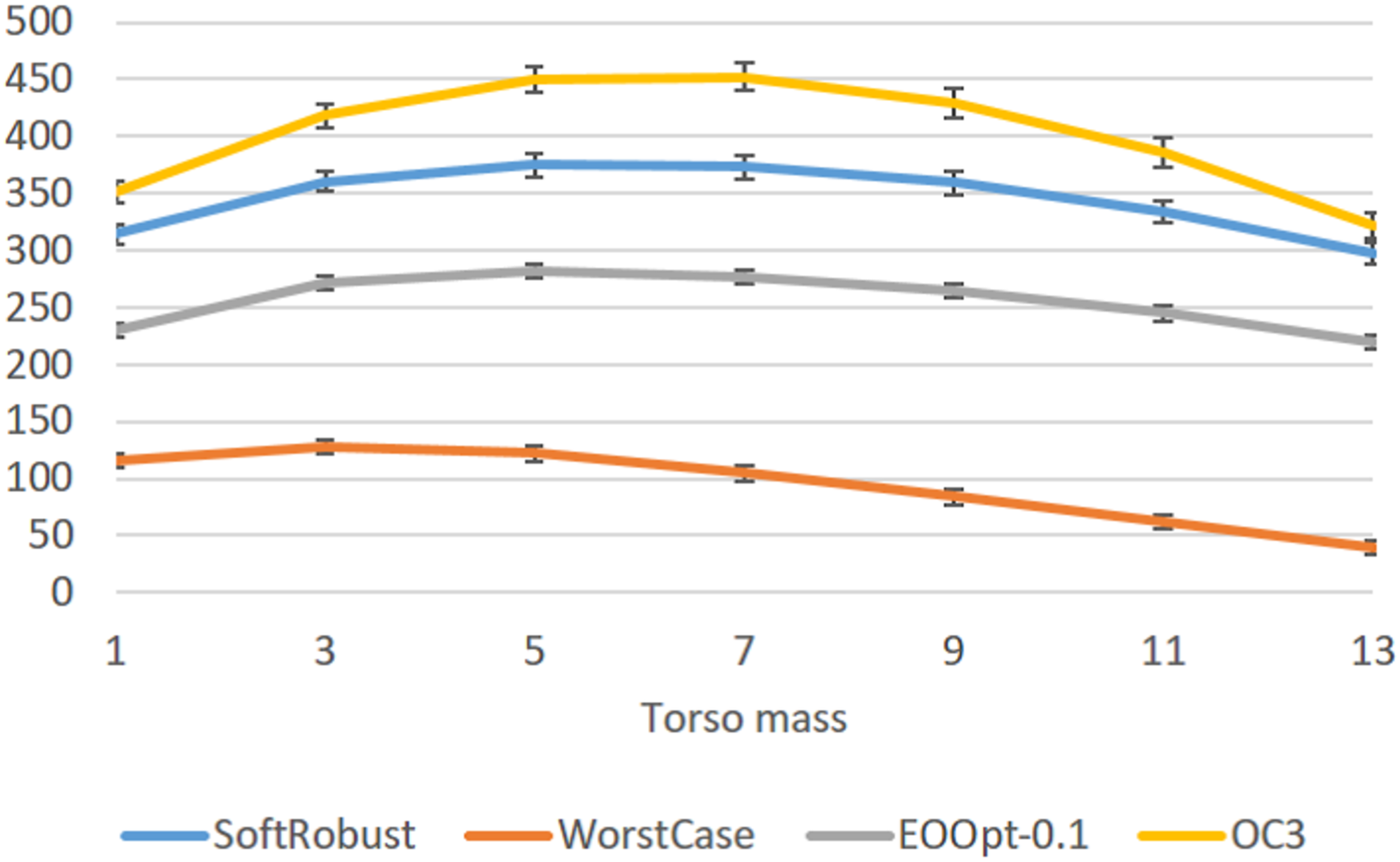}
\includegraphics[clip, width=0.329\hsize]{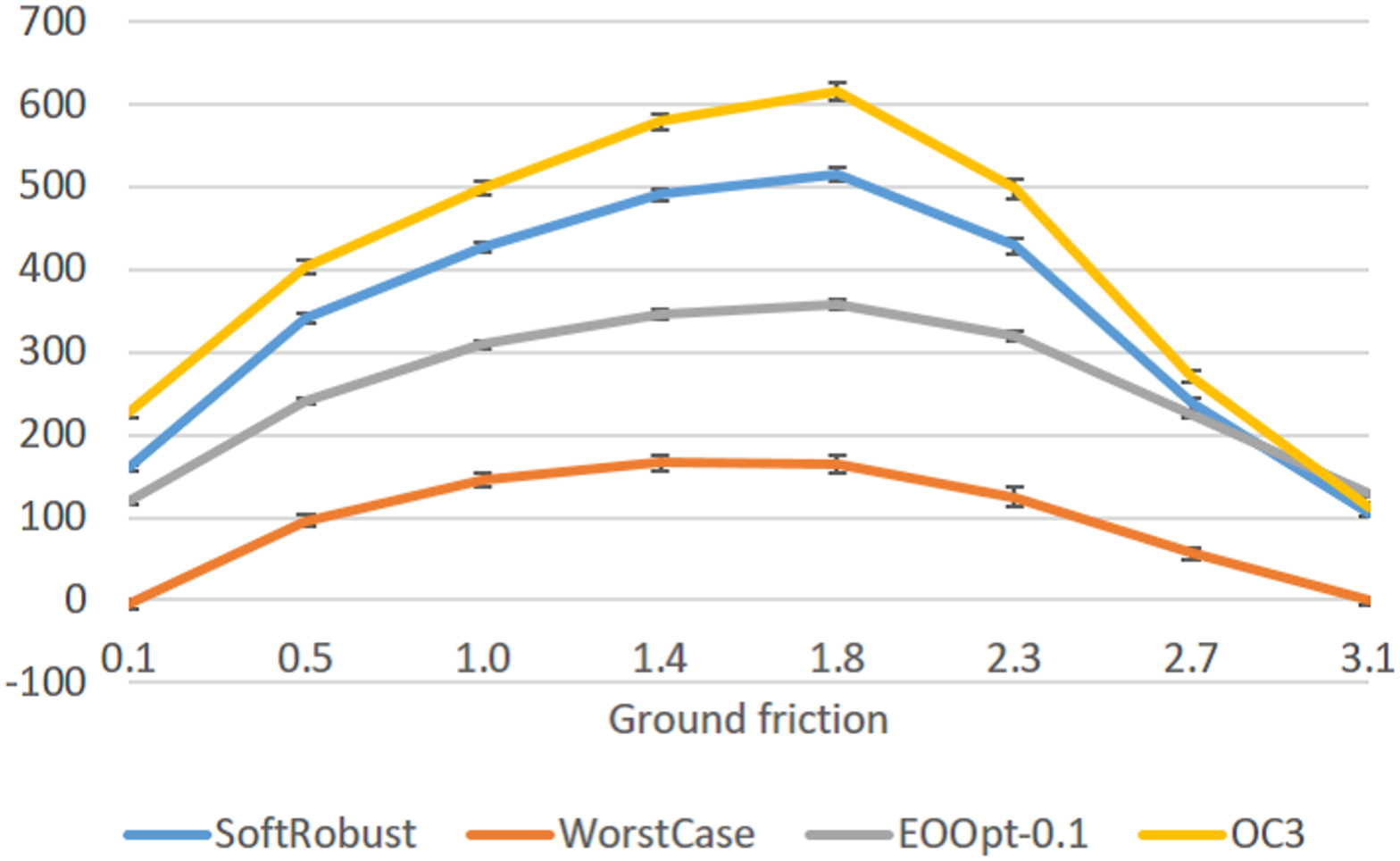}
\includegraphics[clip, width=0.329\hsize]{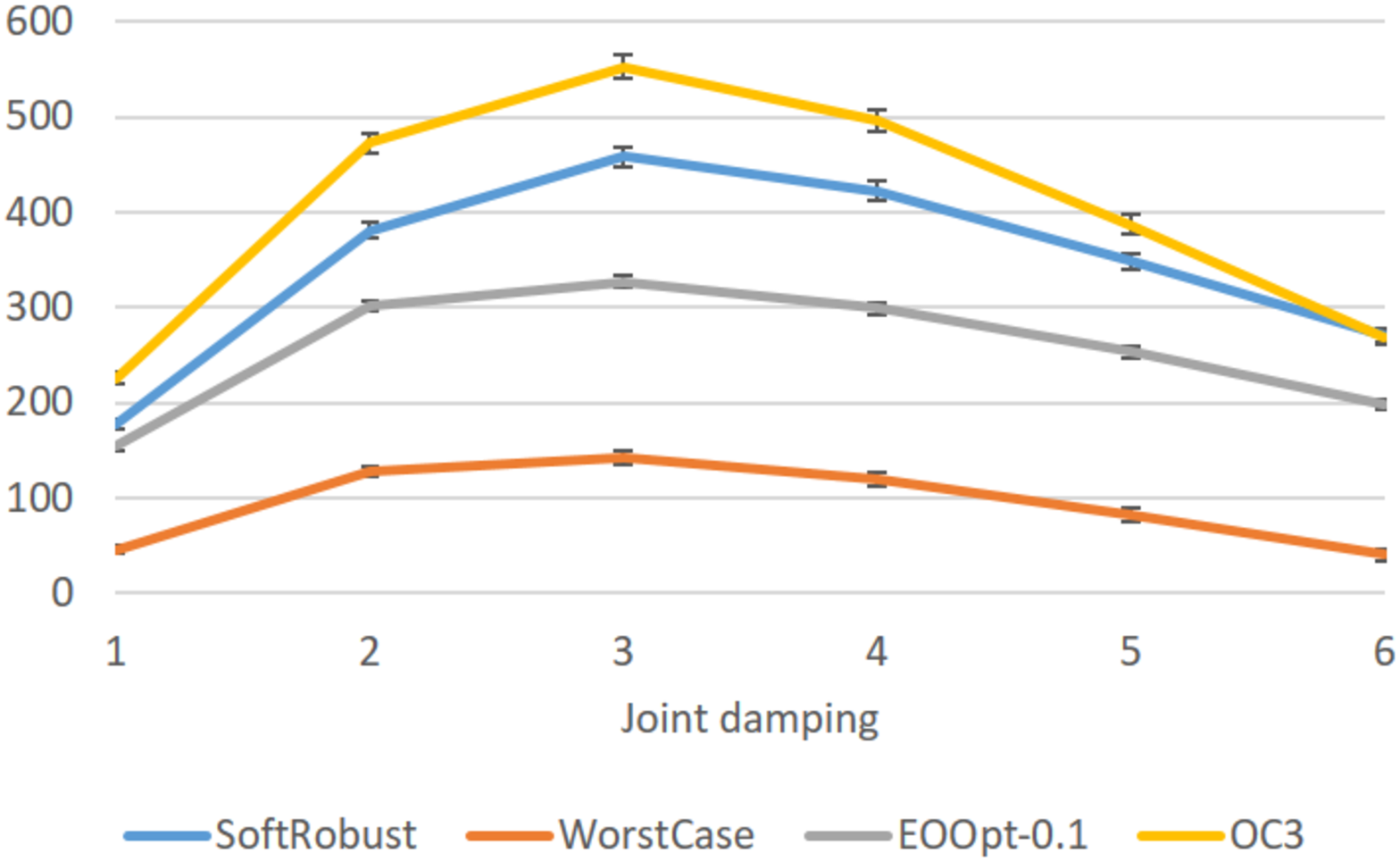}
\end{center}
\subcaption{HalfCheetah}
\end{minipage}\\\hline
\begin{minipage}{0.66\hsize}
\begin{center}
\includegraphics[clip, width=0.49\hsize]{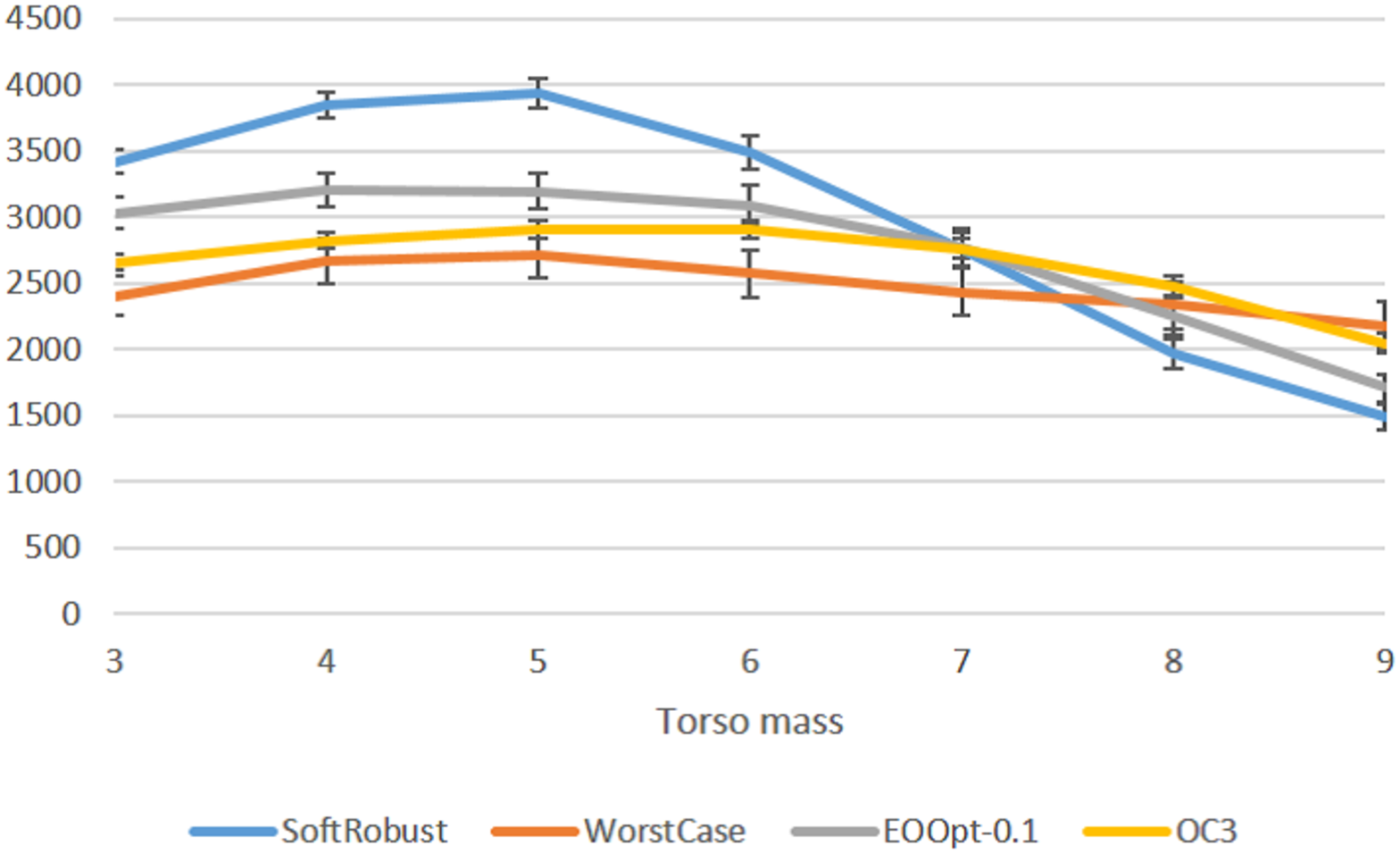}
\includegraphics[clip, width=0.49\hsize]{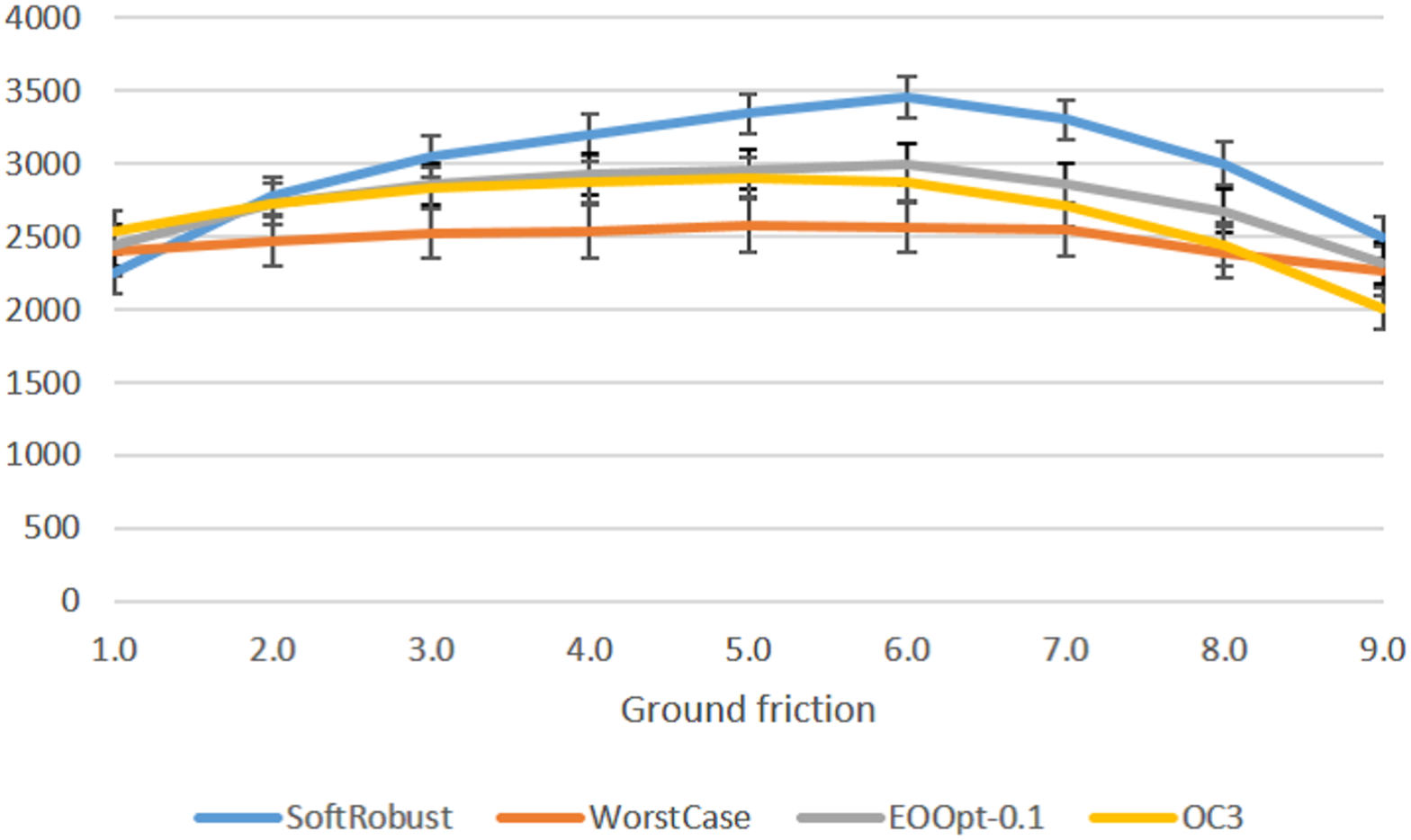}
\end{center}
\subcaption{Walker2D}
\end{minipage}\vline\begin{minipage}{0.34\hsize}
\begin{center}
\includegraphics[clip, width=0.99\hsize]{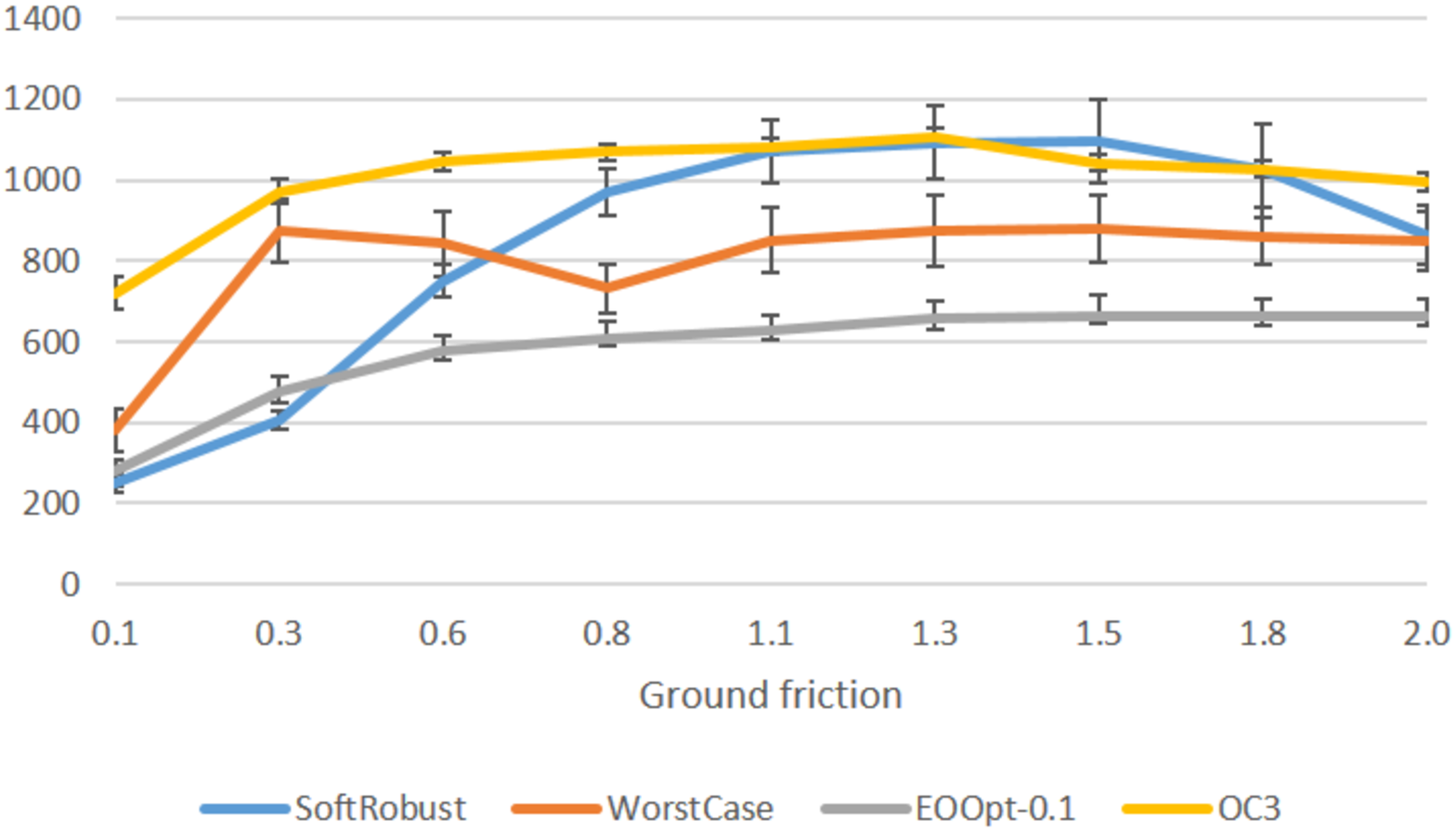}
\end{center}
\subcaption{HopperIceBlock}
\end{minipage}
\end{tabular}
\caption{Comparison of methods in environments with different model parameter values. In each figure, the vertical axis represents performance (\textbf{the negative of the loss}) of each method and the horizontal axis represents the model parameter value. Each score is averaged over 36 trials with different initial random seeds, and the 95$\%$ confidence interval is attached to the score. In option learning, the model parameter values are probabilistically generated by continuous distributions shown in Table \ref{tab:param_distribution_continuous}. Therefore, in each figure, the value of the model parameter which is closer to the center point appears more frequently. }
\label{fig:performance_continuous}
\end{center}
\end{figure*}
\begin{figure*}[t]
\begin{center}
\begin{tabular}{c}
\begin{minipage}{1.0\hsize}
\begin{center}
\includegraphics[clip, width=0.329\hsize]{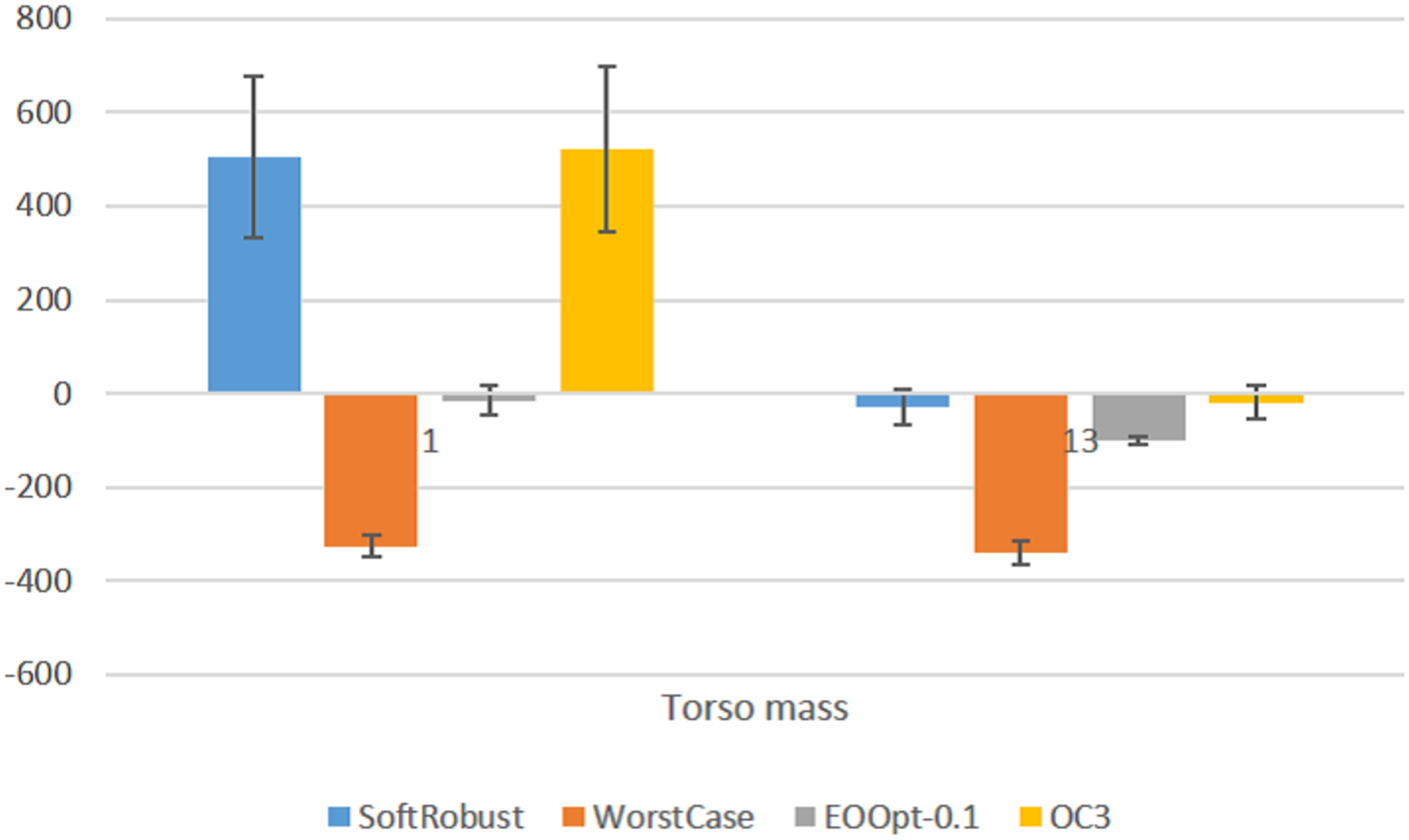}
\includegraphics[clip, width=0.329\hsize]{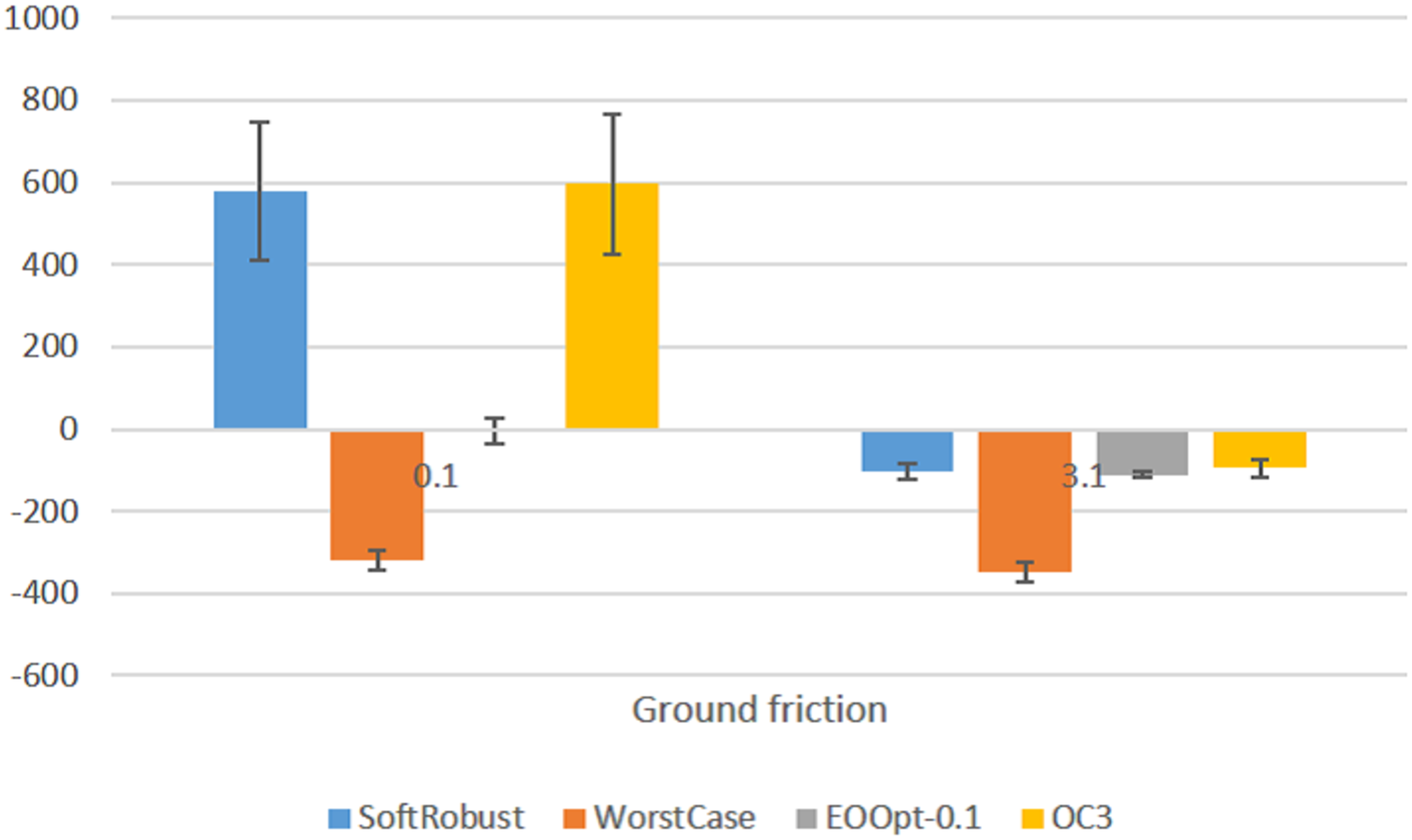}
\includegraphics[clip, width=0.329\hsize]{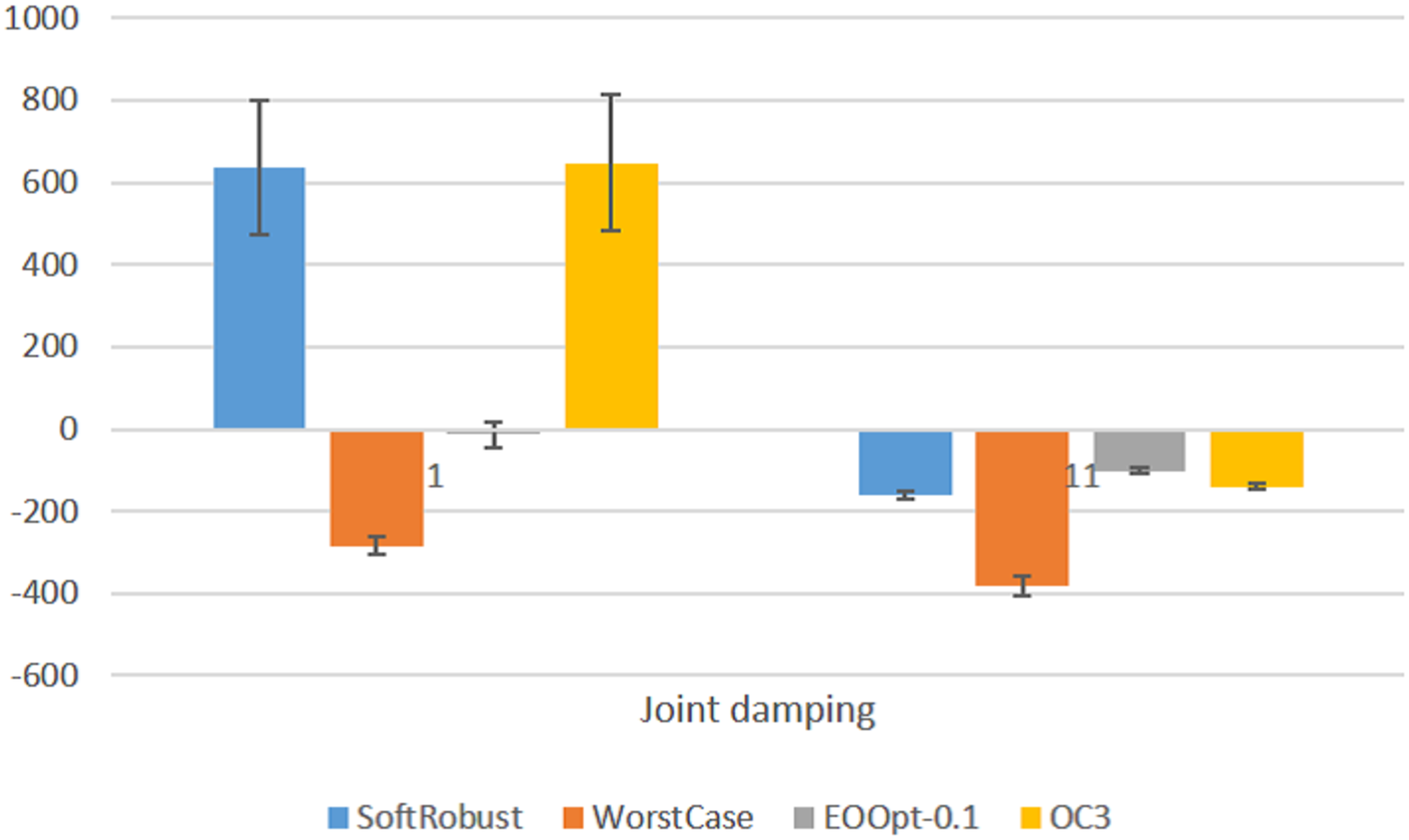}
\end{center}
\subcaption{HalfCheetah}
\end{minipage}\\\hline
\begin{minipage}{0.66\hsize}
\begin{center}
\includegraphics[clip, width=0.49\hsize]{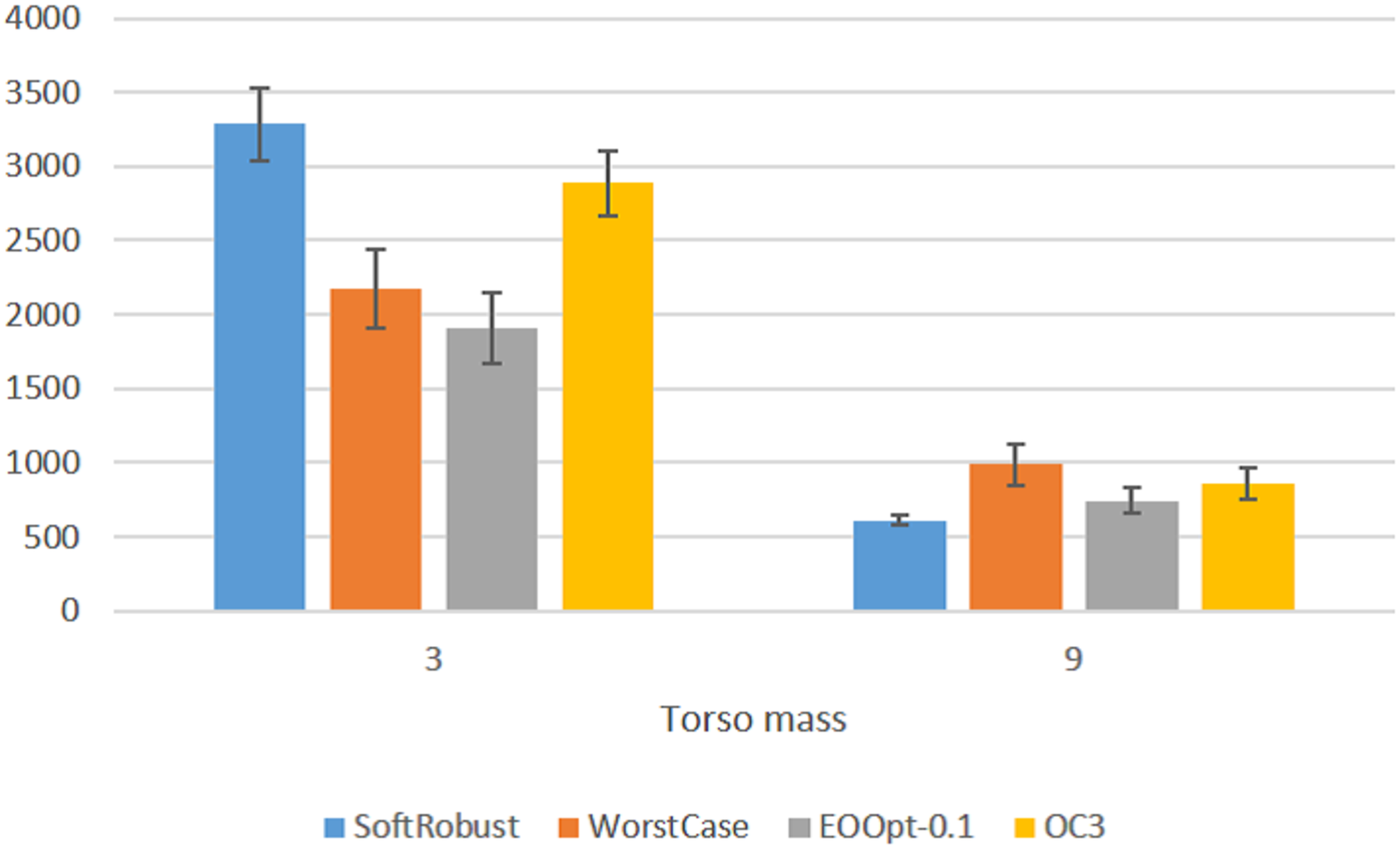}
\includegraphics[clip, width=0.49\hsize]{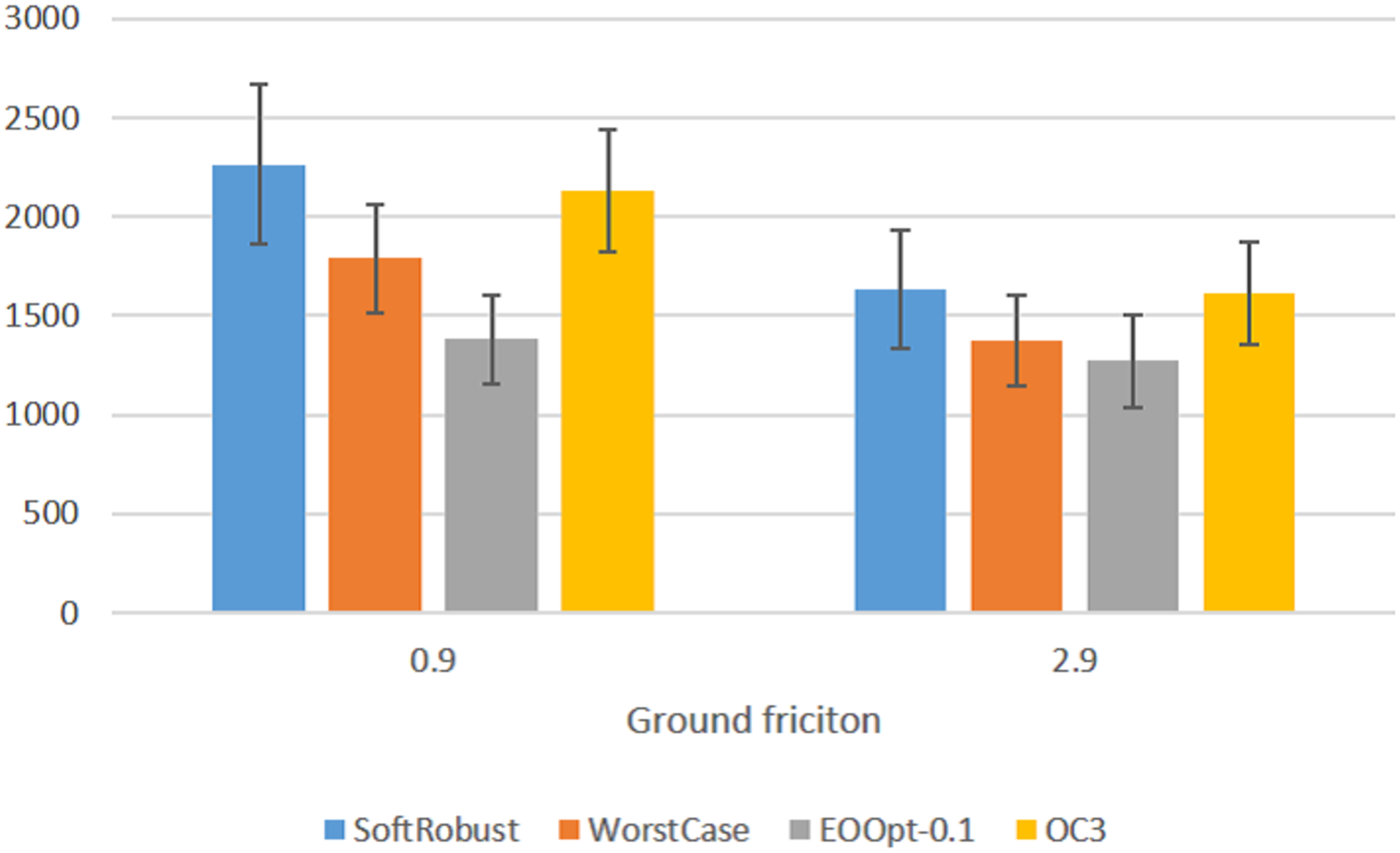}
\end{center}
\subcaption{Walker2D}
\end{minipage}\vline\begin{minipage}{0.34\hsize}
\begin{center}
\includegraphics[clip, width=0.99\hsize]{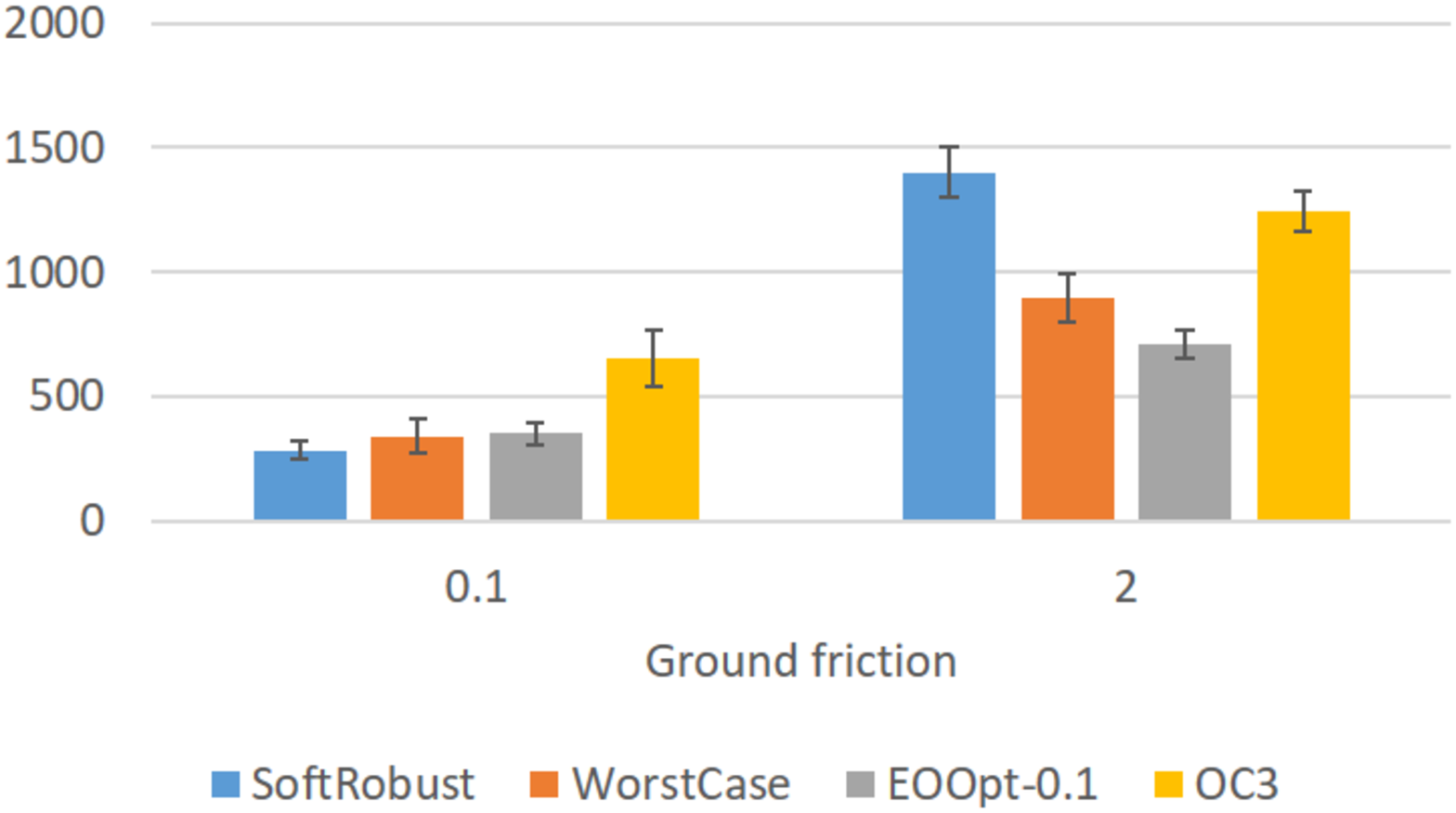}
\end{center}
\subcaption{HopperIceBlock}
\end{minipage}
\end{tabular}
\caption{Comparison of methods in environments with different model parameter values. In each figure, the vertical axis represents the performance (\textbf{the negative of the loss}) of each method and the horizontal axis represents the model parameter value. Each score is averaged over 36 trials with different initial random seeds, and the 95$\%$ confidence interval is attached to the score. In option learning, the model parameter values are probabilistically generated by discrete distributions shown in Table \ref{tab:param_distribution_discrete}. For HalfCheetah and Walker2D, the model parameter values on the left side of each figure appear more frequently while learning options. In addition, for HopperIceBlock, the model parameter values on the right side of the figure appears more frequently while learning options.}
\label{fig:performance_cdiscrete}
\end{center}
\end{figure*}

%
Examples of the motion trajectories (Figure~\ref{fig:extraj}) indicate that SoftRobust tends to ignore rare cases in learning options, whereas OC3 considers them. 
SoftRobust produces option policies that cause a hopper to run with its torso overly bent forward. 
Although the policies enable the hopper to easily jump over the box in the ordinary case (ground friction = $2.0$), they cause the hopper to slip and fall in the rare case (ground friction = $0.1$). 
This illustrates that SoftRobust does not sufficiently consider the rare case while learning options. 
On the other hand, OC3 produces option policies that lead the hopper to move by waggling its foot while keeping its torso vertical against the ground. 
In the ordinary case, the hopper hops up, lands on the box, and then passes through the box by slipping on it. 
In the rare case, the hopper stops its movement when it reaches the place near the box, without hopping onto it. 
This behaviour prevents the hopper from slipping and falling in the rare case. 
%
Further examples are shown in the video at the following link:  \url{https://drive.google.com/open?id=1DRmIaK5VomCey70rKD_5DgX2Jm_1rFlo}
%
\begin{figure}[t]
\begin{center}
\begin{tabular}{c}

\begin{minipage}{0.49\hsize}
\begin{center}
\includegraphics[clip, width=1.0\hsize]{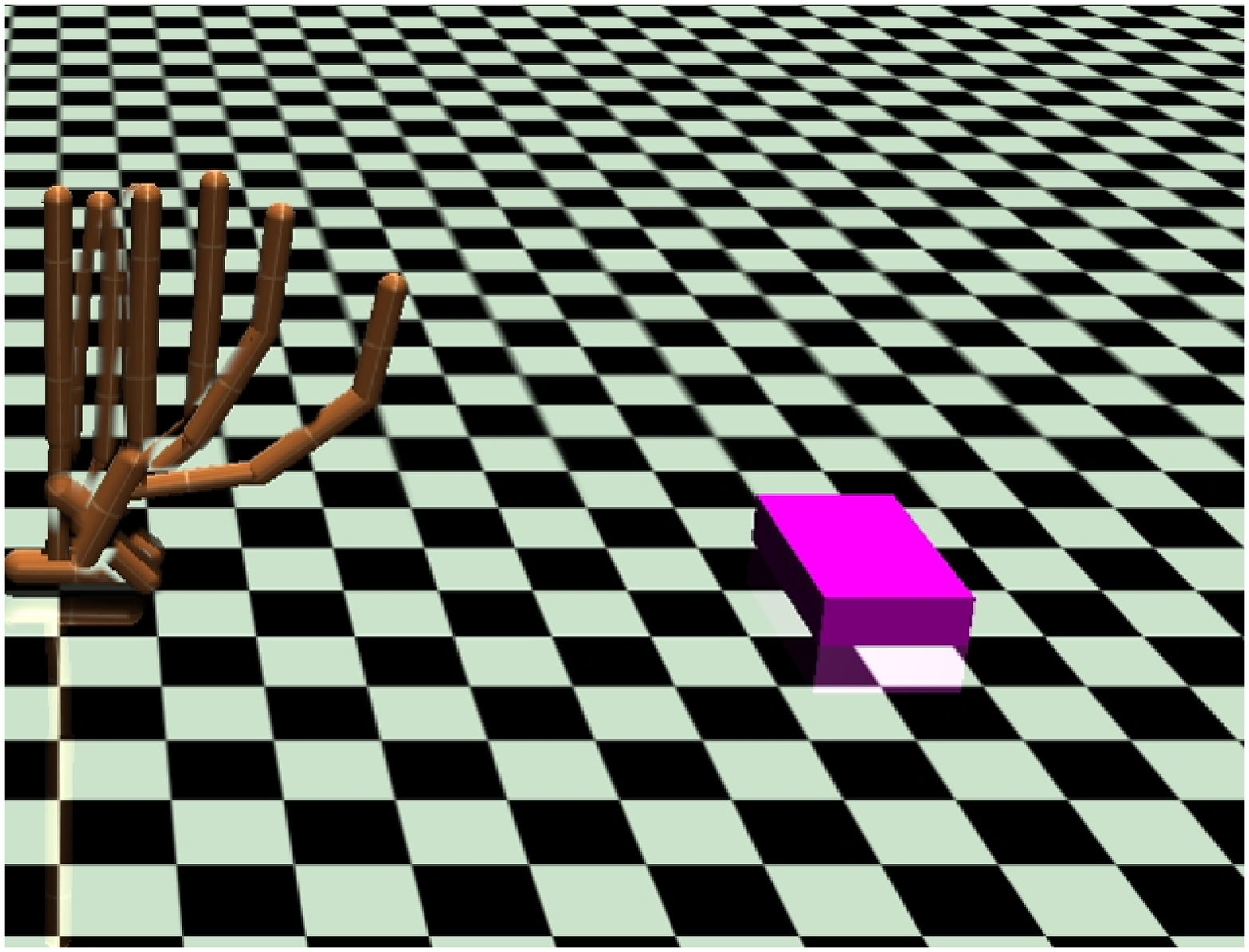}
\subcaption{SoftRobust at friction$=0.1$}
\end{center}
\end{minipage}
\hspace{2pt}
\begin{minipage}{0.49\hsize}
\begin{center}
\includegraphics[clip, width=1.0\hsize]{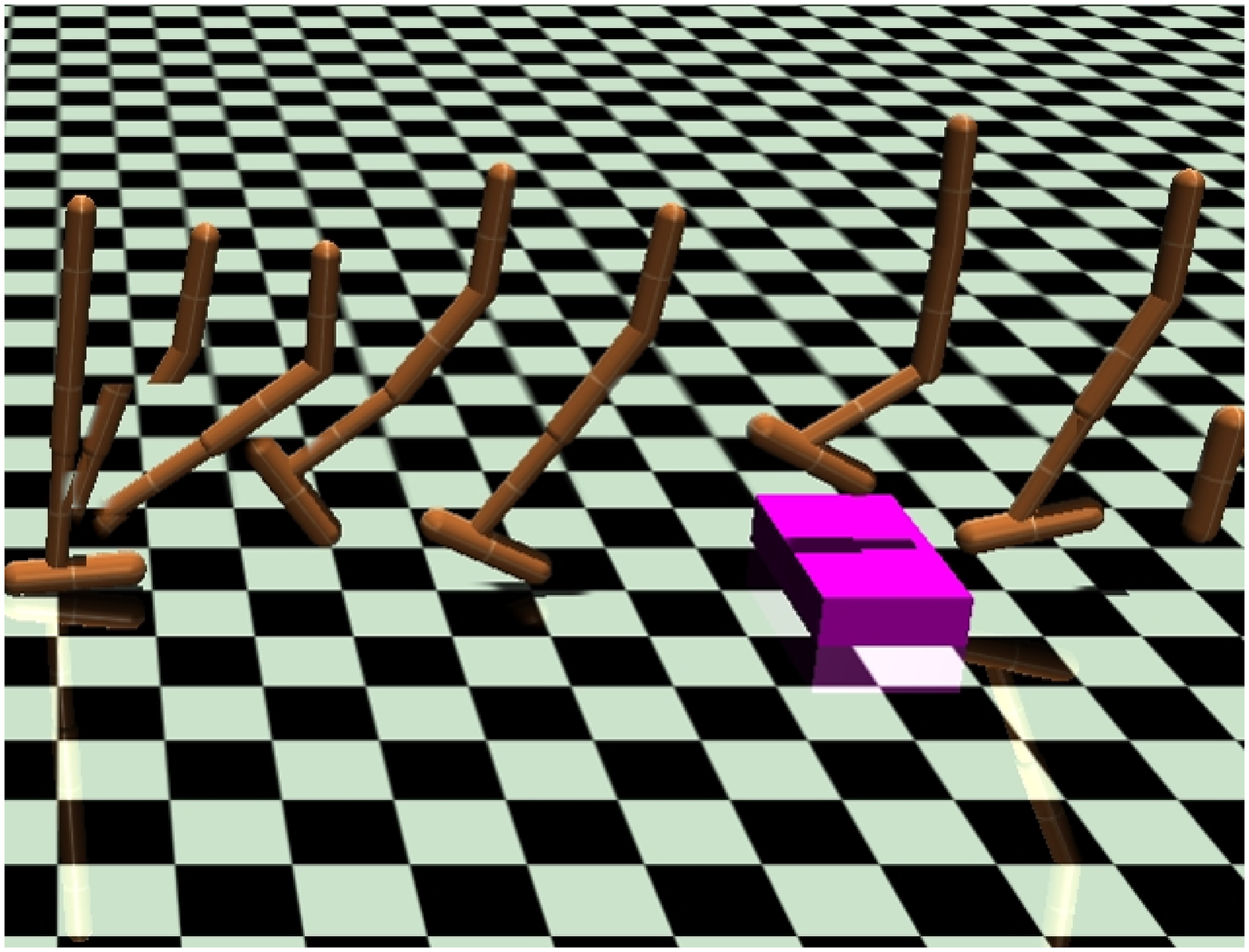}
\subcaption{SoftRobust at friction$=2.0$}
\end{center}
\end{minipage}
\\
\begin{minipage}{0.49\hsize}
\begin{center}
\includegraphics[clip, width=1.0\hsize]{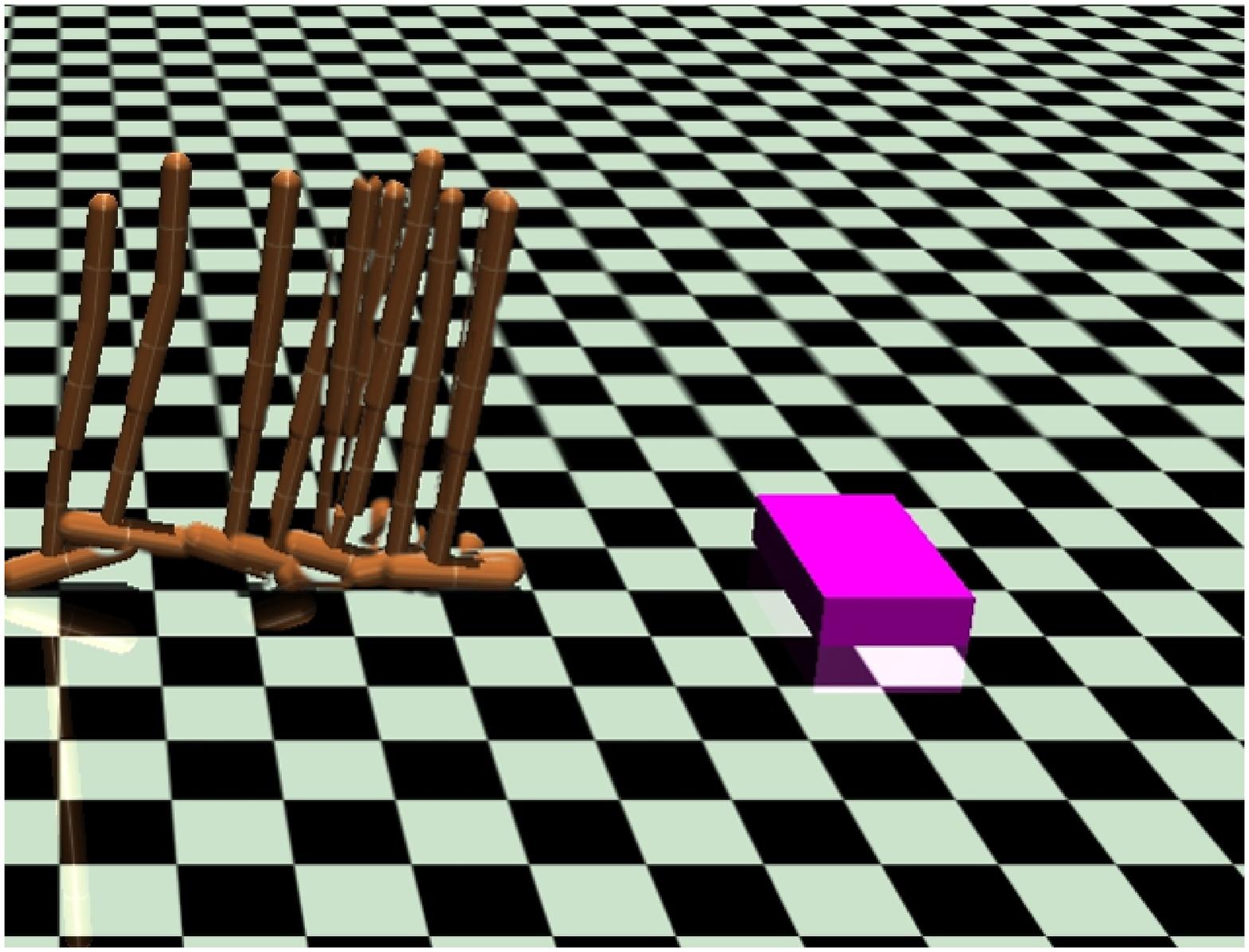}
\subcaption{OC3 at friction$=0.1$}
\end{center}
\end{minipage}
\hspace{2pt}
\begin{minipage}{0.49\hsize}
\begin{center}
\includegraphics[clip, width=1.0\hsize]{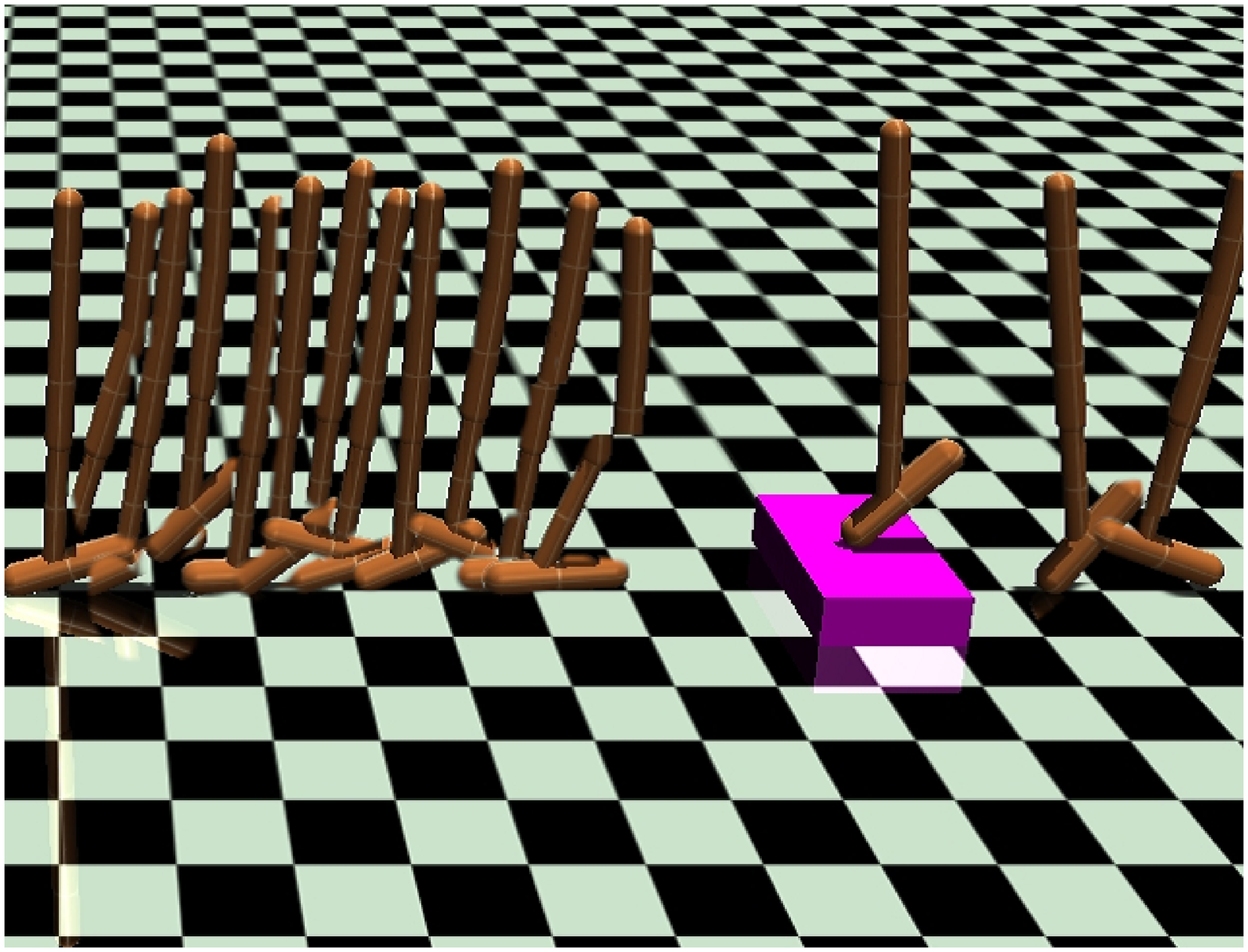}
\subcaption{OC3 at friction$=2.0$}
\end{center}
\end{minipage}

\end{tabular}
\caption{Example motion trajectories generated by SoftRobust and OC3 in the HopperIceBlock environment with the discrete model parameter distribution. }
\label{fig:extraj}
\end{center}
\end{figure}

Regarding \textbf{Q3},  we compare the performance of OC3 with that of the worst-case methods (WorstCase and EOOpt-$0.1$) in the ordinary case. 
For performance in environments with a continuous distribution, we compare their performances on model parameter values that appeared frequently in the learning phase (i.e., performance around the center point in Figure \ref{fig:performance_continuous}). 
From this viewpoint, we can see that OC3 performs significantly better than the worst-case methods. 
For example, in HopperIceBlock, OC3 performs significantly better around $\text{Ground friction} = 1.05$ than WorstCase and EOOpt-$0.1$. 
For performance in environments with discrete distributions, we compare the performances of the methods in the frequent cases. 
From the comparison, we can see that OC3 perform better than the worst-case methods in all cases. 
For example, in HopperIceBlock, OC3 performs significantly better at $\text{Ground friction} = 2$ than WorstCase and EOOpt-$0.1$. 

Additionally, we conduct an experimental evaluation with the condition that model parameter distribution and parameter value ranges in the test phase are different from those in the training phase. 
In the training phase, the option policies~\footnote{Here, we compare ones learned by OC3 and SoftRobust.} are learned in environments with continuous model parameter distribution (HalfCheetah-cont, Walker2D-cont, and HopperIceBlock-cont), which is truncated Gaussian distribution parameterized with parameter in Table \ref{tab:param_distribution_continuous}. 
In the test phase, the model parameter distribution is changed to uniform distribution. 
The range of the distribution is determined as $\left[ (\text{high} + \text{low}) / 2 - \text{scale factor} \cdot (\text{high} - \text{low}) / 2 ,~~  (\text{high} + \text{low}) / 2 + \text{scale factor} \cdot (\text{high} - \text{low}) / 2 \right]$, where low and high are ones in Table \ref{tab:param_distribution_continuous} and scale factor is non-negative real number to scale the value range~\footnote{If the model parameter values sampled from the distribution are invalid (i.e., negative or zero), these are replaced by the value 0.0001.}. 
We evaluate the learned options with varying the scale factor. 
The results (Figrue~\ref{fig:testwunif}) shows that  the performance of OC3 is better than that of SoftRobust when the scale factor is large (i.e., when the model parameter distibution is highly uncertain). 
\begin{figure*}[t]
\begin{center}
\begin{tabular}{c}
\begin{minipage}{0.33\hsize}
\begin{center}
\includegraphics[clip, width=0.99\hsize]{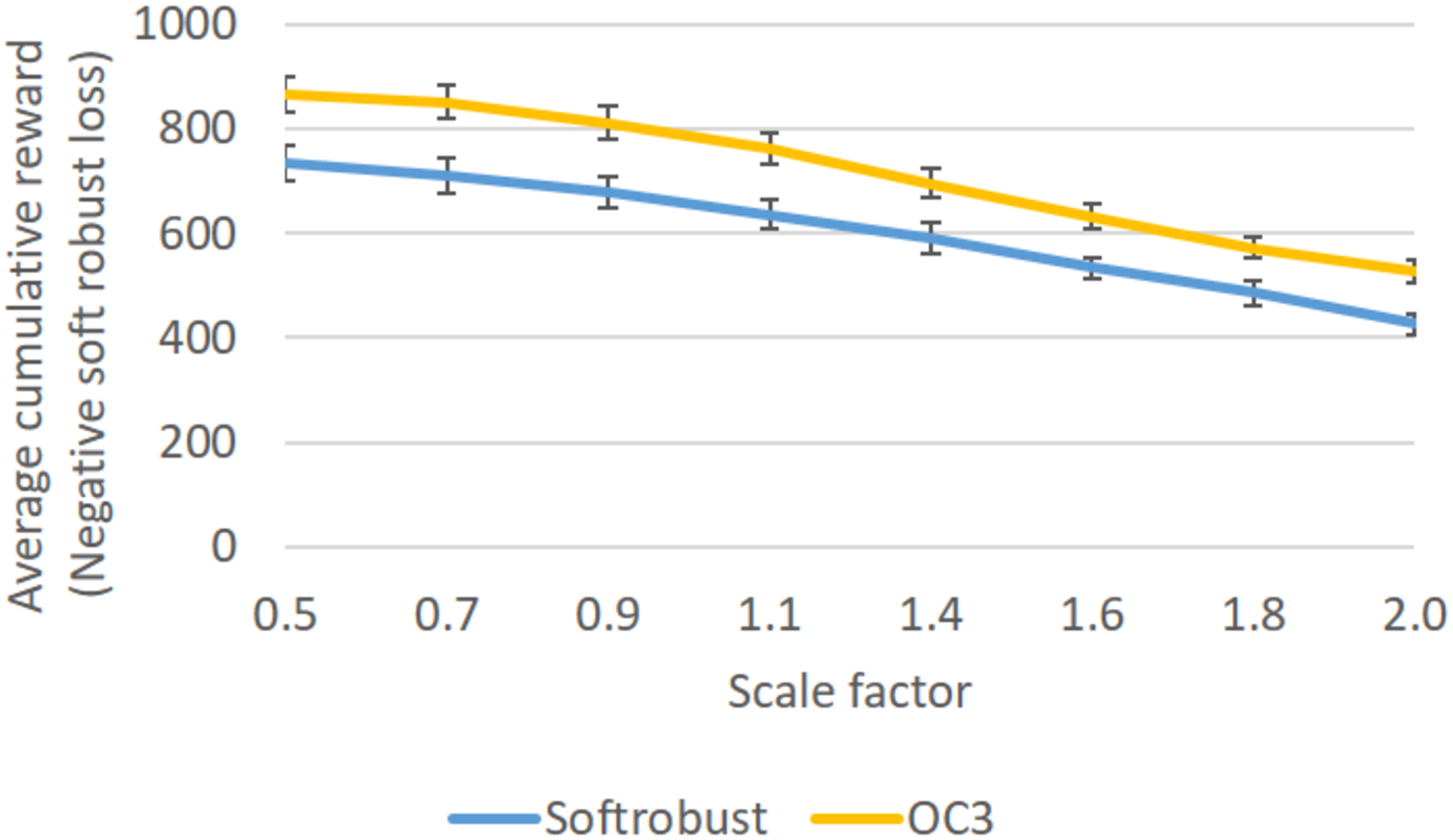}
\end{center}
\subcaption{HalfCheetah-cont}
\end{minipage}
\begin{minipage}{0.33\hsize}
\begin{center}
\includegraphics[clip, width=0.99\hsize]{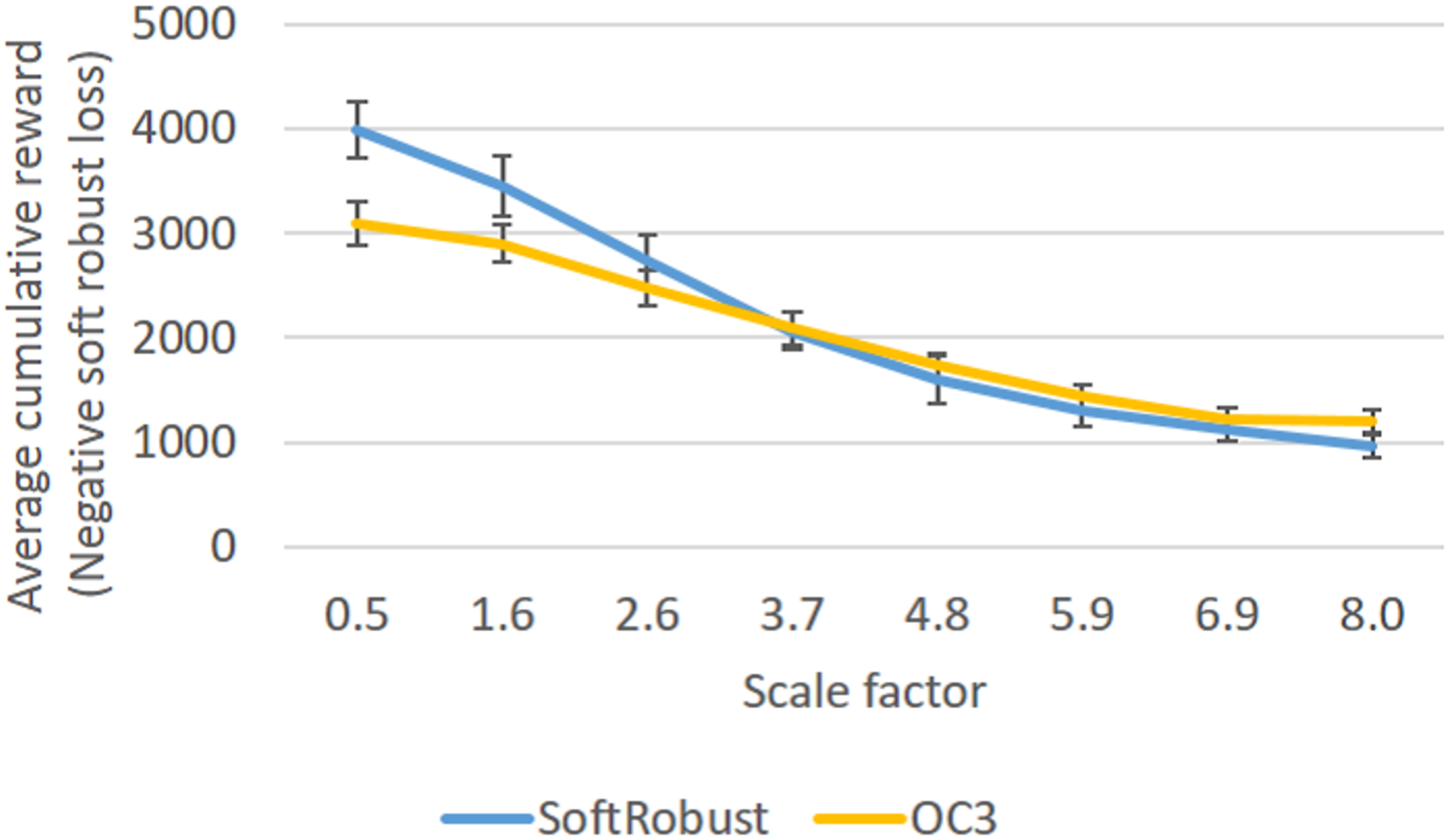}
\end{center}
\subcaption{Walker2D-cont}
\end{minipage}
\begin{minipage}{0.33\hsize}
\begin{center}
\includegraphics[clip, width=0.99\hsize]{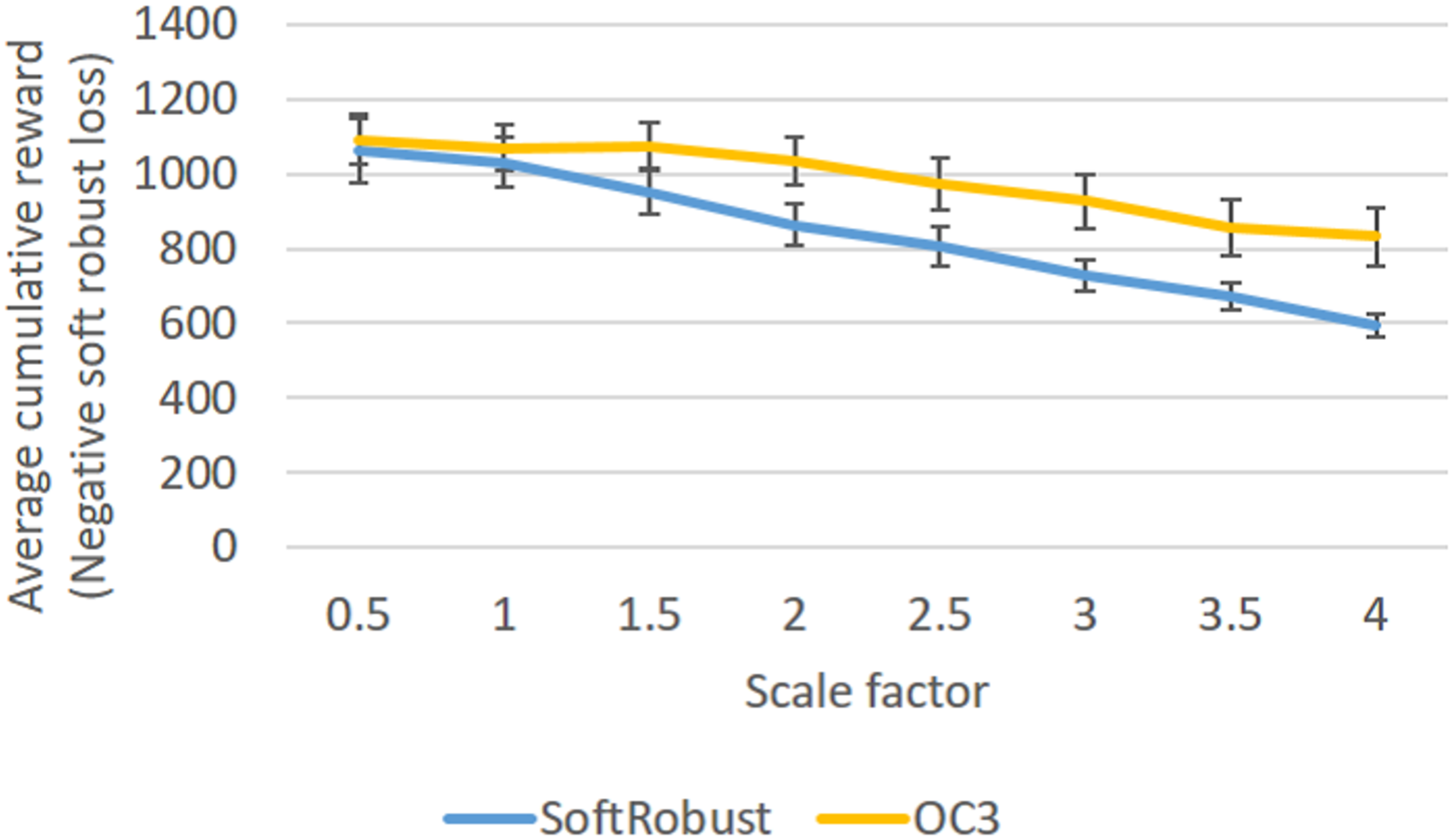}
\end{center}
\subcaption{HopperIceBlock-cont}
\end{minipage}
\end{tabular}
\caption{The performance of SoftRobust and OC3 in environments with uniform model parameter distribution. In each figure, the vertical axis represents the performance (\textbf{the negative of the loss}) of each method and the horizontal axis represents the scale factor for the range of the uniform distribution. Each score is averaged over 36 trials with different initial random seeds, and the 95$\%$ confidence interval is attached to the score. }
\label{fig:testwunif}
\end{center}
\end{figure*}

\end{document}